\definecolor{shadecolor}{gray}{0.90}
\declaretheoremstyle[
headfont=\normalfont\bfseries,
notefont=\mdseries, notebraces={(}{)},
bodyfont=\normalfont,
postheadspace=1pt,
spaceabove=4pt,
mdframed={
  skipabove=8pt,
  skipbelow=8pt,
  hidealllines=true,
  backgroundcolor={shadecolor},
  innerleftmargin=4pt,
  innerrightmargin=4pt}
]{shaded}
\declaretheorem[style=shaded,within=section]{definition}
\declaretheorem[style=shaded,sibling=definition]{lemma}
\declaretheorem[style=shaded,sibling=definition]{remark}
\definecolor{kleinblue}{RGB}{0, 47, 167}
\definecolor{color2}{rgb}{0.89, 0.35, 0.13}
\definecolor{color1}{RGB}{33, 131, 128}
\newcommand{\R}{\mathbb{R}} % Reals
\newcommand{\N}{\mathbb{N}} % Naturals
\newcommand{\cD}{{\cal D}}
\newcommand{\cO}{{\cal O}}
\newcommand{\mD}{{\bf D}}
\definecolor{old}{rgb}{0.82, 0.1, 0.26}
\definecolor{revise}{rgb}{0.53, 0.66, 0.42}
\newcommand{\eqdef}{\coloneqq} 
\newcommand{\iprod}[2]{\langle #1, #2 \rangle}
\newcommand{\dotprod}[1]{\left< #1\right>} % product
\newcommand{\norm}[1]{ \left\| #1 \right\|}      % norm 
\newcommand{\argmin}[1]{\underset{#1}{\argminin} \;}      % argmin
\DeclareMathOperator{\argminin}{argmin}  
\newcommand{\Id}{\mathbf{Id}}
\newcommand{\E}[1]{\mathbb{E}\left[#1\right] } 
\newcommand{\EE}[2]{\mathbb{E}_{#1}\left[#2\right] }
\newcommand{\Momo}{{\texttt{MoMo}}}
\newcommand{\MomoAdam}{{\texttt{MoMo-Adam}}}
\newcommand{\Adam}{{\texttt{Adam}}}
\newcommand{\SGD}{{\texttt{SGD}}}
\newcommand{\SGDM}{{\texttt{SGD-M}}}
\newcommand{\spsmax}{\texttt{SPS}\textsubscript{max}}
\newcommand{\lb}[1]{f_*^{#1}}
\icmltitlerunning{\Momo{}: Momentum Models for Adaptive Learning Rates}
\begin{document}

\twocolumn[
\icmltitle{\Momo{}: Momentum Models for Adaptive Learning Rates}

% It is OKAY to include author information, even for blind
% submissions: the style file will automatically remove it for you
% unless you've provided the [accepted] option to the icml2024
% package.

% List of affiliations: The first argument should be a (short)
% identifier you will use later to specify author affiliations
% Academic affiliations should list Department, University, City, Region, Country
% Industry affiliations should list Company, City, Region, Country

% You can specify symbols, otherwise they are numbered in order.
% Ideally, you should not use this facility. Affiliations will be numbered
% in order of appearance and this is the preferred way.
\icmlsetsymbol{equal}{*}

\begin{icmlauthorlist}
\icmlauthor{Fabian Schaipp}{tum}
\icmlauthor{Ruben Ohana}{fi}
\icmlauthor{Michael Eickenberg}{fi}
\icmlauthor{Aaron Defazio}{meta}
\icmlauthor{Robert M.\ Gower}{fi}
\end{icmlauthorlist}

\icmlaffiliation{tum}{Department of Mathematics, Technical University of Munich, Munich}
\icmlaffiliation{fi}{Flatiron Institute, CCM, New York}
\icmlaffiliation{meta}{Meta AI, Fundamental AI Research (FAIR) team, New York}

\icmlcorrespondingauthor{Fabian Schaipp}{fabian.schaipp@tum.de}
% \icmlcorrespondingauthor{Robert M.\ Gower}{rgower@flatironinstitute.org}

% You may provide any keywords that you
% find helpful for describing your paper; these are used to populate
% the "keywords" metadata in the PDF but will not be shown in the document
\icmlkeywords{Machine Learning, ICML}

\vskip 0.3in
]

% this must go after the closing bracket ] following \twocolumn[ ...

% This command actually creates the footnote in the first column
% listing the affiliations and the copyright notice.
% The command takes one argument, which is text to display at the start of the footnote.
% The \icmlEqualContribution command is standard text for equal contribution.
% Remove it (just {}) if you do not need this facility.

\printAffiliationsAndNotice{}  % leave blank if no need to mention equal contribution

\begin{abstract}
Training a modern machine learning architecture on a new task requires extensive learning-rate tuning, which comes at a high computational cost. Here we develop new Polyak-type adaptive learning rates that can be used on top of any momentum method, and require less tuning to perform well.
We first develop \Momo{}, a \textbf{Mo}mentum \textbf{Mo}del based adaptive learning rate for \SGDM{} (stochastic gradient descent with momentum). \Momo{} uses momentum estimates of the  losses and gradients sampled at each iteration to build a model of the loss function.
Our model makes use of any known lower bound of the loss function by using \emph{truncation}, e.g. most losses are lower-bounded by zero. 
The model is then approximately minimized at each iteration to compute the next step. We show how \Momo{} can be used in combination with any momentum-based method, and showcase this by developing \MomoAdam{}, which is \Adam{} with our new model-based adaptive learning rate.
We show that \Momo{} attains a $\mathcal{O}(1/\sqrt{K})$ convergence rate for convex problems with interpolation, needing knowledge of no problem-specific quantities other than the optimal value.
% The \Momo{} framework can be seen as an extension of the stochastic Polyak step size to momentum methods, focussing on \SGDM{} and \Adam{}.
Additionally, for losses with unknown lower bounds, we develop on-the-fly estimates of a lower bound, that are incorporated in our model. %Through extensive numerical experiments, 
We show that  \Momo{} and \MomoAdam{} improve over \SGDM{} and \Adam{} in terms of robustness to hyperparameter tuning for training image classifiers on \texttt{MNIST}, \texttt{CIFAR}, and \texttt{Imagenet}, for recommender systems on  \texttt{Criteo}, for a transformer model on the translation task \texttt{IWSLT14}, and for a diffusion model.
\end{abstract}

\section{Introduction}
 Training a modern production-grade large neural network can cost over $1$ million dollars in compute. For instance,  the cost for the \emph{Text-to-Text Transfer Transformer} T5-model \citep{raffel2020exploring} is estimated to be more than $1.3$ million dollars for a single run~\citep{sharir2020cost}. What makes training models highly expensive is that multiple runs are needed to tune the hyperparameters, with arguably the most important parameter being the learning rate. 
 Indeed, finding a good learning-rate schedule plays a disproportionately large role in the resulting test error of the model, with one extensive study showing that it is of similar importance as the choice of optimizer~\citep{Schmidt2021}.
 
Here, we develop  adaptive learning rates that can be used together with any momentum-based method. To showcase our method, we apply our learning rates to \SGDM{} (stochastic gradient descent with momentum) and to \Adam{}~\citep{Kingma2015}, which gives the \Momo{} and \MomoAdam{} method, respectively.
We make use of model-based stochastic optimization~\citep{Asi2019,Davis2019,Chadha2021}, and leverage that loss functions are bounded below (typically by zero) to derive our new \Momo{} (\textbf{Mo}mentum \textbf{Mo}del) adaptive learning rate.
An implementation of \Momo{} is available in \href{https://github.com/fabian-sp/MoMo}{\texttt{Pytorch}} and \href{https://github.com/google-deepmind/optax}{\texttt{optax}}.

\subsection{The Model-Based Approach}
Consider the problem
\begin{align} \label{eq:main}
\min_{x \in \R^d} \; f(x), \quad f(x):=  \EE{s \sim \cD}{ f(x,s)} ,
\end{align}
where  $f(x, s)$ is a loss function, $s$ is an input (mini-batch of data), and $x$ are the learnable parameters of a model. We assume throughout that $f(x,s) \geq 0$, which is true for most loss functions\footnote{
We choose zero as a lower bound for simplicity, but any constant lower bound could be handled.}. We also assume that $f(\cdot, s)$ is continuously differentiable for all $s\in \cD$, that there exists a solution $x^*$ to~\eqref{eq:main} and denote the optimal value by $f^*\eqdef f(x^*) \in \R$. 

In our main algorithms \Momo{} and \MomoAdam{} (\cref{alg:momo,alg:momo-adam}), we present adaptive learning rates\footnote{Here the term \emph{adaptivity} refers to a scalar learning rate that changes from one iteration to the next by using easy-to-compute quantities. 
% This is different from the notion of adaptivity used 
% for \Adam{} or \texttt{AdaGrad}~\citep{Duchi2011}, where the learning rate is different for each coordinate. We refer to the latter meaning of adaptivity as \emph{preconditioning}.
}
for \SGDM{} and \Adam{}, respectively. 
To derive  \Momo{} and \MomoAdam{}, we use the model-based viewpoint, which is often motivated by the stochastic proximal point (\texttt{SPP}) \citep{Asi2019,Davis2019} method. At each iteration, \texttt{SPP} samples $s_k \sim \cD,$ then  trades-off minimizing $f(x,s_k)$ with not moving too far from the current iterate $x^k$.
%, that is
Given a learning rate $\alpha_k>0$, this can be written as
\begin{align} \label{eq:prox-stoch}
x^{k+1} =  \argmin{x\in\R^d} f(x, s_k) +\tfrac{1}{2\alpha_k}\norm{x -x^k}^2.
\end{align}
Since this problem needs to be solved at every iteration, it needs to be fast to compute.
However, in
%In 
general~\eqref{eq:prox-stoch} is difficult to solve because  $f(x, s_k)$ can be a highly nonlinear function. 
%Hence model-based
Model-based methods
%propose that we  
replace $f(x,s_k)$ by a simple model $m_k(x)$ of the function~\citep{Asi2019,Davis2019}, and update according to
\begin{equation}\label{eq:proxmk}
 x^{k+1} \; = \;  \argmin{x\in\R^d} m_k(x) +\tfrac{1}{2\alpha_k}\norm{x -x^k}^2.
\end{equation}
\texttt{SGD} can be formulated as a model-based method by choosing the model to be the
linearization of $f(x,s_k)$  around $x^k,$ that is
\begin{equation} \label{eq:SGDmodel} 
    m_k(x) \; = \; f(x^k,s_k) + \dotprod{\nabla f(x^k,s_k), x-x^k }. 
\end{equation} 
Using the above $m_k(x) $ in~\eqref{eq:proxmk} gives the \SGD{} %method 
update
$$x^{k + 1} = x^k - \alpha_k\nabla f(x^k, s_k),$$ 
see \citep{Robbins1951,Asi2019}. 
%update in~\eqref{eq:SGD}.

Our main insight for developing 
the \Momo{} methods is that we should build a model directly for $f(x)$, and not $f(x,s_k)$,  since our objective is to minimize $f(x)$. To this end, we develop a model $m_k(x)$ that is a good approximation of $f(x)$ when $x$ is close to $x^k$, and such that~\eqref{eq:proxmk} 
has a closed-form solution.
We use momentum estimates of past gradients and loss values to build a 
% , which maintains an average of past iterates and samples, which we can then use to
model $f(x)$. 
% Our \Momo{} method then arises by also leveraging the positivity of $f(x).$
Finally, since the loss function is positive, we  impose that our model be positive. %$f(x)$.

\subsection{Background and Contributions}

\paragraph{Momentum and model-based methods.}

The update formula of many stochastic methods such as \SGD{} can be interpreted by taking a proximal step with respect to a model of the objective function~\citep{Asi2019,Davis2019}. 
%Orthogonal to this, 
Independently of this, (heavy-ball) momentum \citep{Polyak1964,Sebbouh2021} is incorporated into many methods in order to boost performance.

\emph{Contributions.} Here we give a new interpretation of momentum, namely that it can be motivated as a model of the objective function
$f(x)$ by averaging sampled loss functions. This allows us to naturally combine momentum with other model-based techniques. 

%\vspace{-0.8em}
\paragraph{Lower bounds and truncated models.} 
One of the main advantages of the model-based viewpoint~\citep{Asi2019,Davis2019} is that it illustrates how to use knowledge of a lower bound of the function via truncation. Methods using this truncated model are often easier to tune~\citep{meng2022a,Schaipp2023}.

\emph{Contributions.} By combining the model-based viewpoint of momentum with a truncated model we arrive at our new \Momo{} method. 
Since we are interested in loss functions, we can use zero as a lower bound estimate in many learning tasks. However, for some tasks such as training transformers, the minimal loss is often non-zero. 
If the non-zero lower bound is known, such as the entropy of language used in scaling laws~\cite{scalinglaws}, we can straightforwardly incorporate it into our model.
For unknown lower bound values 
we also develop new online estimates of a lower bound in \cref{sec:lower}. Our estimates can be applied to any stochastic momentum-based
method%
, and thus may be of independent interest. Our main influence for this technique was D-adaptation~\citep{defazio2023learningratefree} which develops an online estimate of the distance to the solution.
%, and uses this to rescale the learning rate.
%\vspace{-0.8em}
\paragraph{Adaptive methods.}
In practice, tuning the learning rate is intricate and computationally expensive.  \Adam{} ~\citep{Kingma2015} (or \texttt{AdamW}~\citep{Loshchilov2019}) is often easier to tune and now being used routinely to train DNNs across a variety of tasks.  This and the success of \Adam{} have incentivised the development of many new learning-rate techniques, including approaches based on coin-betting ~\citep{coin-betting}, variants of \texttt{AdaGrad}~\citep{Duchi2011,defazio2023learningratefree}, and stochastic line search~\citep{Vaswani2019}. Recent work also combines parameter-free coin betting methods with truncated models~\citep{Chen2022}.

\emph{Contributions.}
Our new adaptive learning rate can be combined with any momentum-based method, and allows for a preconditioner to be used.
% Our new adaptive learning rate can be combined with previously developed approaches. 
For example, \Adam{} is a momentum method that makes use of a preconditioner. 
By using this viewpoint, together with a lower bound, we derive \MomoAdam{}, a variant of \Adam{} that uses our adaptive learning rates.

%\vspace{-0.8em}
\paragraph{Polyak step sizes.} 
For convex, non-smooth optimization, Polyak proposed an adaptive step size using the current objective function value $f(x^k)$ and the optimal value $f^*$ \citep{Polyak1987}.
Recently,  the Polyak step size has been adapted to the stochastic setting~\citep{Berrada2019, Gower2021, Loizou2021, Orvieto2022}. 
For example, for $c,\gamma_b > 0$ \citep{Loizou2021} proposed 
\begin{align*}
    % \label{eqn:sps-max}
    %\tag{\spsmax}
    x^{k+1} = x^k - \min\Big\{\gamma_b, \tfrac{f(x^k, s_k)- \inf_z f(z, s_k)}{c\|\nabla f(x^k, s_k)\|^2}\Big\}\nabla f(x^k, s_k),
\end{align*}
called the \spsmax{} method. %where $c,\gamma_b > 0$.
The stochastic Polyak step size is closely related to stochastic model-based proximal point methods as well as stochastic bundle methods \citep{Asi2019,Paren2022,Schaipp2023}.

\emph{Contributions.} Our proposed method \Momo{} can be seen as an extension of the Polyak step size that also incorporates momentum. This follows from the viewpoint of the Polyak step size~\citep{Berrada2019,Paren2022,Schaipp2023} as a truncated model-based method. In particular \Momo{} with no momentum is equal to \spsmax{}.

%\vspace{-0.8em}
\paragraph{Numerical findings.}

We find that \Momo{} consistently improves the sensitivity with respect to hyperparameter choice as compared to \SGDM{}.
The same is true for \MomoAdam{} compared to \Adam{}. 
Our experiments cover image classification tasks on \texttt{MNIST}, \texttt{CIFAR10}, \texttt{CIFAR100} and \texttt{Imagenet}, a recommender system for the \texttt{Criteo} dataset, an encoder-decoder transformer for the translation task \texttt{IWSLT14}, and a diffusion model.

Furthermore, we find that the adaptive learning rate of \Momo{}(-\Adam{}) for some tasks automatically performs a warm-up at the beginning of training and a decay in later iterations, two techniques often used  to improve training \citep{Sun2020}.

%\vspace{-4pt}
\section{Model-Based Momentum Methods}
Let us recall the \texttt{SGD} model in~\eqref{eq:SGDmodel} which has two issues: first, it approximates a single stochastic function $f(x,s_k)$, as opposed to the full loss $f(x)$.
Second, 
this model can be negative even though our loss function is always positive.  
Here, we develop a model directly for $f(x)$, and not $f(x,s_k),$ %as we detail next.
which also takes into account lower bounds on the function value.
%\vspace{-0.6em}
\subsection{Model-Based Viewpoint of Momentum}
Suppose we have sampled inputs $s_1,\ldots, s_k$ and past iterates $x^1, \ldots, x^k$. We can use these samples to build a  model of  $f(x)$ by averaging past function evaluations as follows
\begin{align}\label{eq:approxaverage}
f(x) = \EE{s\sim \cD}{f(x,s)}\; \approx\; \tfrac{1}{\rho_k}\sum_{j=1}^k \rho_{j,k} f(x,s_j),
\end{align}
where $\rho_{j,k}\geq 0$ and $\rho_k:=\sum_{j=1}^k\rho_{j,k}$. Thus, the $\rho_k^{-1}\rho_{j,k}$ are a discrete probability 
%distribution 
mass function over the previous samples. 
The issue with~\eqref{eq:approxaverage} is that it is expensive to evaluate $f(x,s_j)$ for $j=1,\ldots, k$, which we would need to do at every iteration.
Instead,
we approximate each $f(x,s_j)$ 
by linearizing $f(x,s_j)$ around $x^j$, the point it was last evaluated, that is for $j=1,\ldots,k$\\[-0.4cm]
\begin{align}\label{eq:linjapproac}
f(x,s_j)  \approx   f(x^j,s_j) + \dotprod{\nabla f(x^j,s_j), x-x^j }.
\end{align}
Using~\eqref{eq:approxaverage} and the linear approximations in~\eqref{eq:linjapproac} we can approximate $f(x)$ with the model $m_k(x)$ given by\\[-0.45cm]
\begin{align}\label{eq:approxaverage-linearized}
%f(x)  \approx 
\begin{split}
    &\hspace{-1ex}m_k^{\text{avg}}(x)= \\
    &\hspace{1ex}\tfrac{1}{\rho_k} \sum_{j=1}^k \rho_{j,k} \big(f(x^j,s_j) +  \dotprod{\nabla f(x^j,s_j), x-x^j } \big).
\end{split}
\end{align}
If we plug in the above model $m_k^{\text{avg}}(x)$ into~\eqref{eq:proxmk}, then the resulting update is given by \\[-0.3cm]
\begin{equation} 
    x^{k+1} \; = x^k - \tfrac{\alpha_k}{\rho_k} d_k, \quad 
    %\mbox{where } \; \textstyle 
    d_k  \eqdef \sum_{j=1}^k\rho_{j,k} \nabla f(x^j,s_j).
\end{equation}
By appropriately choosing the $\rho_{j,k}$ (details in Section~\ref{sec:averaging}), the above method is equivalent to \SGDM{}.
This gives a new viewpoint of (heavy-ball) momentum. Next we incorporate a lower bound into this model so that, much like the loss function, it cannot become negative.

\subsection{Deriving \Momo{}}
Since we know the loss is lower-bounded by zero, we will also impose a lower bound on the model~\eqref{eq:approxaverage-linearized}. Though we could use zero, we will use an estimate $\lb{k} \geq 0 $ of the lower bound to allow for cases where $f(x^*)$ may be  far from zero.
Imposing a lower bound of $\lb{k}$  gives the following model %\\[-0.45cm]
\begin{align}\label{eq:posmodel} \textstyle
f(x)  \approx 
% \max\Big\{\tfrac{1}{\rho_k}\sum_{j=1}^k \rho_{j,k} \big( f(x^j,s_j) + \dotprod{\nabla f(x^j,s_j), x-x^j } \big), \lb{k}  \Big\} 
\max\Big\{m_k^{\text{avg}}(x), \lb{k}  \Big\} 
=: m_k(x).
\end{align}

For overparametrized
machine-learning
models the minimum value $f(x^*)$ is often close to zero~\citep{Ma2018,Gower2021}.
%and thus 
Thus,
choosing $\lb{k}=0$ in every iteration will work well (as we verify later in our experiments). For tasks where $\lb{k}=0$ is too loose of a bound, in~\cref{sec:lower} we develop an online estimate for $\lb{k}$ based on available information.
Using the model \eqref{eq:posmodel}, we can now define the proximal update
\begin{align}
\label{eq:modelbasedVI}
    x^{k+1} = \argmin{y \in \R^d} m_k(y) + \tfrac{1}{2\alpha_k}\|y-x^k\|^2.
\end{align}
Because $m_k(y)$ is a simple piece-wise linear function, the update \eqref{eq:modelbasedVI} has a closed form solution, as we show in the following lemma (proof in \cref{sec:proof-update-lemma-momo}).
%%%%%%%%%%%%%%%%%%%%%%%%%%%%%%%%%%%%%
\begin{restatable}{lemma}{lemupdate}[{\bf \Momo{} update}]\label{lem:update}
Let 
\begin{align}\label{eq:barf-d-gamma}
    \begin{split}
    &\hspace{-11ex}d_k  := \sum_{j=1}^k\rho_{j,k} \nabla f(x^j,s_j), \\
    &\hspace{-11ex}\bar{f}_k :=  \sum_{j=1}^k \rho_{j,k} f(x^j, s_j),  \\
    &\hspace{-11ex}\gamma_k := \sum_{j=1}^k \rho_{j,k} \iprod{\nabla f(x^j,s_j)}{x^j}. 
    \end{split}
\end{align}
% Then $h_k = \bar{f}_k+\iprod{d_k}{x^k} - \gamma_k $.
Using model \eqref{eq:posmodel}, the closed form solution to~\eqref{eq:modelbasedVI} is 
\begin{align}\label{eq:momo-update}
        \tau_k &:= \min \Big\{\frac{\alpha_k}{\rho_k}, \frac{\big(\bar{f}_k  +  \dotprod{d_k,x^k} - \gamma_k - \rho_k \lb{k} \big)_+}{\|d_k\|^2} \Big\}, \nonumber\\
        x^{k+1} &= x^k - \tau_k d_k.
\end{align}
\end{restatable}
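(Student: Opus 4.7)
The plan is to solve the proximal subproblem \eqref{eq:modelbasedVI} with $m_k(y)=\max\{m_k^{\text{avg}}(y),\lb{k}\}$ by first reducing it to a scalar optimization along the direction $-d_k$, then piecing together two linear/quadratic regimes.

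First I would rewrite $m_k^{\text{avg}}$ in terms of the summary statistics in \eqref{eq:barf-d-gamma}: a direct expansion of the definition gives
\begin{equation*}
    m_k^{\text{avg}}(y)\;=\;\tfrac{1}{\rho_k}\bigl(\bar f_k+\langle d_k,y\rangle-\gamma_k\bigr),
\end{equation*}
so $m_k^{\text{avg}}$ is affine with gradient $d_k/\rho_k$. The objective in \eqref{eq:modelbasedVI} therefore depends on $y-x^k$ only through $\langle d_k,y-x^k\rangle$ and $\|y-x^k\|^2$. Orthogonal decomposition then shows any minimizer must lie on the ray $y=x^k-\tau d_k$ with $\tau\geq 0$ (any component orthogonal to $d_k$ leaves $m_k^{\text{avg}}$ unchanged but strictly increases the quadratic penalty; a negative $\tau$ strictly increases both terms). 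The case $d_k=0$ is trivial and returns $x^{k+1}=x^k$, matching \eqref{eq:momo-update} with the convention $0/0=0$, so I would isolate that case and henceforth assume $d_k\neq 0$.

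Next I would reduce to the scalar problem $\min_{\tau\geq 0}\phi(\tau)$ where
\begin{equation*}
    \phi(\tau)\;=\;\max\!\left\{A-\tau\tfrac{\|d_k\|^2}{\rho_k},\;\lb{k}\right\}+\tfrac{\tau^2}{2\alpha_k}\|d_k\|^2,\qquad A:=\tfrac{1}{\rho_k}\bigl(\bar f_k+\langle d_k,x^k\rangle-\gamma_k\bigr).
\end{equation*}
Since $\tau\mapsto A-\tau\|d_k\|^2/\rho_k$ is strictly decreasing, there is a unique breakpoint $\tau^{\star}=\rho_k(A-\lb{k})/\|d_k\|^2=(\bar f_k+\langle d_k,x^k\rangle-\gamma_k-\rho_k\lb{k})/\|d_k\|^2$ at which the two inner arguments are equal; for $\tau\leq\tau^{\star}$ the first argument dominates, and for $\tau\geq\tau^{\star}$ the constant $\lb{k}$ dominates. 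On the right branch $\phi$ is a nonnegative quadratic in $\tau$ minimized at $\tau=0$, so its minimum over $[\max(\tau^{\star},0),\infty)$ is at $\max(\tau^{\star},0)$. On the left branch $\phi$ is a parabola with unconstrained minimizer $\alpha_k/\rho_k$, so its minimum over $[0,\tau^{\star}]\cap[0,\infty)$ is at $\min\{\alpha_k/\rho_k,\tau^{\star}_+\}$ (and this value is automatically nonnegative since the branch is nonempty only when $\tau^{\star}\geq 0$).

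Finally I would compare the minima of the two branches. Because $\phi$ is continuous and convex in $\tau$, and because the left-branch minimizer $\min\{\alpha_k/\rho_k,\tau^{\star}_+\}$ is always at or to the left of the right-branch minimizer $\tau^{\star}_+$, monotonicity of $\phi$ on $[\tau^{\star}_+,\infty)$ forces the global minimizer to equal
\begin{equation*}
    \tau_k\;=\;\min\!\Bigl\{\tfrac{\alpha_k}{\rho_k},\;\tfrac{(\bar f_k+\langle d_k,x^k\rangle-\gamma_k-\rho_k\lb{k})_+}{\|d_k\|^2}\Bigr\},
\end{equation*}
which is exactly \eqref{eq:momo-update}. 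The only subtle step is the branch-comparison in the last paragraph; I expect the main bookkeeping obstacle to be handling the sign of $\tau^{\star}$ cleanly so that the $(\,\cdot\,)_+$ in the final formula is justified in all edge cases (notably when $m_k^{\text{avg}}(x^k)\leq\lb{k}$, where $\tau^{\star}\leq 0$ and the update correctly collapses to $x^{k+1}=x^k$).
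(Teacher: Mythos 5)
Your proof is correct, and it takes a genuinely different route from the paper's. The paper first proves a standalone auxiliary result (its Lemma~B.1), which solves $\min_y (c+\langle a,y-y_0\rangle)_+ + \tfrac{1}{2\beta}\|y-y_0\|^2$ by a three-case analysis of the first-order optimality condition $0 = ta + \beta^{-1}(y^+-y_0)$ with $t$ in the subdifferential of $(\cdot)_+$; the proof of Lemma~\ref{lem:update} then reduces \eqref{eq:modelbasedVI} algebraically to that template (rewriting $m_k(y)$ as $(\rho_k^{-1}(h_k+\langle d_k,y-x^k\rangle)-\lb{k})_+ + \lb{k}$, dropping the constant, multiplying by $\rho_k$, and substituting $\beta\leftarrow\alpha_k/\rho_k$, $c\leftarrow h_k-\rho_k\lb{k}$, $a\leftarrow d_k$). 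You instead argue directly that the minimizer lies on the ray $x^k-\tau d_k$, $\tau\ge 0$, by orthogonal decomposition, and then analyze the resulting one-dimensional convex piecewise function $\phi$ across its breakpoint $\tau^\star$. Both are rigorous; the paper's modular route pays off later because Lemma~B.1 (and its weighted/preconditioned sibling, Lemma~B.2) is reused verbatim to prove Lemma~\ref{lem:momo-general-update}, whereas your 1D reduction would need to be redone with a $\mD_k$-norm and the weight-decay shift of center. On the other hand your argument is more self-contained and exposes the geometry (warm-restart ray and breakpoint) more explicitly. One minor point worth tightening: in the branch comparison, stating that on the left branch $\phi$ is a downward-opening-then-upward parabola with vertex at $\alpha_k/\rho_k$, and then invoking convexity of $\phi$ (as a max of an affine function and a constant, plus a convex quadratic) to conclude the global minimizer is the smaller of the two branch minimizers, would make the last step airtight; as written it leans on "monotonicity of $\phi$ on $[\tau^\star_+,\infty)$" without noting that one also needs the left-branch minimizer to actually dominate when $\alpha_k/\rho_k<\tau^\star$. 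You also correctly flag the degenerate $d_k=0$ case, which the paper leaves implicit.
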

%%%%%%%%%%%%%%%%%%%%%%%%%%%%%%%

%

%
Finally, it remains to  select the averaging coefficients $\rho_{j,k}$. Here we will  use an exponentially weighted average that places more weight on recent samples. Aside from working  well in practice on countless real-world examples, exponential averaging can be motivated through the model-based interpretation. Recent iterates will most likely have gradients and loss values that are closer to the current iterate $x^k$. Thus, we place more weight on recent iterates,
i.e.\ $\rho_{j,k}$ is big for $j$ close to $k$. We give two options for exponentially weighted averaging next. % in \cref{sec:averaging}.
\subsection{The Coefficients $\rho_{j,k}$: To bias or not to bias} \label{sec:averaging}
We now choose $\rho_{j,k}\geq 0$ such that we can update $\bar f_k$, $d_k$ and $\gamma_k$ in \eqref{eq:barf-d-gamma} on the fly, storing only two scalars and one vector, resulting in the same iteration complexity as \SGDM{}.
% with the same resulting iteration complexity as \SGDM{}.\\[-0.0cm]

{\bf Exponentially Weighted Average.} Let $\beta\in [0,1)$. Starting with $\rho_{1,1}=1$, and for $k \geq 2$ define
% $\rho_{j,k} = \beta \rho_{j,k-1}$ for $j\leq k-1$ and $\rho_{j,k} = 1-\beta$ for $j=k$.
\begin{align*}
    \rho_{j,k} = \begin{cases}\beta \rho_{j,k-1},\quad  &j\leq k-1, \\ 1-\beta,\quad  &j=k.\end{cases}
\end{align*} 
Then, $\rho_k=\sum_{j=1}^k \rho_{j,k} = 1$ for all $k\in \N$ and the quantities in \eqref{eq:barf-d-gamma} are exponentially weighted averages, see \cref{lem:averaging-coefficients}.
As a consequence, we can update $\bar f_k$, $d_k$ and $\gamma_k$ on the fly as given in lines~\ref{ln:fbar-and-gamma-up}--\ref{ln:dup} in \cref{alg:momo}. Combining update \eqref{eq:momo-update} and   $\rho_k=1$, we obtain \cref{alg:momo}, which we call \Momo{}.

%
%%%%%%%%%%%%%%%%%%%%%%%%%%%%%%%%%%%%%%%%
\begin{algorithm}[H]%[tb]
    \caption{\texttt{MoMo}:\hspace{-0.5mm} Model-based Momentum method. }
    \label{alg:momo}
    \begin{algorithmic}[1]
    \STATE {\bfseries Default settings:} \hspace{-0.5mm} $\alpha_k =1$, $\beta =0.9$, $(\lb{k})_{k\in \N} =0.$
    \STATE {\bfseries Input:} $x^1 \in \R^d$, $\beta \in [0,\;1)$, $\alpha_k>0, (\lb{k})_{k\in \N} \subset\R$ 
   \STATE {\bf Init:} $\bar{f}_0 = f(x^1,s_1), d_0 =\nabla f(x^1,s_1)$,$\gamma_0 =\dotprod{d_0, x^1}$
    %  {\bf Default:} $\beta =0.9$, $\alpha =0.1$ \;
    \FOR{$k=1$ {\bfseries to} $K-1$}
    \STATE $ \displaystyle \bar{f}_k  \; = (1-\beta) f(x^k, s_k) +  \beta\bar{f}_{k-1}$  \label{ln:fbar-and-gamma-up} 
    \STATE $\displaystyle \gamma_k =(1-\beta)\dotprod{\nabla f(x^k,s_k), x^k} + \beta\gamma_{k-1} $
    \STATE $ \displaystyle  d_k \;=   (1-\beta) \nabla f(x^k,s_k)  + \beta d_{k-1} $  \label{ln:dup}
    \STATE $\displaystyle h_k \; = \bar{f}_k + \dotprod{d_k, x^k} - \gamma_k $
    \STATE $ \displaystyle  x^{k+1} = x^k - \min \Big\{\alpha_k, \tfrac{(h_k- \lb{k}   )_+}{\|d_k\|^2} \Big\} d_k$  \label{ln:xup}
    \ENDFOR
    %\RETURN $x^K$
    \end{algorithmic}
\end{algorithm}
%%%%%%%%%%%%%%%%%%%%%%%%%%%%%%%%%%%%%%%%
% \begin{remark}
%    {\small The \emph{adaptive learning rate} $\tau_k$ in \eqref{eq:momo-update}  determines the size of the step and can vary in each iteration even if $\alpha_k$ is constant. The \emph{(user-specified) learning rate} $\alpha_k$ caps the adaptive learning rate.}
% \end{remark}
\begin{remark}
   {The \emph{adaptive learning rate} $\tau_k$ in \eqref{eq:momo-update}  determines the size of the step and can vary in each iteration even if $\alpha_k$ is constant. The \emph{(user-specified) learning rate} $\alpha_k$ caps the adaptive learning rate.}
\end{remark}
%
%\vspace{-0.5cm}
% \begin{remark}[{\small Complexity}]{\small
% \Momo{} has the same order iteration complexity and memory footprint as \SGDM{}. \Momo{}  stores two additional scalars $\gamma_k$ and $\bar{f}_k$, as compared to \SGDM{}, and has two additional $\cO(d)$ inner products lines 6 and 8, and one $\cO(d)$ vector norm on line 9.}
% \end{remark}
\begin{remark}[{ Complexity}]{
\Momo{} has the same order iteration complexity and memory footprint as \SGDM{}. \Momo{}  stores two additional scalars $\gamma_k$ and $\bar{f}_k$, as compared to \SGDM{}, and has two additional $\cO(d)$ inner products lines 6 and 8, and one $\cO(d)$ vector norm on line 9.}
\end{remark}

For $\beta=0$ (no momentum), we have $\gamma_k=\iprod{\nabla f(x^k,s_k)}{x^k}=\iprod{d_k}{x^k}$ and $\bar{f}_k = f(x^k,s_k).$ Consequently $h_k =f(x^k,s_k)$, and in this special case, \Momo{} is equivalent\footnote{This equivalence requires setting $\gamma_b \leftarrow \alpha_k,~c\leftarrow 1$, and assuming $\lb{k} = \inf_z f(z,s_k)$.} to \spsmax{}. %given in~\eqref{eqn:sps-max}.

\cref{fig:momo-model} shows how the \Momo{} model~\eqref{eq:posmodel}  approximates a convex function (left) and a non-convex function (right). %We also show how 
The \Momo{} update $x_{\Momo{}}^{k+1}$ in~\cref{fig:momo-model} attains a smaller loss for both examples, as compared to the \SGDM{} update.\\[-0.68cm]

%%%%%%%%%%%%%%%%%%%%%%%%%%%%%%%%
\paragraph{Averaging with Bias Correction.} \label{par:bias-correction}
Alternatively, we can choose $\rho_{j,k} = (1-\beta)\beta^{k-j}$ for $j=1,\ldots, k$, as it is used in \Adam{}~\citep{Kingma2015}. This gives $\rho_k=1-\beta^k \neq 1$. 
We discuss this choice for \Momo{} in \cref{sec:appendix-averaging-coeffs} and for consistencies sake will use it later for \MomoAdam{}. 
\begin{figure}[t]
\centering
  \begin{subfigure}[b]{0.9\columnwidth}
    \centering
    \includegraphics[width=\linewidth]{./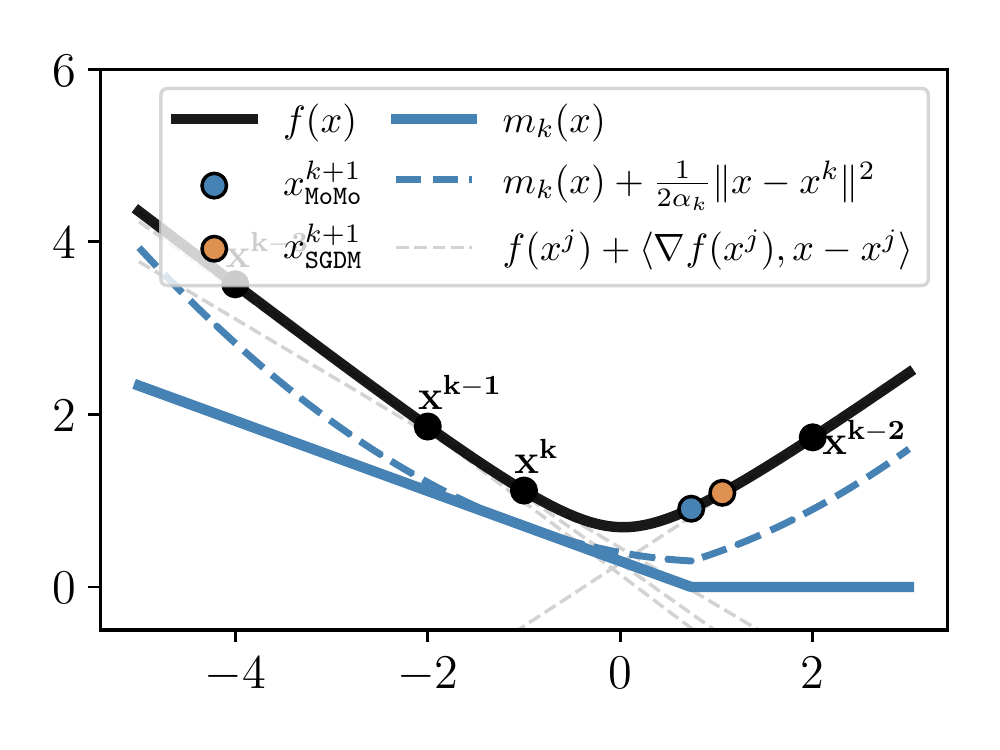}
     \caption{Convex loss}    
  \end{subfigure}
  % \hfill %%
  \begin{subfigure}[b]{0.9\columnwidth}
    \centering
    \includegraphics[width=\linewidth]{./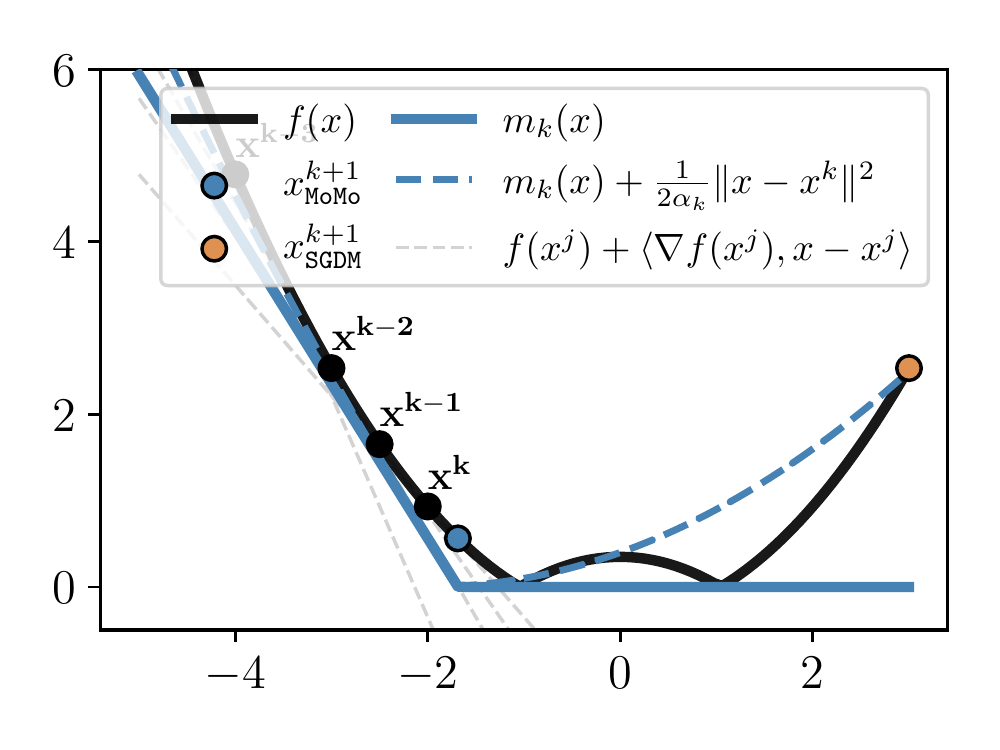}
    \caption{Non-convex loss}
  \end{subfigure}
  \caption{{\small Illustration of the \texttt{MoMo} model (blue curves) for two different loss functions with $\alpha_k =5$. Due to truncation, the new iterate of \Momo{} (blue point) is closer to the minimum than \SGDM{} (orange point).   The right plot shows how \texttt{MoMo} takes a small step when gradients are steep, whereas \SGDM{} takes a large step and ends up far from the solution.}}
  \label{fig:momo-model}
\end{figure}
%
%

%%%%%%%%%%%%%%%%%%%%%%%%%%%%%%%%%%%%%%%%
\section{Weight Decay and Preconditioning}
%%%%%%%%%%%%%%%%%%%%%%%%%%%%%%%%%%%%%%%%
Often weight decay is used in order to improve generalization \citep{Zhang2019}. Weight decay is equivalent to adding a squared $\ell_2$-regularization to the objective function \citep{Krogh1991}, in other words,  instead of~\eqref{eq:main} we solve
   $\min_{x\in\R^d} f(x)  + \tfrac{\color{color2}{\lambda}}{2}\norm{x}^2$,
where $f(x)$ is again the loss function.  
To include weight decay, we build a model $m_k$ for the loss $f$ and keep the $\ell_2$-regularization outside of the model. That is equation~\eqref{eq:modelbasedVI} is modified to
\begin{align}\label{eq:momow-argmin}
    x^{k + 1} = \argmin{\scriptsize y\in\mathbb R^d} m_k(y) + \tfrac{\color{color2}{\lambda}}{2}\|y\|^2 + \tfrac{1}{2\alpha_k}\|y - x^k\|^2 .
\end{align}
Finally, %for the proximal term $\tfrac{1}{2\alpha_k}\|y - x^k\|^2$ 
the Euclidean norm may often not be best suited. Many popular methods such as \texttt{AdaGrad} or \Adam{} can be interpreted as using a preconditioner for the proximal step. Hence, we allow for an arbitrary norm defined by a symmetric, positive definite matrix ${\color{color1}\mD_k} \in \R^{d\times d}$, i.e.\ $\norm{x}_{{\color{color1}\mD_k}}^2 := \iprod{{\color{color1}\mD_k} x}{x}.$
We can now use ${\color{color1}\mD_k}$ to change the metric within our proximal method, by updating %\vspace{-0.15cm}
%$x^{k+1}$ to be the solution of
\begin{align}\label{eq:adaptive-model-update-wd}
    \begin{split}
     &\hspace{-1ex}x^{k+1} =
    %\\ &\hspace{1ex}
    \argmin{y \in \R^d} m_k(y)  + \tfrac{\color{color2}{\lambda}}{2}\|y\|_{{\color{color1}\mD_k}}^2 + \tfrac{1}{2\alpha_k}\|y-x^k\|_{{\color{color1}\mD_k}}^2.
     \end{split}
\end{align}
This update~\eqref{eq:adaptive-model-update-wd} enjoys the following closed form solution  (proof in \cref{sec:proof-update-lemma-general-momo}).
\vspace{2ex}
\begin{restatable}{lemma}{lemmomogeneralupdate}
\label{lem:momo-general-update}
Using model \eqref{eq:posmodel}, the closed form solution to~\eqref{eq:adaptive-model-update-wd}
is given by 
    \begin{align}
    &\hspace{-2.0ex}\tau_k =\min \Big\{\tfrac{\alpha_k}{\rho_k}, \tfrac{\big((1+\alpha_k {\color{color2}{\lambda}})(\bar f_k-\rho_k \lb{k} - \gamma_k) + \iprod{d_k}{x^k} \big)_+}{\|d_k\|_{{\color{color1}\mD_k^{-1}}}^2} \Big\}, \label{eq:tauk-general}\\
    &x^{k+1} = \tfrac{1}{1+\alpha_k \color{color2}{\lambda}} \Big[x^k - \tau_k {\color{color1}\mD_k^{-1}}d_k \Big].      
    \label{eq:momo-general-upd}
    \end{align}
\end{restatable}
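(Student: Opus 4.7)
My plan is to solve the strongly convex subproblem \eqref{eq:adaptive-model-update-wd} via first-order optimality, exploiting that $m_k(y) = \max\{\rho_k^{-1}(\bar f_k + \langle d_k, y\rangle - \gamma_k),\, \lb{k}\}$ is a pointwise maximum of two affine functions and hence convex, with subdifferential $\partial m_k(y) = \{t\, d_k/\rho_k : t \in [0,1]\}$ at the kink where the two branches agree, and a singleton ($\{d_k/\rho_k\}$ or $\{0\}$) on the interior of each branch. Since $\mD_k \succ 0$ and $\alpha_k > 0$, the objective in \eqref{eq:adaptive-model-update-wd} is strongly convex, so a unique minimizer $x^{k+1}$ exists and is characterized by
\begin{equation*}
0 \in \partial m_k(x^{k+1}) + \lambda\mD_k x^{k+1} + \tfrac{1}{\alpha_k}\mD_k(x^{k+1} - x^k).
\end{equation*}
Choosing a subgradient of the form $t\, d_k/\rho_k$ with $t \in [0,1]$ and solving the resulting linear equation for $x^{k+1}$ yields
\begin{equation*}
x^{k+1} = \tfrac{1}{1+\alpha_k\lambda}\bigl[x^k - \tfrac{\alpha_k t}{\rho_k}\mD_k^{-1}d_k\bigr],
\end{equation*}
which is already of the form \eqref{eq:momo-general-upd} with $\tau_k := \alpha_k t/\rho_k \in [0, \alpha_k/\rho_k]$. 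What remains is to identify $t$, which I would do by a three-way case split on which branch of the max is active at $x^{k+1}$.

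If the affine branch is strictly active ($t = 1$), substituting the closed form for $x^{k+1}$ into the inequality $m_k^{\text{avg}}(x^{k+1}) \geq \lb{k}$ and multiplying through by the positive factor $1 + \alpha_k\lambda$ yields $\frac{\alpha_k}{\rho_k}\|d_k\|_{\mD_k^{-1}}^2 \leq (1+\alpha_k\lambda)(\bar f_k - \rho_k\lb{k} - \gamma_k) + \langle d_k, x^k\rangle$, which is precisely the regime where the first argument of the min in \eqref{eq:tauk-general} is selected. If the constant branch is strictly active ($t = 0$), then $x^{k+1} = x^k/(1+\alpha_k\lambda)$ and the condition $m_k^{\text{avg}}(x^{k+1}) \leq \lb{k}$ rearranges to $(1+\alpha_k\lambda)(\bar f_k - \rho_k\lb{k} - \gamma_k) + \langle d_k, x^k\rangle \leq 0$, so the positive part in the numerator of \eqref{eq:tauk-general} vanishes and $\tau_k = 0$ as required. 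In the intermediate regime $t \in (0,1)$, the kink condition $m_k^{\text{avg}}(x^{k+1}) = \lb{k}$ is linear in $t$ and solving it reproduces the second argument of the min.

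The main obstacle is purely bookkeeping: checking that the cap $\alpha_k/\rho_k$ and the positive part $(\cdot)_+$ in \eqref{eq:tauk-general} together collapse the three cases into a single compact formula, and that the correct branch is selected in every parameter regime. As a consistency check, setting $\lambda = 0$ and $\mD_k = \mI$ should recover the unweighted, isotropic expression in \cref{lem:update}. Beyond this, no analytic difficulty is expected, since every step reduces to elementary algebra on a strongly convex quadratic with a piecewise-linear perturbation.
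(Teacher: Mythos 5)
Your proposal is correct, and its engine --- first-order optimality for a strongly convex objective, with a three-way case split on the subdifferential of the positive part/max --- is exactly the engine the paper uses in its auxiliary \cref{lem:euclidean-truncated-update}. The organizational difference is that the paper first reduces \eqref{eq:adaptive-model-update-wd} to that Euclidean, unregularized canonical problem by completing the square in the weight-decay term and changing variables $z=\mD_k^{1/2}y$ (\cref{lem:max-of-lin-update-norm-we}), whereas you absorb the preconditioner and the $\lambda$-term directly into the stationarity inclusion $0\in\partial m_k(x^{k+1})+\lambda\mD_k x^{k+1}+\alpha_k^{-1}\mD_k(x^{k+1}-x^k)$ and solve it in one shot. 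Your route is more direct; the paper's modular route buys a reusable Euclidean lemma that also serves \cref{lem:update}. Your case conditions check out: multiplying $m_k^{\text{avg}}(x^{k+1})\gtrless \lb{k}$ through by $1+\alpha_k\lambda$ yields precisely the comparison between $\tfrac{\alpha_k}{\rho_k}\|d_k\|_{\mD_k^{-1}}^2$ and the numerator in \eqref{eq:tauk-general}, and the kink equation is linear in $t$ as you say. The only implicit assumption, shared with the paper's auxiliary lemmas, is $d_k\neq 0$.
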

\cref{lem:momo-general-update} shows how to incorporate weight decay in \Momo{}: we replace line \ref{ln:xup} in \cref{alg:momo} by \eqref{eq:tauk-general}-\eqref{eq:momo-general-upd} with ${\color{color1}\mD_k}=\mathbf{Id}$ and $\rho_k=1$. If $\beta=0$ (no momentum) then \Momo{} with weight decay recovers \texttt{ProxSPS}, the proximal version of the stochastic Polyak step size~\citep{Schaipp2023}.\\[-0.65cm]

\paragraph{Deriving \MomoAdam{}.}
Using \cref{lem:momo-general-update} we can  obtain an \Adam{}-version of \Momo{} by defining ${\color{color1}\mD_k}$ as the diagonal preconditioner of \Adam{}. 
Let $\mathbf{1}_d$ be the $d$-dimensional vector of ones, $\mathrm{Diag}(v)$  a diagonal matrix with diagonal entries $v\in\R^d$, and   $\odot$ and $\sqrt{v}$ the elementwise multiplication and square-root operations.
Denoting $g_k = \nabla f(x^k,s_k)$, for \Adam{} we choose
\begin{align*}
    v_k &= (1-\beta_2)v_{k-1} + \beta_2 (g_k \odot g_k), \\
    {\color{color1}\mD_k} &= \mathrm{Diag}(\epsilon \mathbf{1}_d + \sqrt{v_k/(1-\beta_2)^k}),
\end{align*}
where $\beta_2 \in [0,1),~\epsilon > 0$.
Using this preconditioner with \cref{lem:momo-general-update} gives \cref{alg:momo-adam}, called \MomoAdam{}. Note that here we choose $\rho_{j,k} = (1-\beta)\beta^{k-j}$ (cf.\  \cref{sec:averaging}) which gives the standard averaging scheme of \Adam{}.
We focus on \Momo{} versions of \SGDM{} and \Adam{} because these are the two most widely used methods. However, from \cref{lem:momo-general-update} we could easily obtain a \Momo{}-version any other preconditioned momentum method (e.g.\ \texttt{Adabelief} \cite{Zhuang2020}).
%%%%%%%%%%%%%%%%%%%%%%%%%%%%%%%%%%%%%%%%%%%%%%%%%%%%%%%%%%%%%%%%
%%% OPTION 2: Full width Momo Adam and fstar side by side
%%%%%%%%%%%%%%%%%%%%%%%%%%%%%%%%%%%%%%%%%%%%%%%%%%%%%%%%%%%%%%%%
\begin{algorithm}[t]%[tb]
    \caption{\MomoAdam{}: Adaptive learning rates for \Adam{} }
    \label{alg:momo-adam}
    \begin{algorithmic}[1]
    \STATE {\bfseries Default settings:} $\alpha_k=10^{-2}$, $(\beta_1, \beta_2) = (0.9,0.999)$, $\epsilon=10^{-8}$
    
    \STATE {\bfseries Input:} $x^1 \in \R^d$, $\beta_1,\beta_2 \in [0,\;1)$, $\epsilon > 0$,  $\alpha_k>0$, ${\color{color2}{\lambda}} \geq 0$, and $(\lb{k})_{k\in \N} \subset\R$.
    \STATE {\bfseries Initialize:}$\bar{f}_0 = 0, d_0 =0$, $\gamma_0 =0 $, and $v_0 = 0$.
    
    \FOR{$k=1$ {\bfseries to} $K-1$}
    \STATE $ \displaystyle g_k  \; = \nabla f(x^k, s_k); \quad d_k \;=   (1-\beta_1) g_k  + \beta_1 d_{k-1} $
    
    \STATE $ \displaystyle  v_k \;= \beta_2 v_{k-1} + (1-\beta_2) (g_k \odot g_k)$  
    
     \STATE $  {\color{color1}\mD_k}\; = \mathrm{Diag}\big(\epsilon \mathbf{1}_d + \sqrt{\left. v_k \right/ (1-\beta_2^{k})}\big)  $  

    \STATE $ \displaystyle \bar{f}_k  \; = (1-\beta_1) f(x^k, s_k) +  \beta_1\bar{f}_{k-1}$
      
    \STATE $ \displaystyle\gamma_k =(1-\beta_1)\dotprod{g_k, x^k} + \beta_1\gamma_{k-1}  $ 
     
    \STATE $\displaystyle
    h_k = \tfrac{ \big((1+{\color{color2}\lambda} \alpha_k)(\bar{f}_k -\gamma_k - (1-\beta_1^{k}) \lb{k}) + \dotprod{d_k, x^k} \big)_+}{ \|d_k\|_{{\color{color1}\mD_k^{-1}}}^2} 
    $
     
    \STATE $\displaystyle
    \tau_k \;=  \min \Big\{(1-\beta_1^{k})^{-1}\alpha_k,   h_k \Big\}
    $
    
    \STATE $   x^{k+1} = \tfrac{1}{1+\alpha_k{\color{color2}{\lambda}}}\left[x^k - \tau_k {\color{color1}\mD_k^{-1}} d_k\right] \label{eq:momo-adam-xup}$

    \ENDFOR
    \end{algorithmic}
\end{algorithm}

%\vspace{-10pt}
\section{Estimating a Lower Bound} \label{sec:lower}
%\vspace{-10pt}
%
So far, we have assumed that the lower-bound estimates $(\lb{k})$ are given with $\lb{k} =0$ being the default. However, this might not be a tight estimate of $f^*$ (e.g.\ when training transformers). In such situations, we derive an online estimate of the lower bound. 
In particular, for convex functions we will derive a lower bound for an unbiased estimate of $f(x^*)$ given by\\[-0.4cm]
\begin{align}\label{eq:fstar-real}
    \bar{f}^k_* := & \tfrac{1}{\rho_k }\mbox{$\sum_{j=1}^k $}\rho_{j,k} f(x^*,s_j). %\quad \mbox{where} \quad \E{\bar{f}^k_*} = f(x^*).
\end{align} 
Though $\bar{f}^k_*$ is not equal to  $f(x^*)$, it is
an unbiased estimate since $\E{f(x^*,s_j)} = f(x^*)$ and hence $\E{\bar{f}^k_*} = f(x^*)$. It is also a reasonable choice since
we motivated our method using the analogous approximation of 
$f(x)$ in~\eqref{eq:approxaverage}. 
The following lemma derives an estimate $f_*^k \geq 0$ for $ \bar{f}^k_*$ given in~\eqref{eq:fstar-real} by using readily available information for any momentum-based method, such as \cref{alg:momo-adam}.
\begin{restatable}{lemma}{lemfstar}\label{lem:fstar}
    Let $f(x,s)$ be convex in $x$ for all $s\in \cal D$. 
    Consider the iterates $x^{k+1} = x^k - \tau_k \mD_k^{-1}d_k$ for $\tau_k>0$.
    %~\eqref{eq:momo-general-upd} with $\lambda =0$.  
    Let \vspace{-0.5cm}
    \begin{align*}
        \eta_k &:= \prod_{j=2}^k\lambda_{\min} \big(\mD_{j}^{-1}  \mD_{j-1} \big) \\
        h_k &:=  \bar f_k + \iprod{d_k}{x^k} -\gamma_k.
    \end{align*} 
    % $\eta_k := \prod_{j=2}^k\lambda_{\min} \big(\mD_{j}^{-1}  \mD_{j-1} \big),$ and $h_k :=  \bar f_k + \iprod{d_k}{x^k} -\gamma_k$.
    It follows that $\bar{f}_*^{k} \geq  \lb{k+1}$ where \vspace{-0.4cm} 
\begin{align*} 
\hspace{-0.4cm}
   \lb{k+1}  \eqdef   
      \tfrac{1}{2 \eta_{k}\tau_k  \rho_k} \biggl(\sum_{j=1}^k 2\eta_{j}\tau_j \big(h_j  -\tfrac{1}{2} \tau_j\norm{d_j}_{\mD_j^{-1}}^2\big) \\
      -D_1^2 -  2 \sum_{j=1}^{k-1}\eta_{j}\tau_j  \rho_j  \bar{f}_*^j\biggr)
\end{align*}
where $D_1 := \norm{x^1 -x^*}_{\mD_1}.$ Bootstrapping by using $f^k_*\approx \bar{f}_*^{k-1}$ we have for $k \geq 2$ that \vspace{-5pt}
\begin{align} \label{eq:fstarconvexD-recur}
    f_*^{k+1} &\approx \;  \tfrac{1}{\rho_k} \left(h_k-\tfrac{1}{2} \tau_{k}\norm{d_{k}}_{\mD_{k}^{-1}}^2 \right).
\end{align}
\end{restatable}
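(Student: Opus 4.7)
The approach combines a standard distance-to-optimum expansion with convexity of the per-sample losses, followed by a telescoping argument that tracks the changing preconditioner through the weights $\eta_k$.

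First I would write out the one-step identity from the update $x^{k+1}=x^k-\tau_k\mD_k^{-1}d_k$:
\[
\|x^{k+1}-x^*\|_{\mD_k}^2=\|x^k-x^*\|_{\mD_k}^2-2\tau_k\iprod{d_k}{x^k-x^*}+\tau_k^2\|d_k\|_{\mD_k^{-1}}^2.
\]
To lower-bound $\iprod{d_k}{x^k-x^*}$ I would insert the intermediate points $x^j$ via $x^k-x^*=(x^j-x^*)+(x^k-x^j)$ and expand $d_k=\sum_j\rho_{j,k}\nabla f(x^j,s_j)$. Convexity of $f(\cdot,s_j)$ applied to each term $\iprod{\nabla f(x^j,s_j)}{x^j-x^*}\geq f(x^j,s_j)-f(x^*,s_j)$, together with the definitions of $\bar f_k$, $\gamma_k$ and $\bar f_*^k$, yields the key inequality $\iprod{d_k}{x^k-x^*}\geq h_k-\rho_k\bar f_*^k$.

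Next I would translate the norm on the left from $\mD_k$ to $\mD_{k+1}$. Since $\mD_{k+1}^{-1}\mD_k$ is similar to the symmetric positive-definite matrix $\mD_{k+1}^{-1/2}\mD_k\mD_{k+1}^{-1/2}$, a Rayleigh-quotient argument gives $\|y\|_{\mD_k}^2\geq\lambda_{\min}(\mD_{k+1}^{-1}\mD_k)\,\|y\|_{\mD_{k+1}}^2$ for every $y\in\R^d$. Applying this to $y=x^{k+1}-x^*$, multiplying the one-step bound by $\eta_k$, and using $\eta_k\lambda_{\min}(\mD_{k+1}^{-1}\mD_k)=\eta_{k+1}$ yields the telescoping form
\[
\eta_{k+1}\|x^{k+1}-x^*\|_{\mD_{k+1}}^2\leq\eta_k\|x^k-x^*\|_{\mD_k}^2-2\eta_k\tau_k\bigl(h_k-\rho_k\bar f_*^k\bigr)+\eta_k\tau_k^2\|d_k\|_{\mD_k^{-1}}^2.
\]
Summing over $j=1,\ldots,k$, dropping the nonnegative left-over term at index $k+1$, and using $\eta_1=1$, $\|x^1-x^*\|_{\mD_1}=D_1$, I would isolate the index-$k$ summand $2\eta_k\tau_k\rho_k\bar f_*^k$ on one side and move all earlier $\bar f_*^j$ to the other. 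This produces exactly the stated lower bound $\bar f_*^k\geq\lb{k+1}$.

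For the bootstrapped recursion \eqref{eq:fstarconvexD-recur}, I would substitute $\bar f_*^j$ by $f_*^{j+1}$ throughout. Setting $S_k:=\sum_{j=1}^k 2\eta_j\tau_j\bigl(h_j-\tfrac12\tau_j\|d_j\|_{\mD_j^{-1}}^2\bigr)$ and $U_k:=2\sum_{j=1}^k\eta_j\tau_j\rho_j f_*^{j+1}$, the defining relation reads $U_k-U_{k-1}=S_k-D_1^2-U_{k-1}$, which collapses to $U_k=S_k-D_1^2$. Subtracting consecutive instances gives $2\eta_k\tau_k\rho_k f_*^{k+1}=S_k-S_{k-1}=2\eta_k\tau_k\bigl(h_k-\tfrac12\tau_k\|d_k\|_{\mD_k^{-1}}^2\bigr)$ for $k\geq 2$, which is precisely \eqref{eq:fstarconvexD-recur}.

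The main obstacle I expect is the metric-change step: one must use the correct direction of the Rayleigh-quotient inequality so that the accumulated weights form exactly $\eta_k=\prod_{j=2}^k\lambda_{\min}(\mD_j^{-1}\mD_{j-1})$ and the telescoping in $\eta_k\|x^k-x^*\|_{\mD_k}^2$ actually succeeds. The convexity step is also slightly delicate because $d_k$ aggregates gradients evaluated at different past iterates, so one must introduce $x^j$ as an intermediate anchor point before invoking convexity of $f(\cdot,s_j)$; otherwise the terms $\gamma_k$ and $\bar f_k$ do not assemble into $h_k$.
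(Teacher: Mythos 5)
Your proposal is correct and follows essentially the same route as the paper's proof: the one-step expansion of $\|x^{k+1}-x^*\|_{\mD_k}^2$, the convexity bound $\iprod{d_k}{x^k-x^*}\geq h_k-\rho_k\bar f_*^k$ obtained by anchoring each gradient at $x^j$, the Rayleigh-quotient metric change accumulated into $\eta_k$, and the weighted telescoping sum. Your derivation of the bootstrapped recurrence via the partial sums $S_k$ and $U_k$ is a slightly cleaner packaging of the same differencing argument the paper carries out explicitly, and arrives at the identical formula \eqref{eq:fstarconvexD-recur}.
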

\vspace{-5pt}
To simplify the discussion, consider the case without a preconditioner, 
i.e.
$\mD_k = \mathbf{Id}$, thus  $\eta_k =1$.
Note that $f_*^{k+1}$  depends on the initial distance to the solution $D_1$, which we do not know. 
Fortunately, $D_1$ does not appear in the boostrapped update~\eqref{eq:fstarconvexD-recur}, because it only appears in $f^1_*.$ We can circumvent this initial dependency by setting $f^1_* =0.$

We can now introduce \Momo{} or \MomoAdam{} with $\lb{k}$ estimated as in \eqref{eq:fstarconvexD-recur}, which we call  \Momo{}$^*$ and \MomoAdam{}$^*$.
We need one more precautionary measure: we want to avoid that the step size $\tau_k$ in~\eqref{eq:tauk-general} becomes zero, 
which occurs if 
\begin{align}\label{eq:reset-condition}
\begin{split}
 &(1+\alpha_k\lambda)\rho_k \lb{k} \geq \\
 &\hspace{4ex}(1+\alpha_k\lambda)(\bar{f}_k  - \gamma_k) + \iprod{d_k}{x^k} =: h_k^{\lambda}.
 \end{split}
\end{align}
Hence, in each iteration of \Momo{}$^*$ or \MomoAdam{}$^*$, we call the \texttt{ResetStar} routine in \cref{alg:fstar-reset} \emph{before} the update of $x^{k+1}$ that checks if this upper bound has been crossed, and if so, resets $f_*^k$ to be sufficiently small. After updating $x^{k+1}$, we update $\lb{k+1}$ with \texttt{EstimateStar} routine in \cref{alg:fstar-update}, according to \cref{lem:fstar}. 
The full algorithm of \Momo{}$^*$ is given in \cref{alg:momo-star} in the Appendix. We give an example of how the values of $\lb{k}$ converge to $f^*$ in \cref{sec:appendix-simple-fstar-example}.
%
%\begin{minipage}{.49\linewidth}
    \begin{algorithm}
    \caption{\texttt{ResetStar}}
    \label{alg:fstar-reset}
    \begin{algorithmic}
    \STATE {\bfseries Input:}$\lb{k},~\alpha_k,\lambda,\rho_k,h_k^{\lambda} $
    \IF{\eqref{eq:reset-condition}}
    \STATE $\lb{k}=\max\big\{\tfrac{1}{2}[(1+\alpha_k\lambda)\rho_k]^{-1}h_k^{\lambda}, \lb{1} \big\}$
    \ENDIF
    \STATE {\bfseries Return} $\lb{k}$
    %\vspace{1ex}
    \end{algorithmic}
    \end{algorithm}
%\end{minipage}

%\begin{minipage}{.49\linewidth}
    \begin{algorithm}
    \caption{\texttt{EstimateStar}}
    \label{alg:fstar-update}
    \begin{algorithmic}
    \STATE {\bfseries Input:} $\bar f_k,x^k,\gamma_k,\tau_k,d_k,\mD_k,\rho_k$
    \STATE $ \displaystyle h_k = \bar f_k + \iprod{d_k}{x_k} - \gamma_k$    
    \STATE $ \displaystyle \lb{k+1} = \max\big\{\rho_k^{-1}(h_k - \tfrac12 \tau_k \|d_k\|^2_{\mD_k^{-1}}, \lb{1}\big\} $
    \STATE {\bfseries Return} $\lb{k+1}$
    %\vspace{0.7ex}
    \end{algorithmic}
    \end{algorithm}
%\end{minipage}
%%%%%%%%%%%%%%%%%%%%%%%%%%%%%%%%%%%%%%%%%%%%%%%%%%%%%%%%%%%%%%%%%%%%%%%%%%%%%%%
\section{Convergence Analysis}\label{sec:conv-analysis}
Here we will show that \Momo{} attains a $\mathcal{O}(1/\sqrt{K})$ rate for convex problems with interpolation. 
First, if  $\lb{k}=\bar{f}^k_*$ where $\bar{f}^k_*$ is~\eqref{eq:fstar-real}, the next lemma shows that 
for any preconditioner and convex loss, the iterates of
\Momo{} do not increase the distance to a given optimal point in each step.
\begin{restatable}{lemma}{lemdescent}\label{lem:descent}
Let $f(\cdot, s)$ be convex for every $s$ and let $x^* \in \arg \min_{x\in \R^d} f(x)$. For the iterates of the general \Momo{} update (cf.\ \cref{lem:momo-general-update}) with $\lambda=0$ and $\lb{k} = \bar{f}^k_*$, it holds \\[-0.5cm]
\begin{align}\label{eqn:momo-descent}
    \norm{x^{k+1} -x^*}_{\mD_{k}}^2  \leq \norm{x^k -x^*}_{\mD_k}^2 - \tau_k(h_k-\rho_k\bar{f}^k_*)_+.
\end{align}
\end{restatable}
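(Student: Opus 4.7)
The plan is to start with the standard expansion of $\norm{x^{k+1} - x^*}_{\mD_k}^2$ using the \Momo{} update $x^{k+1} = x^k - \tau_k \mD_k^{-1} d_k$ (with $\lambda=0$, the update in \cref{lem:momo-general-update} simplifies to this). Using $\iprod{\mD_k^{-1}d_k,\, x^k-x^*}_{\mD_k} = \iprod{d_k,\, x^k-x^*}$ and $\norm{\mD_k^{-1}d_k}_{\mD_k}^2 = \norm{d_k}_{\mD_k^{-1}}^2$, I get
\[
\norm{x^{k+1}-x^*}_{\mD_k}^2 = \norm{x^k-x^*}_{\mD_k}^2 - 2\tau_k \iprod{d_k}{x^k-x^*} + \tau_k^2 \norm{d_k}_{\mD_k^{-1}}^2.
\]

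Next, I would lower-bound the inner product $\iprod{d_k}{x^k-x^*}$ in terms of $h_k$ and $\bar{f}^k_*$. Writing $d_k = \sum_{j=1}^k \rho_{j,k}\nabla f(x^j,s_j)$, I split
\[
\iprod{d_k}{x^k-x^*} = \sum_{j=1}^k \rho_{j,k}\iprod{\nabla f(x^j,s_j)}{x^j - x^*} + \sum_{j=1}^k \rho_{j,k}\iprod{\nabla f(x^j,s_j)}{x^k - x^j}.
\]
Convexity of $f(\cdot,s_j)$ gives $\iprod{\nabla f(x^j,s_j)}{x^j-x^*} \geq f(x^j,s_j)-f(x^*,s_j)$, so the first sum is $\geq \bar f_k - \rho_k\bar{f}^k_*$. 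The second sum telescopes into $\iprod{d_k}{x^k} - \gamma_k$ by definition of $\gamma_k$. Combining, I obtain the key inequality $\iprod{d_k}{x^k-x^*} \geq h_k - \rho_k\bar{f}^k_*$.

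The main obstacle, and the point where I need to be careful, is handling the truncation. The definition of $\tau_k$ under $\lambda=0$ and $\lb{k}=\bar{f}^k_*$ gives $\tau_k \leq (h_k-\rho_k\bar{f}^k_*)_+/\norm{d_k}_{\mD_k^{-1}}^2$, hence $\tau_k \norm{d_k}_{\mD_k^{-1}}^2 \leq (h_k-\rho_k\bar{f}^k_*)_+$. Multiplying by $\tau_k$ bounds the quadratic term by $\tau_k(h_k-\rho_k\bar{f}^k_*)_+$. I then split into two cases: if $h_k - \rho_k\bar{f}^k_* \geq 0$, the positive part is the quantity itself and the cross term $-2\tau_k(h_k-\rho_k\bar{f}^k_*)$ combines with the quadratic bound to give $-\tau_k(h_k-\rho_k\bar{f}^k_*)_+$; if $h_k - \rho_k\bar{f}^k_* < 0$, then $\tau_k = 0$ by its defining minimum (the positive part vanishes), so the inequality holds with equality (both the step and the bound are zero). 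In either case the desired inequality \eqref{eqn:momo-descent} follows, completing the proof.
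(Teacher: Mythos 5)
Your proposal is correct and follows essentially the same route as the paper: the same expansion of $\norm{x^{k+1}-x^*}_{\mD_k}^2$, the same convexity-based bound $\iprod{d_k}{x^k-x^*}\geq h_k-\rho_k\bar f^k_*$ (this is the paper's Lemma \ref{lem:minDstar}), and the same use of $\tau_k\norm{d_k}_{\mD_k^{-1}}^2\leq (h_k-\rho_k\bar f^k_*)_+$ to absorb the quadratic term. The only cosmetic difference is that you split cases on the sign of $h_k-\rho_k\bar f^k_*$ while the paper splits on which branch of the $\min$ defining $\tau_k$ is active; both case analyses are valid and lead to the same conclusion.
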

%\vspace{-0.35cm}
This monotonicity property is already remarkable as it does not hold in general for standard \SGD{} \cite{handbookgrad}.
In the following theorem, we will use it to prove convergence of \Momo{} (\cref{alg:momo}) for the convex case under interpolation, that is, assuming
\begin{align}\label{eq:interpolation}
    f(x^*,s) = \inf_x f(x,s) = f^* \quad \text{for all } s \in \mathcal{D}.
\end{align}
Interpolation  holds when there exists model parameters $x^*$ such that the loss attains its infimum on every data point. Typically this occurs when the loss is zero for every data point. 
%\vspace{-0.35cm}
% In particular, we assume $\alpha_k=+\infty$ under interpolation, when the loss functions are convex, and the gradients are either \emph{locally bounded} or the gradients are continuous. 
\begin{restatable}{theorem}{thmconvex}\label{thm:convex-case}
    Let $f(\cdot, s)$ be convex for every $s$ and let $x^* \in \arg \min_{x\in \R^d} f(x)$. Assume that \eqref{eq:interpolation} holds.
    Let $(x^k)$ be the iterates of \cref{alg:momo} with $\lb{k} = f^*$, $\alpha_k=+\infty$ for all $k\in \N$ and assume that $d_k\neq0$ for all $k\in \N$. 
    Define  \vspace{-0.35cm} $$B:= \{x ~\vert~ \|x-x^*\| \leq \|x^1-x^*\|\}.$$
    Assume that there exists $G>0$ such that $\max_{x\in B} \E{\|\nabla f(x,s)\|^2} =G^2 < \infty$.
    %\footnote{Because $B$ is bounded, this is always satisfied if $\mathcal{D}$ is finite.}
    Then, it holds
    \begin{align*}
    \min_{k=1,\dots,K}  \E{f(x^k)-f^*} \leq \frac{G\|x^1-x^*\|}{\sqrt{K}(1-\beta)}.
\end{align*}
\end{restatable}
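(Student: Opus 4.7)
The plan is to invoke Lemma~\ref{lem:descent} under interpolation to get a summable telescoping estimate, and then convert the summed quantity back into $\E{f(x^k)-f^*}$ via the structure of the momentum model $h_k$. Interpolation gives $\bar{f}^k_* = f^*$, and since $\alpha_k = +\infty$ the cap in \cref{lem:update} is never active, so Lemma~\ref{lem:descent} (specialized to $\mD_k = \mathbf{Id}$, $\lambda = 0$, $\rho_k = 1$) simplifies to
\[
\|x^{k+1}-x^*\|^2 \leq \|x^k-x^*\|^2 - \tfrac{(h_k-f^*)_+^2}{\|d_k\|^2}.
\]
This monotonically bounds the distance, so every $x^k$ lies in $B$, giving $\E{\|\nabla f(x^k,s_k)\|^2} \leq G^2$; by Jensen's inequality applied to $\|\cdot\|^2$ and the convex average $d_k = \sum_j \rho_{j,k}\nabla f(x^j,s_j)$ (with $\rho_k=1$), also $\E{\|d_k\|^2} \leq G^2$ for every $k$.

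Telescoping the descent inequality over $k=1,\dots,K$ yields $\sum_k (h_k-f^*)_+^2/\|d_k\|^2 \leq \|x^1-x^*\|^2$. Cauchy--Schwarz then gives
\[
\sum_{k=1}^K (h_k-f^*)_+ \;=\; \sum_k \|d_k\|\cdot \tfrac{(h_k-f^*)_+}{\|d_k\|} \;\leq\; \|x^1-x^*\|\,\sqrt{\sum_{k=1}^K \|d_k\|^2}.
\]
Taking expectations, pulling the square root in via Jensen's inequality, and using $\E{\|d_k\|^2} \leq G^2$ produces $\sum_k \E{(h_k-f^*)_+} \leq G\|x^1-x^*\|\sqrt{K}$, and hence $\min_k \E{(h_k-f^*)_+} \leq G\|x^1-x^*\|/\sqrt{K}$.

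The key remaining step---and where the factor $1/(1-\beta)$ enters---is to translate this into a bound on $\E{f(x^k)-f^*}$. The current-iterate contribution to the momentum model carries weight $\rho_{k,k}=1-\beta$, so isolating it and using interpolation ($f(x^j,s_j) \geq f^*$) on the past terms gives
\[
h_k - f^* \;\geq\; (1-\beta)(f(x^k,s_k)-f^*) + \sum_{j<k}\rho_{j,k}\iprod{\nabla f(x^j,s_j)}{x^k-x^j}.
\]
Conditioning on $\mathcal{F}_{k-1}$ turns the first term on the right into $(1-\beta)(f(x^k)-f^*)$ cleanly, since $s_k$ is independent of $x^k$. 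Controlling the residual cross-term involving past gradients and displacements is the principal obstacle: it is not sign-definite, and $\nabla f(x^j,s_j)$ is entangled with $x^k-x^j$ through the iterate updates. The plan to handle it is either a martingale/telescoping argument exploiting conditional unbiasedness of $\nabla f(x^j,s_j)$ given $\mathcal{F}_{j-1}$, or equivalently to substitute via the heavy-ball recursion $d_j = (1-\beta)\nabla f(x^j,s_j) + \beta d_{j-1}$ and sum by parts, so that the cross-terms telescope and can be absorbed into the $1/(1-\beta)$ rescaling. Once the inequality $(1-\beta)\min_k \E{f(x^k)-f^*} \leq \min_k \E{(h_k-f^*)_+}$ is in place, combining with the bound of the previous paragraph delivers the claimed rate.
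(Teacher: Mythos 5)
Your scaffolding is right and mostly matches the paper: the monotone descent from Lemma~\ref{lem:descent} keeps the iterates in $B$, Jensen applied to the convex combination $d_k=\sum_j\rho_{j,k}\nabla f(x^j,s_j)$ (with $\rho_k=1$) gives $\E{\|d_k\|^2}\le G^2$, and telescoping the squared-distance recursion controls $\sum_k(h_k-f^*)_+^2/\|d_k\|^2$. Your Cauchy--Schwarz route from there to $\sum_k\E{(h_k-f^*)_+}\le G\|x^1-x^*\|\sqrt{K}$ is a valid alternative to the paper's Step~3, which instead uses convexity of $(x,y)\mapsto x^2/y$ to bound $\E{(f(x^k)-f^*)^2}$ directly and then applies Jensen.

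The genuine gap is exactly where you flag it: the conversion of $(h_k-f^*)_+$ into $(1-\beta)(f(x^k,s_k)-f^*)$. You present this as an open obstacle and sketch two speculative routes, but neither is carried out, and the martingale suggestion in fact does not work because $x^k-x^j$ depends on the later samples $s_j,\dots,s_{k-1}$, so $\nabla f(x^j,s_j)$ is not conditionally centered or independent of the displacement. The paper's argument is much tighter than ``absorbing cross-terms into a rescaling'': it proves by induction that $h_k-f^*\ge 0$, and in doing so obtains the \emph{equality} $h_k-f^*=(1-\beta)\bigl(f(x^k,s_k)-f^*\bigr)$. The induction hinges on two facts you have not used. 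First, the exponential weights give the recursion
\begin{align*}
h_k=\beta\bigl(h_{k-1}-\tau_{k-1}\|d_{k-1}\|^2\bigr)+(1-\beta)f(x^k,s_k),
\end{align*}
so the entire residual past contribution collapses to $\beta(h_{k-1}-\tau_{k-1}\|d_{k-1}\|^2)$. Second, because $\alpha_k=+\infty$ the cap is never active, so $\tau_{k-1}\|d_{k-1}\|^2=(h_{k-1}-f^*)_+$, and by the induction hypothesis $(h_{k-1}-f^*)_+=h_{k-1}-f^*$; therefore $h_{k-1}-\tau_{k-1}\|d_{k-1}\|^2=f^*$ exactly, and your troublesome cross-term (summed with the past $f(x^j,s_j)-f^*$ contributions) vanishes identically rather than needing to be controlled. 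Without this observation the proof does not close, so the proposal as written establishes the auxiliary bounds but not the theorem.
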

% \textcolor{old}{We remark that \cref{thm:convex-case} is an unusual result, since in the non-smooth setting for \SGD{}, one needs to assume the gradients or the iterates are \emph{globally} bounded~\citep{Orabona2019,handbookgrad}; alternatively in the smooth setting one needs to assume globally Lipschitz gradients. Here we only rely on interpolation and the mild assumption that the gradients are bounded \emph{on the compact set} $B$ and in expectation.
% }

We remark that \cref{thm:convex-case} is an unusual result, since in the non-smooth setting for \SGD{}, one needs to assume that the gradients are \emph{globally} bounded, that is, the function is globally Lipschitz continuous~\citep[Thm.\ 9.7]{handbookgrad} (alternatively, one can assume a bounded domain ~\citep{Orabona2019}). Second, the step size to obtain the best iteration complexity requires knowledge of the global Lipschitz constant, and an upper bound of the initial distance to the solution $\|x^0-x^*\|\leq R$. Both of the above is true as well for the analysis of \texttt{SGD-M} \citep[Thm.\ 10]{Defazio2021}.
For \cref{thm:convex-case}, we require neither of the two, but instead rely on interpolation and the mild assumption that the gradients are bounded \emph{on the compact set} $B$ and in expectation.

%%%%%%%%%%%%%%%%%%%%%%%
%%%%%%%%%%%%%%%%%%%%%%%%%%%%%%%%%%%%%%%%%%%%%%%%%%%%%%%%%%%%%%%%%%%%%%%%%%%%%%%
\section{Experiments}\label{sec:experiments}

Our experiments will focus on the sensitivity with respect to choice of the learning rate $\alpha_k$. \citet{Schmidt2021} showed that most optimization methods perform equally well when being tuned. For practical use, a tuning budget needs to be considered; hence we are interested in methods that require little or no tuning.
Here we investigate how using our \Momo{} adaptive learning rate can improve the stability of both \SGDM{} and \Adam{}. To do this, for each task and model, we do a learning-rate sweep for both \SGDM{}, \Adam{}, \Momo{} and \MomoAdam{} and compare the resulting validation score for each learning rate. 

For \Momo{} and \MomoAdam{}, note that the effective step size (cf.\ \eqref{eq:tauk-general}) has the form  $\tau_k=\min\{\tfrac{\alpha_k}{\rho_k},\zeta_k\}$ with
\begin{equation} \label{eq:zetak}
\zeta_k := \frac{((1+\alpha_k\lambda)(\bar{f}_k -\rho_k \lb{k} -\gamma_k) +  \dotprod{d_k,x^k})_+}{\|d_k\|_{\mD_k^{-1}}^2}.\end{equation}
We refer to \cref{alg:momo}, line \ref{ln:xup} and \cref{alg:momo-adam}, line \ref{eq:momo-adam-xup} for the exact formula for \Momo{} and \MomoAdam{} (For \Momo{} we have that  $\rho_k=1,\mD_k=\Id$).
We will refer to $\alpha_k$ as the \emph{(user-specified) learning rate} and to $\tau_k$ as the \emph{adaptive learning rate}.
\subsection{Zero as Lower Bound}\label{sec:stability-exps}
\subsubsection{Constant Learning Rate}
First, we compare the \Momo{} methods to \SGDM{} and \Adam{} for problems where zero is a good estimate of the optimal value $f^*$.
In this section, we set $\lb{k}=0$ for all $k\in \N$ for \Momo{}(-\Adam{}).

\paragraph{Models and Datasets.}
Here we consider the tasks (additional details in \cref{sec:appendix-models-datasets}): %\vspace{-3ex}
%\\[0.1cm]
% $\circ$ \texttt{ResNet110} for \texttt{CIFAR100},  \texttt{ResNet20}, \texttt{VGG16}, and \texttt{ViT}
%     for \texttt{CIFAR10} \\[0.1cm]
%  $\circ$ \texttt{DLRM} 
%     for \texttt{Criteo} Kaggle Display Advertising Challenge \\[0.1cm]
% $\circ$ \texttt{MLP} for \texttt{MNIST}: two hidden layers of size 100 and \texttt{ReLU}.\\[0.1cm]
% $\circ$ A Vision Transformer (\texttt{ViT}) on \texttt{Imagenet-1k} \\[0.1cm]
% $\circ$ A diffusion model with a \texttt{UNet} architecture on the \texttt{Smithsonian Butterflies}.
\begin{itemize}
    \item \texttt{ResNet110} for \texttt{CIFAR100},  \texttt{ResNet20}, \texttt{VGG16}, and \texttt{ViT}
    for \texttt{CIFAR10}
    \item \texttt{DLRM} 
    for \texttt{Criteo} Display Advertising Challenge,
    % \footnote{Dataset from \url{https://www.kaggle.com/c/criteo-display-ad-challenge}}
    \item \texttt{MLP} for \texttt{MNIST}: two hidden layers (size $100$) and \texttt{ReLU}.
\end{itemize}
%\vspace{-0.3cm}
%%%%%%%%%%%%%%%%%%%%%%
\paragraph{Parameter Settings.}
We use default choices for momentum parameter $\beta=0.9$ for \Momo{} and \SGDM{}, and $(\beta_1,\beta_2)=(0.9,0.999)$ for \MomoAdam{} and \Adam{} respectively. 
In the experiments of this section, we always report averaged values over three seeds (five for \texttt{DLRM}), and do not use weight decay ($\lambda=0$).
%%%%%%%%%%%%%%%%

%\vspace{-2ex}
\paragraph{Discussion.}
We run \Momo{}, \MomoAdam{}, \Adam{} and \SGDM{}, for a fixed number of epochs (cf.\ \cref{sec:appendix-models-datasets}),  using a constant learning rate $\alpha_k =\alpha_0$. 
The plots in \cref{fig:stability_val_score} show the final training loss and validation set accuracy of each method when varying the learning rate $\alpha_0$. The training curves for the best runs can be found in \cref{fig:all_val,fig:all_loss}. For \texttt{VGG16} and \texttt{ViT} for \texttt{CIFAR10} and \texttt{MLP} for \texttt{MNIST}, the same plots can be found in \cref{sec:appendix-numerics-info}.
We observe that for small learning rates \Momo{} (\MomoAdam{}) is identical to \SGDM{} (\Adam{}). This is expected, since for small $\alpha_0$, we have $\tau_k = \alpha_0$ (see~\eqref{eq:tauk-general}).

For larger learning rates, we see that \Momo{} and \MomoAdam{} improve the training loss and validation accuracy, but \SGDM{} and \Adam{} decline in performance or fail to converge. 
Most importantly, \Momo(-\Adam{}) consistently extends the range of ``good'' learning rates by over one order of magnitude. Further, \Momo{}(-\Adam{}) achieve the overall best validation accuracy for all problems except \texttt{DLRM} and \texttt{ViT}, where the gap to the best score is minute and within the standard deviation of running multiple seeds (see \cref{table:best_scores}).

This advantage can be explained with the adaptivity of the step size of \Momo{}(-\Adam{}). In \cref{fig:resnet20_step_sizes_momo}, we plot the adaptive term $\zeta_k$~\eqref{eq:zetak} for \Momo{} on a \texttt{ResNet20}. 
%Clearly, for 
For $\alpha_0\in[1,10]$, we observe that the effective learning rate $\tau_k$ is adaptive even though $\alpha_k$ is constant. We observe two phenomena: firstly, in \cref{fig:resnet20_step_sizes_momo} \Momo{} is doing an automatic learning rate decay \emph{without any user choice for a learning-rate schedule}. Secondly, in the very first iterations, \Momo{} is doing a warm-up of the learning rate as $\tau_k=\zeta_k$ starts very small, but quickly becomes large. Both dynamics of $\tau_k$ seemingly improve performance and stability. We also observe faster initial training progress of \Momo{}(-\Adam{}) (cf.\ \cref{fig:all_val,fig:all_loss}). 
We provide additional comparisons to \texttt{Adabelief}~\citep{Zhuang2020}, \texttt{Adabound}~\citep{Luo2019}, and \texttt{Lion}~\citep{Chen2023} in \cref{fig:extendend_results} in the Appendix. 
While \Momo{}(-\Adam{}) performs favourably in this additional benchmark, we stress that it would be easy to derive a \Momo{} version of any preconditioned momentum method (such as \texttt{Adabelief} or \texttt{Adabound}).
\cref{fig:extendend_results} also shows the advantage of \Momo{} when comparing to \SGDM{} with exponentially decaying  schedule for $\alpha_k$ on the \texttt{CIFAR100} experiment. 
%Finally, a table with final validation scores can be found in \cref{table:best_scores}.
%
%%%%%%%%%%%%%%%%%%%%%%%%%%%%%%%%%%%%%%%%%%
%%%%%%%% STABILITY FIGURES
%%%%%%%%%%%%%%%%%%%%%%%%%%%%%%%%%%%%%%%%%%
\newcommand{\figsize}{0.3}
\begin{figure}
    \centering
    \begin{subfigure}[b]{0.49\columnwidth}
        \centering
        \includegraphics[width=0.99\textwidth]{./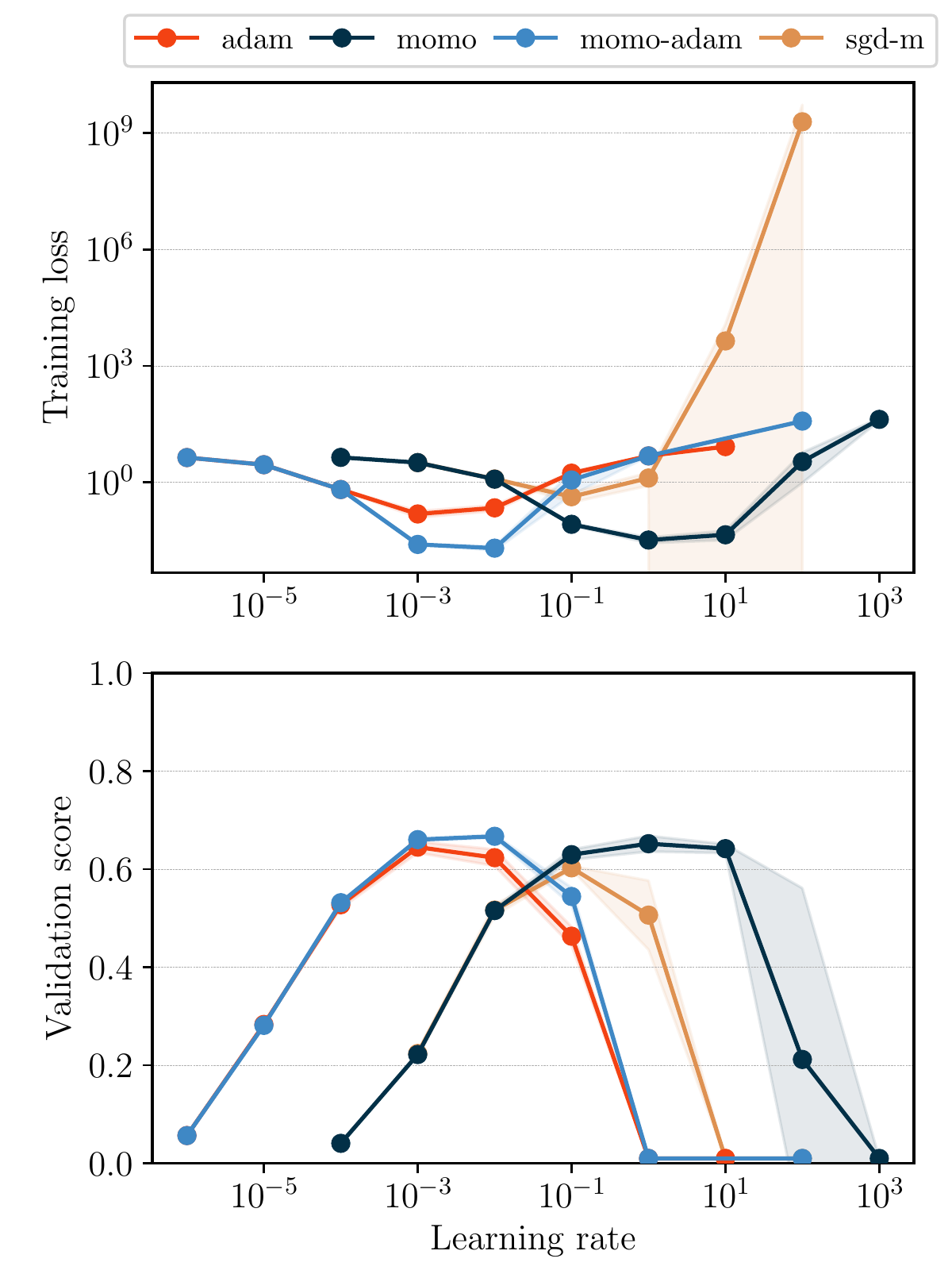}
        \caption{\texttt{ResNet110} for \texttt{CIFAR100}}  
    \end{subfigure}
    \begin{subfigure}[b]{0.49\columnwidth}
        \centering
        \includegraphics[width=0.99\textwidth]{./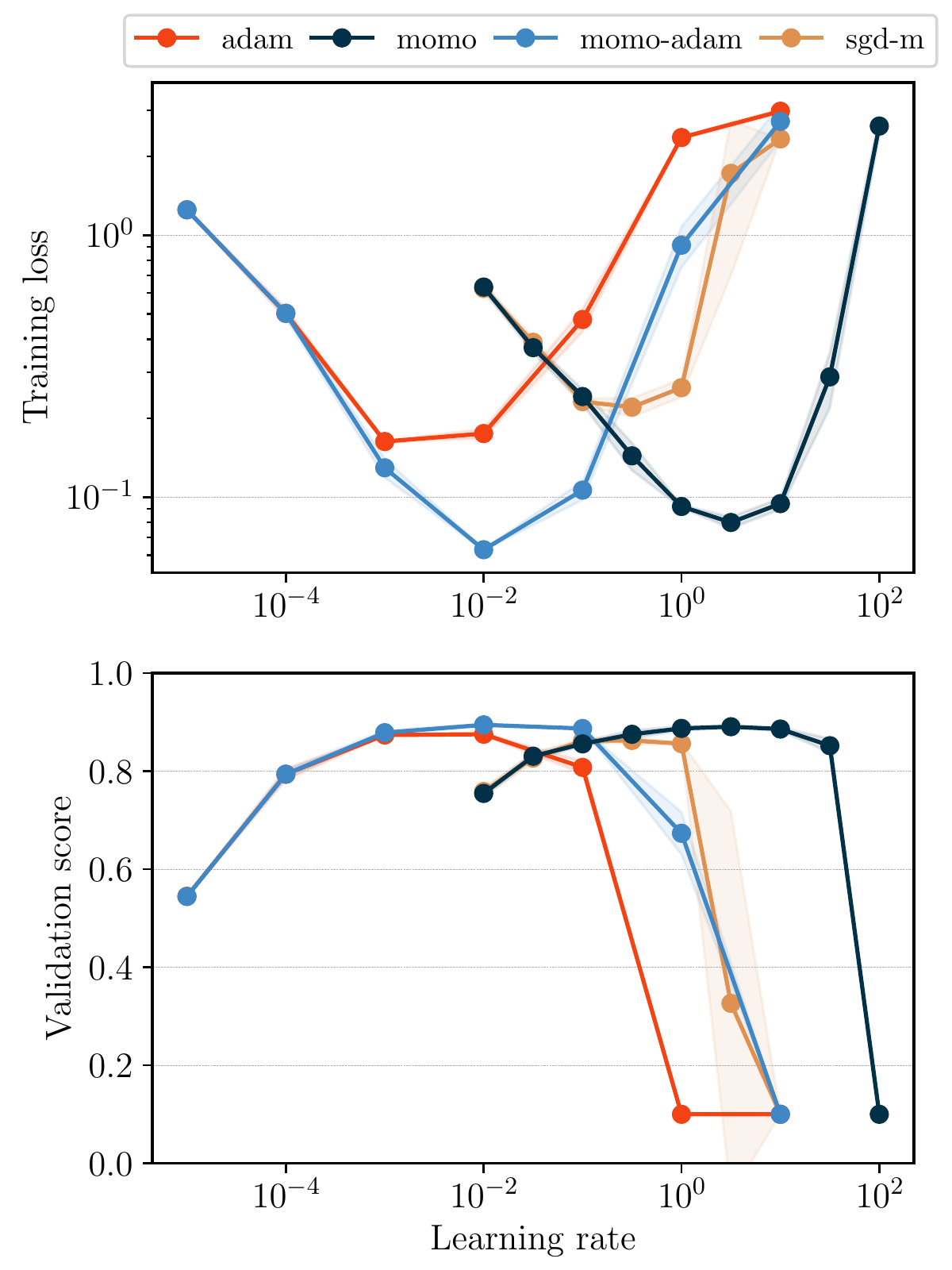}
        \caption{\texttt{ResNet20} for \texttt{CIFAR10}}    
    \end{subfigure}
    \begin{subfigure}[b]{0.98\columnwidth}
        \centering
        \includegraphics[trim={0 0 0  0},clip, width=0.495\textwidth]{./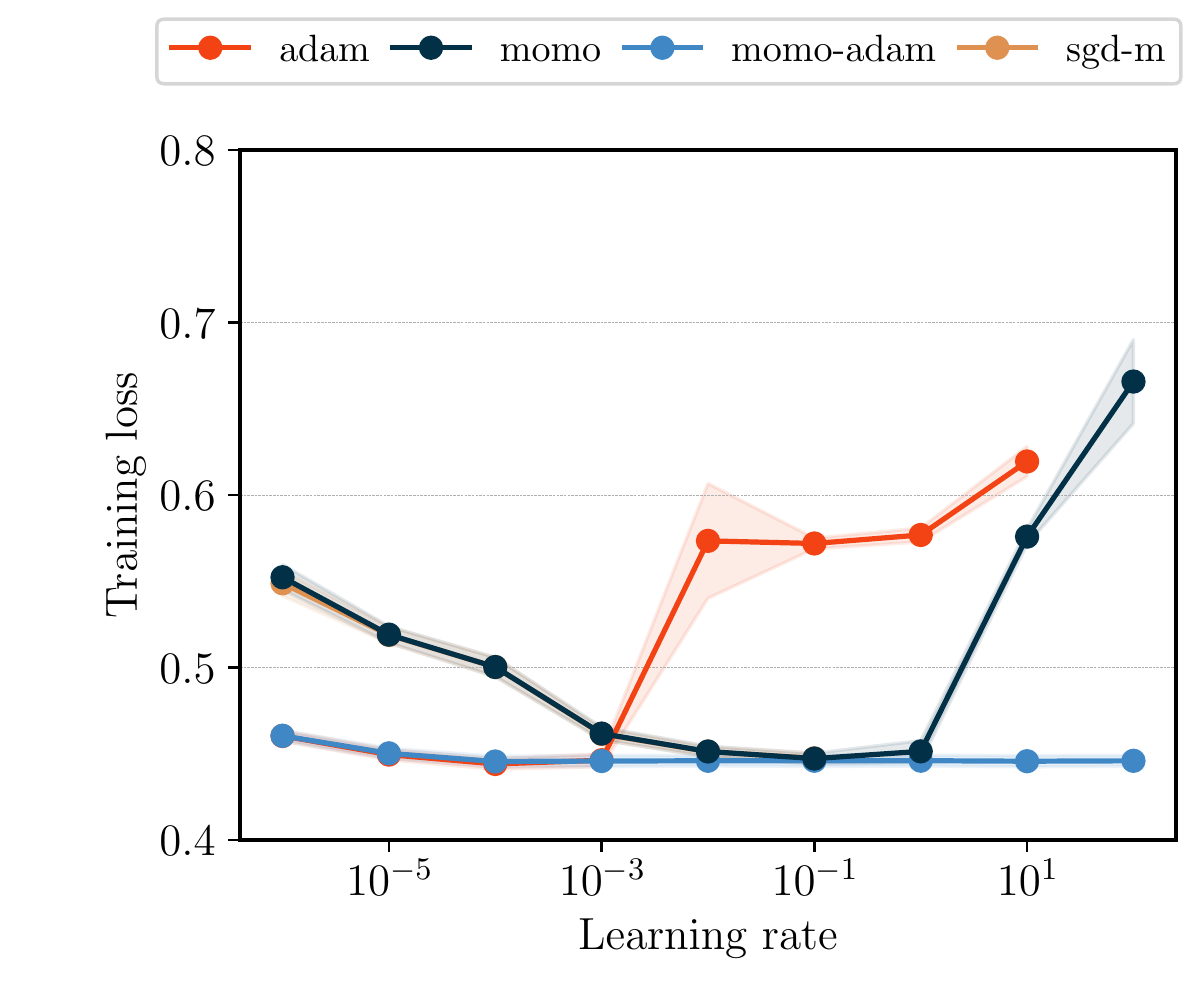} %\\
        %\vspace{1ex}
        \includegraphics[trim={0 0 0  0cm},clip, width=0.49\textwidth]{./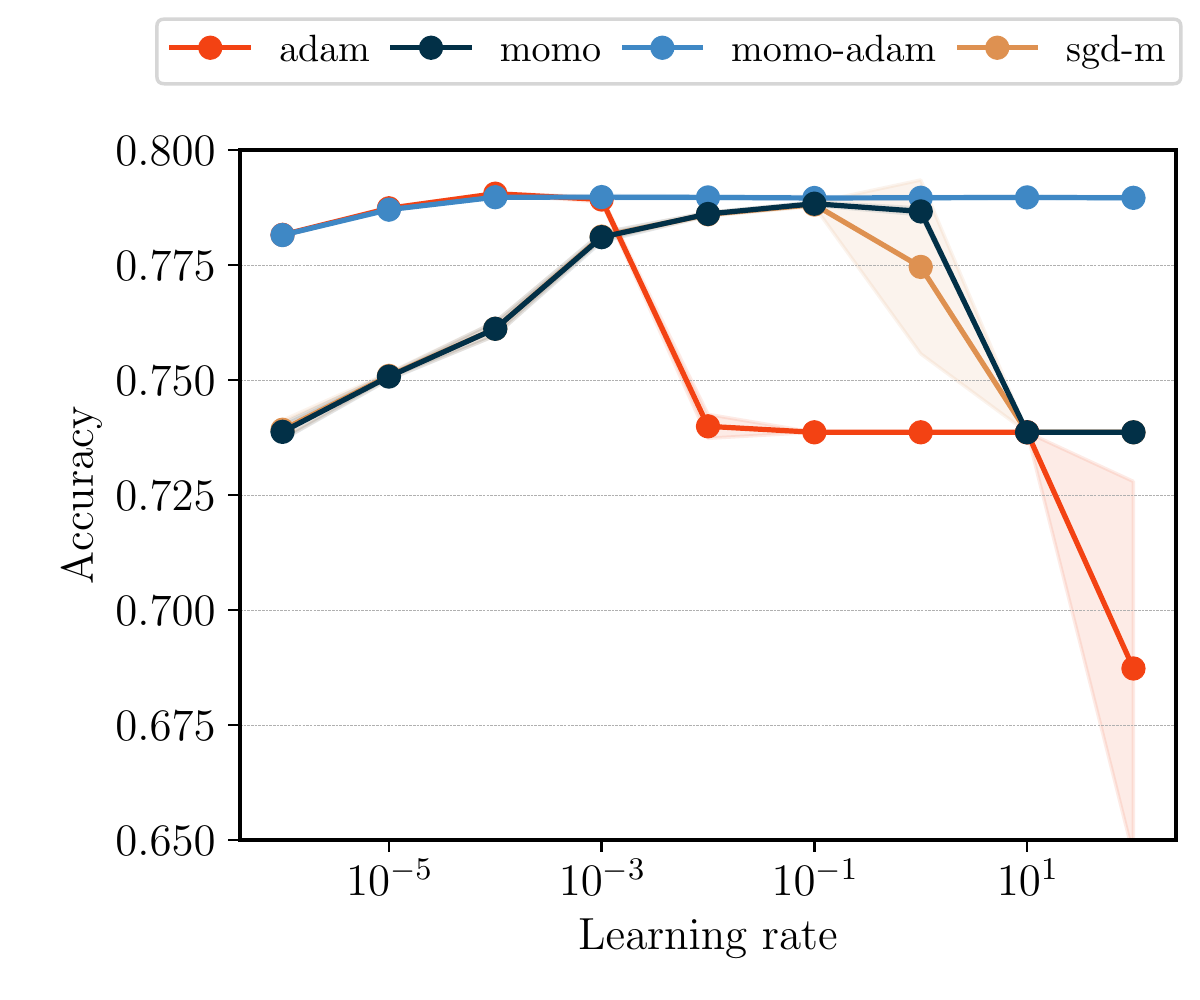}
        \caption{\texttt{DLRM} for \texttt{Criteo}}    
    \end{subfigure}
\caption{{\small Training loss and validation accuracy after a fixed number of epochs, for varying (constant) learning rate $\alpha_0$. Shaded area depicts two standard deviations.}}
\label{fig:stability_val_score}
\end{figure}
\subsubsection{Using a Learning-rate Schedule}
Here we present results for training a vision transformer (\texttt{ViT}) on \texttt{Imagenet-1k} (\cref{fig:vit_imagenet}), and a diffusion model (\texttt{UNet} architecture) on the \texttt{Smithsonian Butterflies}.

Here we only compare \MomoAdam{} to \texttt{AdamW}. This is in order to keep the computational expense within reasonable limits
%\footnote{For example, one single run for \texttt{ViT} training on \texttt{Imagenet-1k} takes ten hours on four A100-GPUs with 64 CPUs, and we conduct 24 runs in total.},
and because \Adam{} is the prevalent method for these tasks. Experiment details are provided in \cref{sec:appendix-models-datasets}.

What distinguishes these experiments to previous ones, is that we use \MomoAdam{} and \texttt{AdamW} with a learning-rate schedule for $\alpha_k$. For both tasks, it is standard practice to use a warmup (from a very small value to a specified base value $\alpha_{\text{base}}$) followed by cosine decay \cite{Dosovitskiy2021}.
We sweep over $\alpha_{\text{base}}$ and investigate again sensitivity with respect to this choice.  
\begin{figure}
	\centering
	\begin{subfigure}{0.54\columnwidth}  
		\includegraphics[width=0.99\textwidth]{./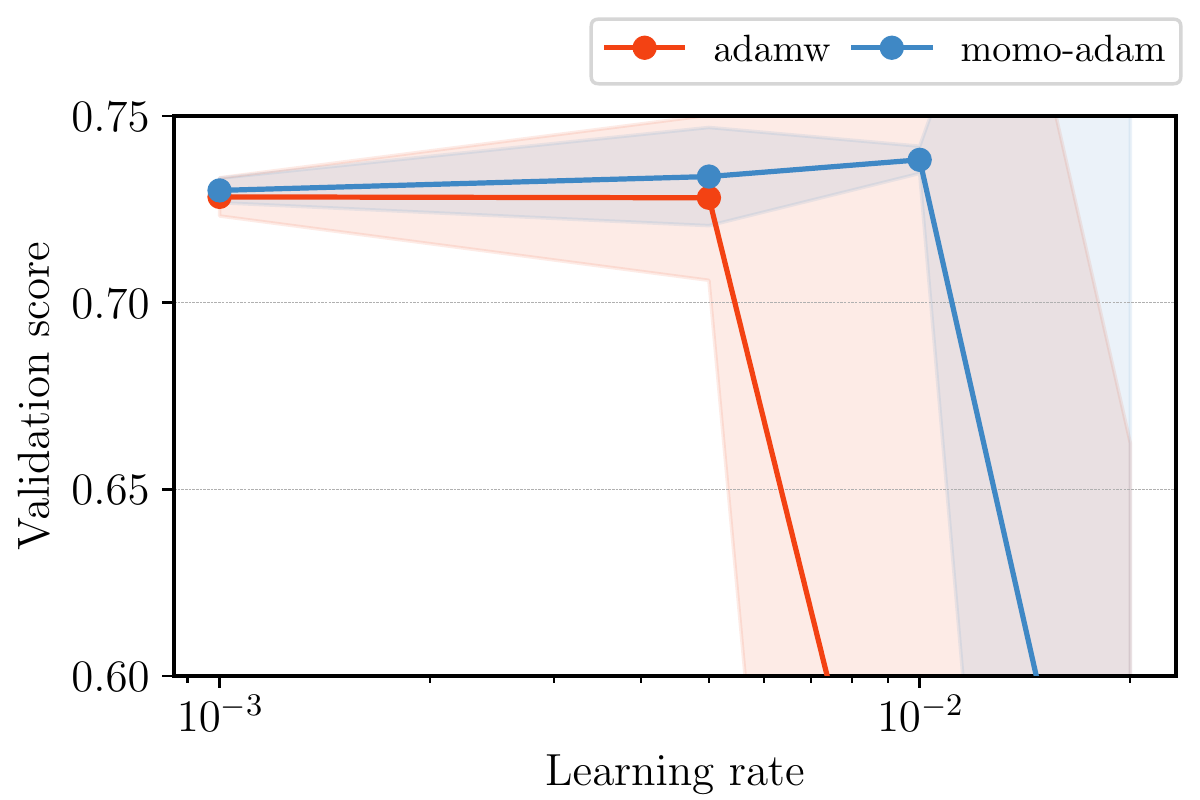}
		%\caption{text}
		\label{fig:vit_imagenet_stability}
	\end{subfigure}
	\begin{subfigure}{0.42\columnwidth}
		\raisebox{0mm}{
			\includegraphics[width=0.9\textwidth]{./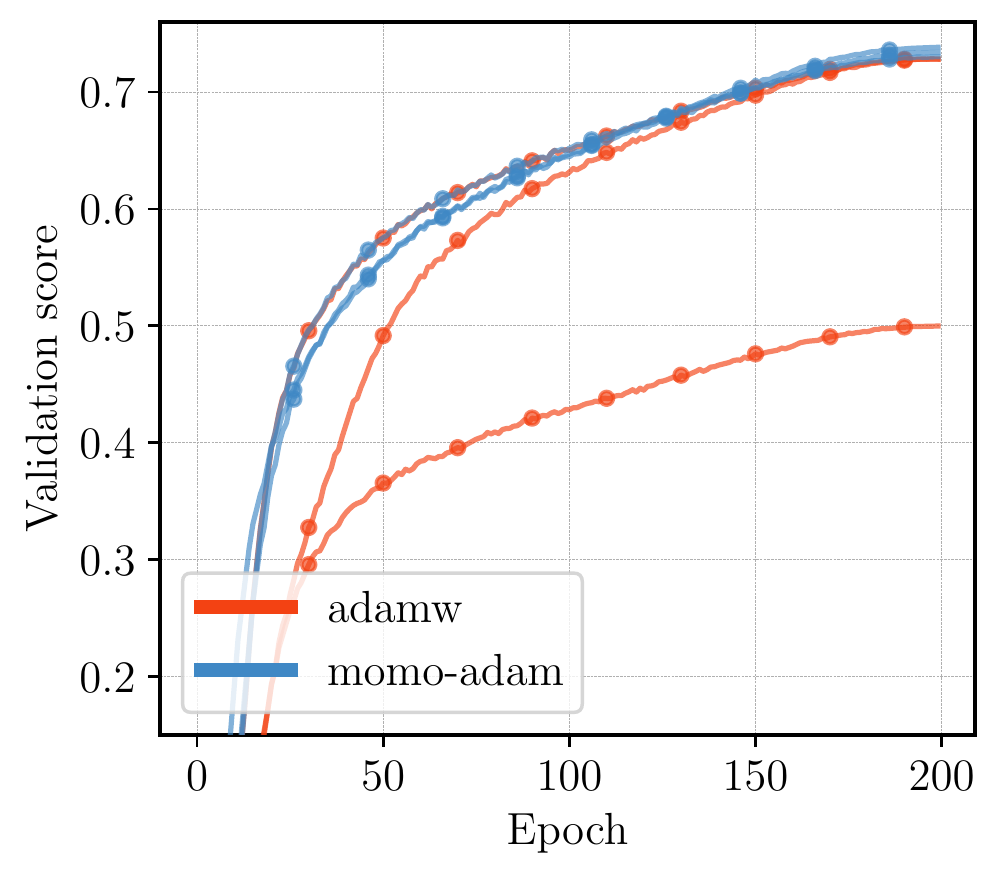}
		}
		%\caption{text}
		\label{fig:vit_imagenet_training}
	\end{subfigure}
	\caption{\texttt{ViT} for \texttt{Imagenet-1k}. Left: Final validation set accuracy (top-1) for different learning-rate values $\alpha_\text{base}$. Right: Training curves for the three best values of $\alpha_\text{base}$ for both methods.}
	\label{fig:vit_imagenet}
\end{figure}
From \cref{fig:vit_imagenet}, for the \texttt{ViT} training, we see that \MomoAdam{} (i) works on a larger range of base learning-rate values and (ii) reaches a higher accuracy for the best value of $\alpha_{\text{base}}$ (here 0.01).

The results for the diffusion model are similar and presented in \cref{fig:unet_smithsonian,fig:unet_smithsonian_samples} in the Appendix. To summarize, we reach the same conclusion as previously: \MomoAdam{} is easier to tune as it works for a wider range of learning rates, stabilizes training, and it can improve the best accuracy. 

For all of the above tasks, the (training) loss converges to values below $0.5$. Next, we consider two problems where the final training loss is significantly above zero. In such situations, we find that the \Momo{} methods with $\lb{k}=0$ are less likely to make use of the adaptive term $\zeta_k$. As a consequence, \Momo{} with $\lb{k}=0$ will yield little or no improvement. To see improvement,  we employ the online estimation of a lower bound for \Momo{} given in \cref{lem:fstar}. %
%%%%%%%%%%%%%%%%%%%%% 
%
%
\subsection{Online Lower Bound Estimation}
We now consider image classification on \texttt{Imagenet32/-1k} and a transformer for German-to-English translation. For both problems, the optimal value $f^*$ is far away from zero and hence we use \Momo{} with a known estimate of $f^*$ or with the online estimation developed in \cref{sec:lower}. Details on models and datasets are listed in \cref{sec:appendix-models-datasets}.
% \vspace{-3ex}
\paragraph{Imagenet.}
We train a \texttt{ResNet18} for \texttt{Imagenet32}
and give the resulting validation accuracy in \cref{fig:sensitivity_imagenet32} for weight decay $\lambda = 0$. We show results for $\lambda = 10^{-4}$ and for \texttt{Imagenet-1k} in the Appendix in \cref{fig:additional-imagenet}. 
We run \Momo{}(-\Adam{}) first with constant lower bound $\lb{k}=0$ and an \emph{oracle} value $\lb{k}=0.9$. Further, we run \Momo{}(-\Adam{})$^*$ (indicated by the suffix \emph{-star} in the plots),
(cf.\ \cref{alg:momo-star}). 
We compare to \SGDM{} and \texttt{AdamW}. For all methods, we use a constant learning rate $\alpha_k=\alpha_0$ and vary the value of $\alpha_0$.

First we observe in \cref{fig:sensitivity_imagenet32}  that setting $\lb{k}=0$ leads to similar performance as the baseline method (in particular it is never worse). Next, observe that the tighter lower bound $\lb{k}=0.9$ leads to improvement for all learning rates. Finally, the online estimated lower bound widens the range of learning rate with good accuracy by an order of magnitude and leads to small improvements in top accuracy.
%
%

%\vspace{-2ex}
\paragraph{Transformer for German-to-English Translation.}
We consider the task of neural machine translation from German to English by training an encoder-decoder transformer architecture \citep{Vaswani2017} on the \texttt{IWSLT14} dataset. We run two settings, namely dropout of $0.1$ and $0.3$. 
We fine-tune the hyperparameters of the baseline \texttt{AdamW}: for the learning-rate schedule $\alpha_k$, we use a linear warm-up of 4000 iterations from zero to a given base value $\alpha_\text{base}$ followed by an inverse square-root decay (cf.\ \cref{fig:stability_translation} for an example curve and the adaptive step sizes). All other parameter settings are given in \cref{sec:appendix-models-datasets}. \MomoAdam{}$^*$ uses the same hyperparameter settings as \texttt{AdamW}.

\cref{fig:stability_translation} shows the BLEU score after 60 epochs when varying the base learning rate $\alpha_\text{base}$: \MomoAdam{}$^*$ is on par or better than \texttt{AdamW} on the full range of initial learning rates and for both dropout values, with only a small improvement for larger values of $\alpha_\text{base}$. 
While this improvement is not as substantial as for previous examples, we remark that for this particular task we compare to a fine-tuned configuration of \texttt{AdamW}.
\begin{figure}
    \centering
    \begin{subfigure}[b]{0.98\columnwidth} %0.53
        \centering
        \includegraphics[width=0.99\textwidth]{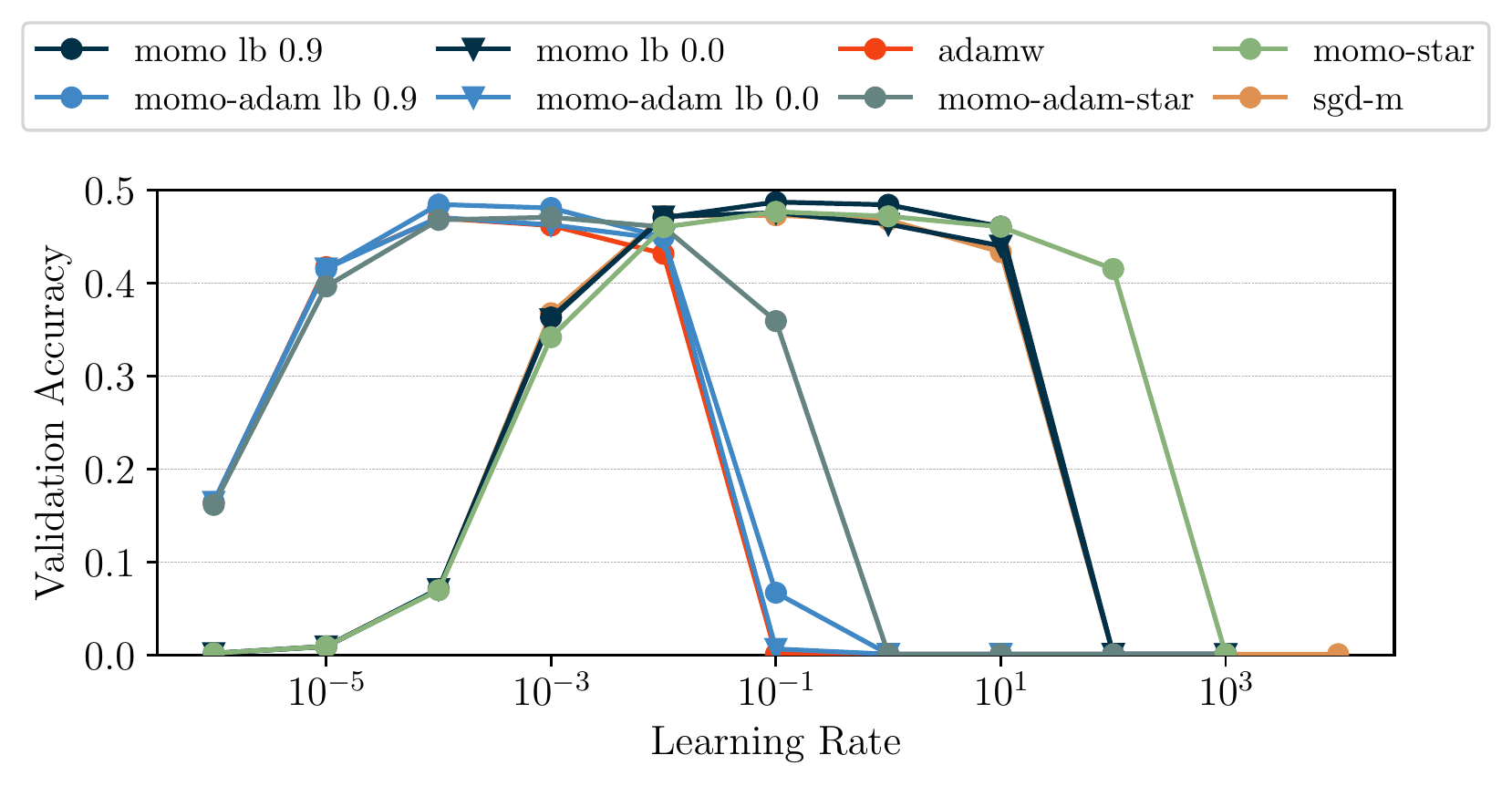}
        \caption{\texttt{ResNet18} for \texttt{Imagenet32}}
        \label{fig:sensitivity_imagenet32}
    \end{subfigure}
    \begin{subfigure}[b]{0.98\columnwidth} %46
        \centering
        \includegraphics[width=0.55\textwidth]{./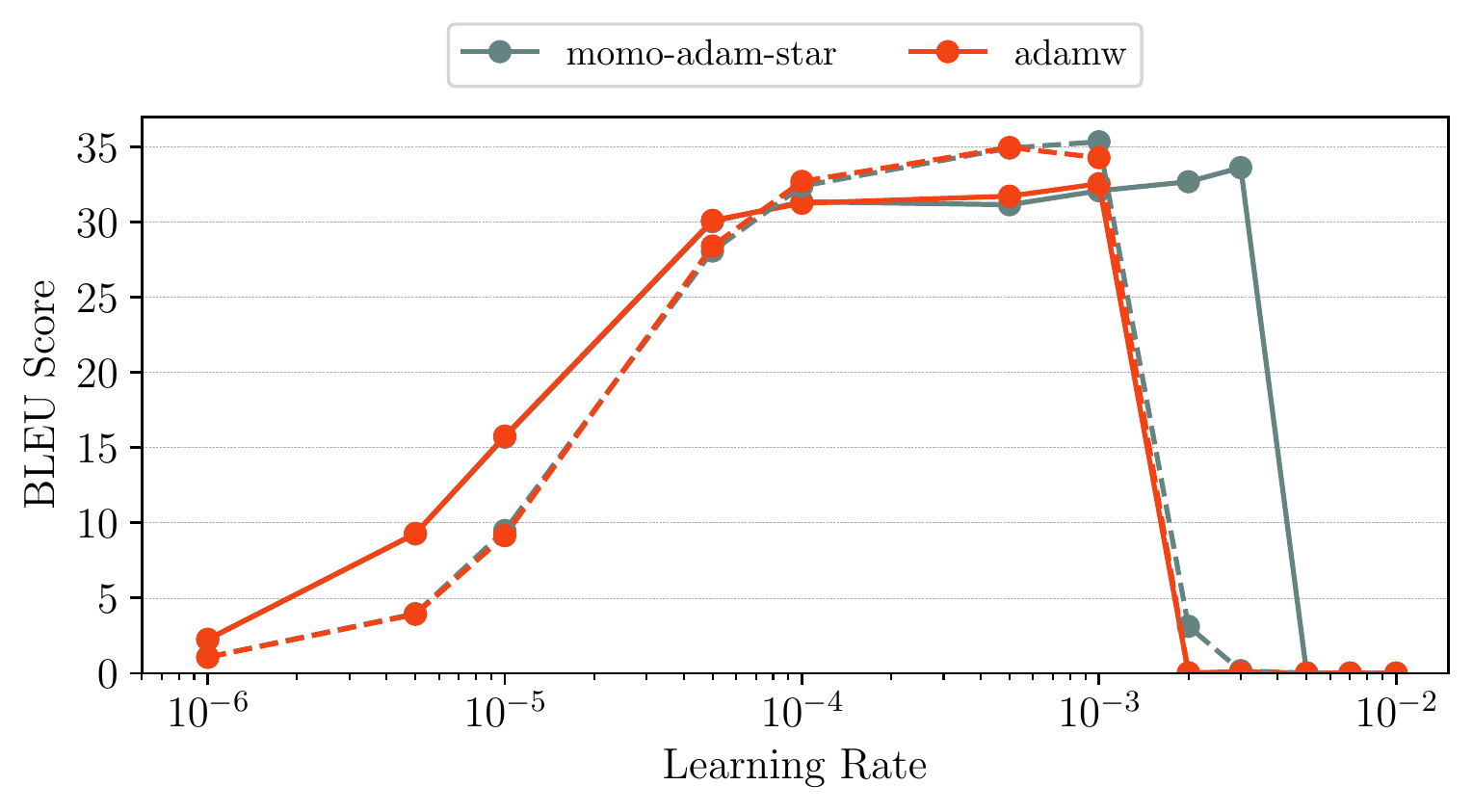}
        \includegraphics[width=0.4\textwidth]{./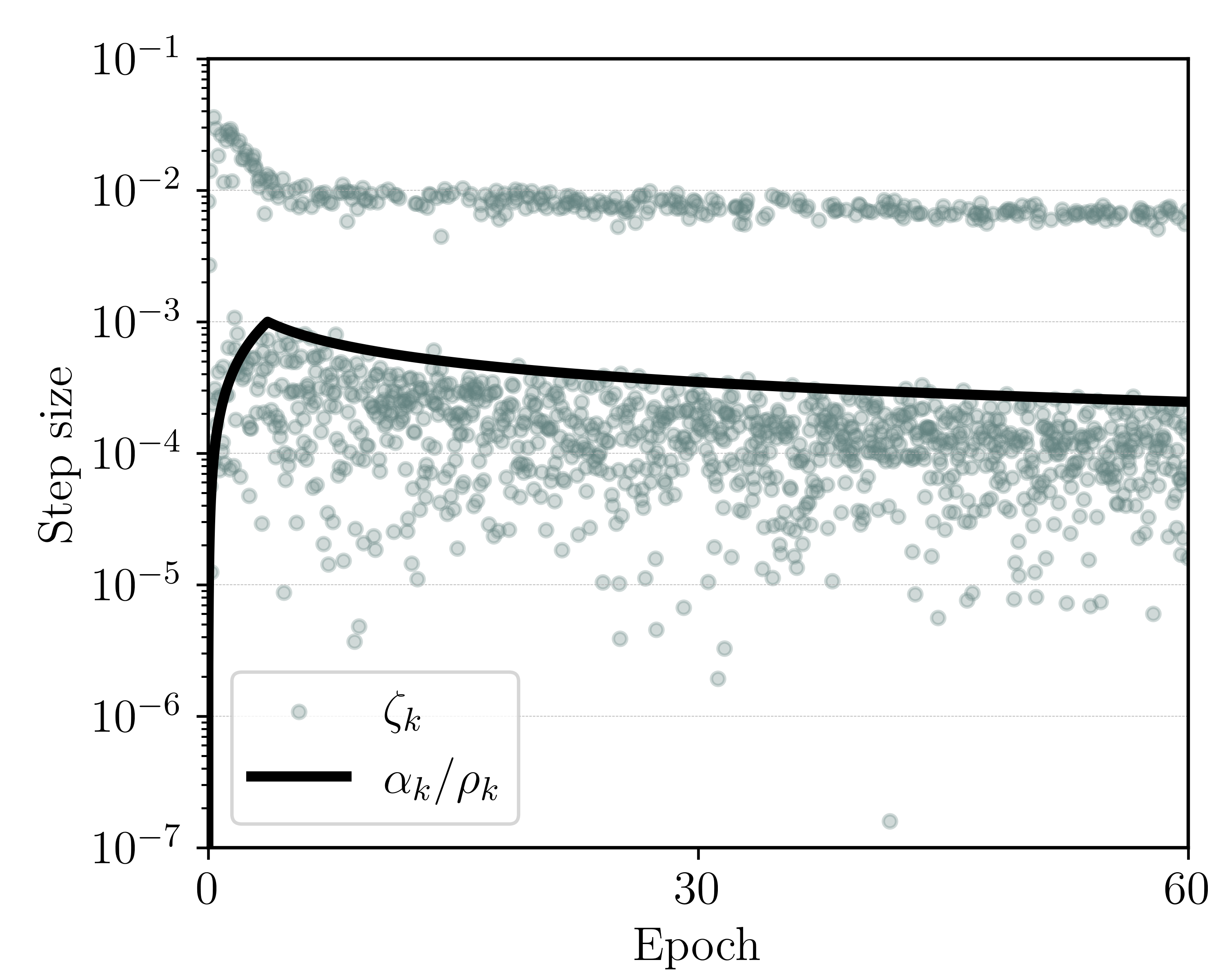}
        \caption{Encoder-Decoder Transformer for \texttt{IWSLT14}}    
        \label{fig:stability_translation}
    \end{subfigure}    
\caption{{\small Validation accuracy over a range of learning rates $\alpha_0$. (a) \texttt{Imagenet32} without weight decay ($\lambda=0$).
(b) Left: \texttt{IWSLT14} translation task with dropout 0.1 (\textbf{plain}) or 0.3 (\textbf{dashed}). 
%The $x$-axis is the initial learning rate $\alpha_\text{base}$ (after warm-up). 
Right: Learning rate schedule (\textbf{black}) and adaptive step sizes (\textbf{grey dots}) of \MomoAdam{}$^*$ for $\alpha_\text{base}=10^{-3}$.}}
\label{fig:stability_plots}
\end{figure}
\section{Conclusion}
We present \Momo{} and \MomoAdam{}, adaptive learning rates for \SGDM{} and \Adam{}. The main conceptual insight is that momentum can be used to build a model of the loss by averaging a stream of loss function values and gradients. Combined with truncating this average at a known lower bound of the loss, we obtain the \Momo{} algorithms. This technique can be applied to all momentum type methods, including  variants of \Adam{}.

We show examples where incorporating \Momo{} into \SGDM{} and \Adam{} significantly reduces the sensitivity to learning rate choice.
% without any tuning of the learning-rate schedule and
This can be particularly helpful for practitioners who look for good out-of-the-box optimization performance for new tasks.
%\newpage

\section*{Acknowledgements}

The computations in this work were, in part, run at facilities supported by the Scientific Computing Core at the Flatiron Institute, a division of the Simons Foundation.

All models and datasets for this project were utilized by authors at the Flatiron Institute and the Technical University of Munich.

\section*{Impact Statement}
This paper presents work whose goal is to advance the field of Machine Learning. There are many potential societal consequences of our work, none which we feel must be specifically highlighted here.

\bibliography{lib}

\begin{thebibliography}{45}
\providecommand{\natexlab}[1]{#1}
\providecommand{\url}[1]{\texttt{#1}}
\expandafter\ifx\csname urlstyle\endcsname\relax
  \providecommand{\doi}[1]{doi: #1}\else
  \providecommand{\doi}{doi: \begingroup \urlstyle{rm}\Url}\fi

\bibitem[Asi \& Duchi(2019)Asi and Duchi]{Asi2019}
Asi, H. and Duchi, J.~C.
\newblock Stochastic (approximate) proximal point methods: convergence,
  optimality, and adaptivity.
\newblock \emph{SIAM J. Optim.}, 29\penalty0 (3):\penalty0 2257--2290, 2019.
\newblock ISSN 1052-6234.
\newblock \doi{10.1137/18M1230323}.

\bibitem[Berrada et~al.(2020)Berrada, Zisserman, and Kumar]{Berrada2019}
Berrada, L., Zisserman, A., and Kumar, M.~P.
\newblock Training neural networks for and by interpolation.
\newblock In \emph{Proceedings of the 37th International Conference on Machine
  Learning}, volume 119 of \emph{Proceedings of Machine Learning Research},
  pp.\  799--809. PMLR, 13--18 Jul 2020.

\bibitem[Chadha et~al.(2021)Chadha, Cheng, and Duchi]{Chadha2021}
Chadha, K., Cheng, G., and Duchi, J.~C.
\newblock Accelerated, optimal, and parallel: Some results on model-based
  stochastic optimization.
\newblock January 2021.

\bibitem[Chen et~al.(2022)Chen, Cutkosky, and Orabona]{Chen2022}
Chen, K., Cutkosky, A., and Orabona, F.
\newblock Implicit parameter-free online learning with truncated linear models.
\newblock In Dasgupta, S. and Haghtalab, N. (eds.), \emph{Proceedings of The
  33rd International Conference on Algorithmic Learning Theory}, volume 167 of
  \emph{Proceedings of Machine Learning Research}, pp.\  148--175. PMLR, 29
  Mar--01 Apr 2022.
\newblock URL \url{https://proceedings.mlr.press/v167/chen22a.html}.

\bibitem[Chen et~al.(2023)Chen, Liang, Huang, Real, Wang, Pham, Dong, Luong,
  Hsieh, Lu, and Le]{Chen2023}
Chen, X., Liang, C., Huang, D., Real, E., Wang, K., Pham, H., Dong, X., Luong,
  T., Hsieh, C., Lu, Y., and Le, Q.~V.
\newblock Symbolic discovery of optimization algorithms.
\newblock In Oh, A., Naumann, T., Globerson, A., Saenko, K., Hardt, M., and
  Levine, S. (eds.), \emph{Advances in Neural Information Processing Systems
  36: Annual Conference on Neural Information Processing Systems 2023, NeurIPS
  2023, New Orleans, LA, USA, December 10 - 16, 2023}, 2023.

\bibitem[Davis \& Drusvyatskiy(2019)Davis and Drusvyatskiy]{Davis2019}
Davis, D. and Drusvyatskiy, D.
\newblock Stochastic model-based minimization of weakly convex functions.
\newblock \emph{SIAM J. Optim.}, 29\penalty0 (1):\penalty0 207--239, 2019.
\newblock ISSN 1052-6234.
\newblock \doi{10.1137/18M1178244}.

\bibitem[Defazio \& Gower(2021)Defazio and Gower]{Defazio2021}
Defazio, A. and Gower, R.~M.
\newblock The power of factorial powers: New parameter settings for
  (stochastic) optimization.
\newblock In Balasubramanian, V.~N. and Tsang, I. (eds.), \emph{Proceedings of
  The 13th Asian Conference on Machine Learning}, volume 157 of
  \emph{Proceedings of Machine Learning Research}, pp.\  49--64. PMLR, 17--19
  Nov 2021.
\newblock URL \url{https://proceedings.mlr.press/v157/defazio21a.html}.

\bibitem[Defazio \& Mishchenko(2023{\natexlab{a}})Defazio and
  Mishchenko]{Defazio2023}
Defazio, A. and Mishchenko, K.
\newblock Learning-rate-free learning by {D}-adaptation.
\newblock In Krause, A., Brunskill, E., Cho, K., Engelhardt, B., Sabato, S.,
  and Scarlett, J. (eds.), \emph{Proceedings of the 40th International
  Conference on Machine Learning}, volume 202 of \emph{Proceedings of Machine
  Learning Research}, pp.\  7449--7479. PMLR, 23--29 Jul 2023{\natexlab{a}}.
\newblock URL \url{https://proceedings.mlr.press/v202/defazio23a.html}.

\bibitem[Defazio \& Mishchenko(2023{\natexlab{b}})Defazio and
  Mishchenko]{defazio2023learningratefree}
Defazio, A. and Mishchenko, K.
\newblock Learning-rate-free learning by {D}-{A}daptation.
\newblock In Krause, A., Brunskill, E., Cho, K., Engelhardt, B., Sabato, S.,
  and Scarlett, J. (eds.), \emph{Proceedings of the 40th International
  Conference on Machine Learning}, volume 202 of \emph{Proceedings of Machine
  Learning Research}, pp.\  7449--7479. PMLR, 23--29 Jul 2023{\natexlab{b}}.

\bibitem[Dosovitskiy et~al.(2021)Dosovitskiy, Beyer, Kolesnikov, Weissenborn,
  Zhai, Unterthiner, Dehghani, Minderer, Heigold, Gelly, Uszkoreit, and
  Houlsby]{Dosovitskiy2021}
Dosovitskiy, A., Beyer, L., Kolesnikov, A., Weissenborn, D., Zhai, X.,
  Unterthiner, T., Dehghani, M., Minderer, M., Heigold, G., Gelly, S.,
  Uszkoreit, J., and Houlsby, N.
\newblock An image is worth 16x16 words: Transformers for image recognition at
  scale.
\newblock In \emph{9th International Conference on Learning Representations,
  {ICLR} 2021, Virtual Event, Austria, May 3-7, 2021}. OpenReview.net, 2021.
\newblock URL \url{https://openreview.net/forum?id=YicbFdNTTy}.

\bibitem[Duchi et~al.(2011)Duchi, Hazan, and Singer]{Duchi2011}
Duchi, J., Hazan, E., and Singer, Y.
\newblock Adaptive subgradient methods for online learning and stochastic
  optimization.
\newblock \emph{J. Mach. Learn. Res.}, 12:\penalty0 2121--2159, 2011.
\newblock ISSN 1532-4435.

\bibitem[Garrigos \& Gower(2023)Garrigos and Gower]{handbookgrad}
Garrigos, G. and Gower, R.~M.
\newblock Handbook of convergence theorems for (stochastic) gradient methods,
  2023.

\bibitem[Gower et~al.(2021)Gower, Sebbouh, and Loizou]{Gower2021}
Gower, R., Sebbouh, O., and Loizou, N.
\newblock {SGD} for structured nonconvex functions: Learning rates,
  minibatching and interpolation.
\newblock In Banerjee, A. and Fukumizu, K. (eds.), \emph{Proceedings of The
  24th International Conference on Artificial Intelligence and Statistics},
  volume 130 of \emph{Proceedings of Machine Learning Research}, pp.\
  1315--1323. PMLR, 13--15 Apr 2021.
\newblock URL \url{https://proceedings.mlr.press/v130/gower21a.html}.

\bibitem[He et~al.(2016)He, Zhang, Ren, and Sun]{He2016}
He, K., Zhang, X., Ren, S., and Sun, J.
\newblock Deep residual learning for image recognition.
\newblock In \emph{2016 IEEE Conference on Computer Vision and Pattern
  Recognition (CVPR)}, pp.\  770--778, 2016.
\newblock \doi{10.1109/CVPR.2016.90}.

\bibitem[Kaplan et~al.(2020)Kaplan, McCandlish, Henighan, Brown, Chess, Child,
  Gray, Radford, Wu, and Amodei]{scalinglaws}
Kaplan, J., McCandlish, S., Henighan, T., Brown, T.~B., Chess, B., Child, R.,
  Gray, S., Radford, A., Wu, J., and Amodei, D.
\newblock Scaling laws for neural language models.
\newblock \emph{CoRR}, abs/2001.08361, 2020.
\newblock URL \url{https://arxiv.org/pdf/2001.08361.pdf}.

\bibitem[Kingma \& Ba(2015)Kingma and Ba]{Kingma2015}
Kingma, D.~P. and Ba, J.
\newblock Adam: {A} method for stochastic optimization.
\newblock In Bengio, Y. and LeCun, Y. (eds.), \emph{3rd International
  Conference on Learning Representations, {ICLR} 2015, San Diego, CA, USA, May
  7-9, 2015, Conference Track Proceedings}, 2015.

\bibitem[Krogh \& Hertz(1991)Krogh and Hertz]{Krogh1991}
Krogh, A. and Hertz, J.
\newblock A simple weight decay can improve generalization.
\newblock In Moody, J., Hanson, S., and Lippmann, R. (eds.), \emph{Advances in
  Neural Information Processing Systems}, volume~4. Morgan-Kaufmann, 1991.
\newblock URL
  \url{https://proceedings.neurips.cc/paper/1991/file/8eefcfdf5990e441f0fb6f3fad709e21-Paper.pdf}.

\bibitem[Loizou et~al.(2021)Loizou, Vaswani, Hadj~Laradji, and
  Lacoste-Julien]{Loizou2021}
Loizou, N., Vaswani, S., Hadj~Laradji, I., and Lacoste-Julien, S.
\newblock Stochastic {P}olyak step-size for {SGD}: An adaptive learning rate
  for fast convergence.
\newblock In Banerjee, A. and Fukumizu, K. (eds.), \emph{Proceedings of The
  24th International Conference on Artificial Intelligence and Statistics},
  volume 130 of \emph{Proceedings of Machine Learning Research}, pp.\
  1306--1314. PMLR, 13--15 Apr 2021.
\newblock URL \url{https://proceedings.mlr.press/v130/loizou21a.html}.

\bibitem[Loshchilov \& Hutter(2019)Loshchilov and Hutter]{Loshchilov2019}
Loshchilov, I. and Hutter, F.
\newblock Decoupled weight decay regularization.
\newblock In \emph{7th International Conference on Learning Representations,
  {ICLR} 2019, New Orleans, LA, USA, May 6-9, 2019}. OpenReview.net, 2019.
\newblock URL \url{https://openreview.net/forum?id=Bkg6RiCqY7}.

\bibitem[Luo et~al.(2019)Luo, Xiong, Liu, and Sun]{Luo2019}
Luo, L., Xiong, Y., Liu, Y., and Sun, X.
\newblock Adaptive gradient methods with dynamic bound of learning rate.
\newblock In \emph{Proceedings of the 7th International Conference on Learning
  Representations}, May 2019.

\bibitem[Ma et~al.(2018)Ma, Bassily, and Belkin]{Ma2018}
Ma, S., Bassily, R., and Belkin, M.
\newblock The power of interpolation: Understanding the effectiveness of {SGD}
  in modern over-parametrized learning.
\newblock In Dy, J. and Krause, A. (eds.), \emph{Proceedings of the 35th
  International Conference on Machine Learning}, volume~80 of \emph{Proceedings
  of Machine Learning Research}, pp.\  3325--3334. PMLR, 10--15 Jul 2018.
\newblock URL \url{https://proceedings.mlr.press/v80/ma18a.html}.

\bibitem[Meng \& Gower(2023)Meng and Gower]{meng2022a}
Meng, S.~Y. and Gower, R.~M.
\newblock A model-based method for minimizing {CV}a{R} and beyond.
\newblock In Krause, A., Brunskill, E., Cho, K., Engelhardt, B., Sabato, S.,
  and Scarlett, J. (eds.), \emph{Proceedings of the 40th International
  Conference on Machine Learning}, volume 202 of \emph{Proceedings of Machine
  Learning Research}, pp.\  24436--24456. PMLR, 23--29 Jul 2023.
\newblock URL \url{https://proceedings.mlr.press/v202/meng23a.html}.

\bibitem[Orabona(2019)]{Orabona2019}
Orabona, F.
\newblock A modern introduction to online learning.
\newblock \emph{CoRR}, abs/1912.13213, 2019.
\newblock URL \url{http://arxiv.org/abs/1912.13213}.

\bibitem[Orabona \& Tommasi(2017)Orabona and Tommasi]{coin-betting}
Orabona, F. and Tommasi, T.
\newblock Training deep networks without learning rates through coin betting.
\newblock In Guyon, I., Luxburg, U.~V., Bengio, S., Wallach, H., Fergus, R.,
  Vishwanathan, S., and Garnett, R. (eds.), \emph{Advances in Neural
  Information Processing Systems}, volume~30. Curran Associates, Inc., 2017.

\bibitem[Orvieto et~al.(2022)Orvieto, Lacoste{-}Julien, and
  Loizou]{Orvieto2022}
Orvieto, A., Lacoste{-}Julien, S., and Loizou, N.
\newblock Dynamics of {SGD} with stochastic polyak stepsizes: Truly adaptive
  variants and convergence to exact solution.
\newblock In \emph{NeurIPS}, 2022.

\bibitem[Ott et~al.(2019)Ott, Edunov, Baevski, Fan, Gross, Ng, Grangier, and
  Auli]{Ott2019}
Ott, M., Edunov, S., Baevski, A., Fan, A., Gross, S., Ng, N., Grangier, D., and
  Auli, M.
\newblock fairseq: A fast, extensible toolkit for sequence modeling.
\newblock In \emph{Proceedings of NAACL-HLT 2019: Demonstrations}, 2019.

\bibitem[Paren et~al.(2022)Paren, Berrada, Poudel, and Kumar]{Paren2022}
Paren, A., Berrada, L., Poudel, R. P.~K., and Kumar, M.~P.
\newblock A stochastic bundle method for interpolation.
\newblock \emph{J Mach Learn Res}, 23\penalty0 (15):\penalty0 1--57, 2022.
\newblock URL \url{http://jmlr.org/papers/v23/20-1248.html}.

\bibitem[Polyak(1964)]{Polyak1964}
Polyak, B.~T.
\newblock Some methods of speeding up the convergence of iteration methods.
\newblock \emph{USSR Computational Mathematics and Mathematical Physics},
  4\penalty0 (5):\penalty0 1--17, 1964.
\newblock ISSN 0041-5553.

\bibitem[Polyak(1987)]{Polyak1987}
Polyak, B.~T.
\newblock \emph{Introduction to optimization}.
\newblock Translations Series in Mathematics and Engineering. Optimization
  Software, Inc., Publications Division, New York, 1987.
\newblock ISBN 0-911575-14-6.
\newblock Translated from the Russian, With a foreword by Dimitri P. Bertsekas.

\bibitem[Raffel et~al.(2020)Raffel, Shazeer, Roberts, Lee, Narang, Matena,
  Zhou, Li, and Liu]{raffel2020exploring}
Raffel, C., Shazeer, N., Roberts, A., Lee, K., Narang, S., Matena, M., Zhou,
  Y., Li, W., and Liu, P.~J.
\newblock Exploring the limits of transfer learning with a unified text-to-text
  transformer.
\newblock \emph{J Mach Learn Res}, 21\penalty0 (140):\penalty0 1--67, 2020.
\newblock URL \url{http://jmlr.org/papers/v21/20-074.html}.

\bibitem[Robbins \& Monro(1951)Robbins and Monro]{Robbins1951}
Robbins, H. and Monro, S.
\newblock A stochastic approximation method.
\newblock \emph{Ann. Math. Statistics}, 22:\penalty0 400--407, 1951.
\newblock ISSN 0003-4851.
\newblock \doi{10.1214/aoms/1177729586}.

\bibitem[Ronneberger et~al.(2015)Ronneberger, Fischer, and
  Brox]{Ronneberger2015}
Ronneberger, O., Fischer, P., and Brox, T.
\newblock U-net: Convolutional networks for biomedical image segmentation.
\newblock In \emph{Lecture Notes in Computer Science}, pp.\  234--241. Springer
  International Publishing, 2015.
\newblock \doi{10.1007/978-3-319-24574-4_28}.

\bibitem[Schaipp et~al.(2023)Schaipp, Gower, and Ulbrich]{Schaipp2023}
Schaipp, F., Gower, R.~M., and Ulbrich, M.
\newblock A stochastic proximal {P}olyak step size.
\newblock \emph{Transactions on Machine Learning Research}, 2023.
\newblock ISSN 2835-8856.
\newblock URL \url{https://openreview.net/forum?id=jWr41htaB3}.
\newblock Reproducibility Certification.

\bibitem[Schmidt et~al.(2021)Schmidt, Schneider, and Hennig]{Schmidt2021}
Schmidt, R.~M., Schneider, F., and Hennig, P.
\newblock Descending through a crowded valley - benchmarking deep learning
  optimizers.
\newblock In Meila, M. and Zhang, T. (eds.), \emph{Proceedings of the 38th
  International Conference on Machine Learning}, volume 139 of
  \emph{Proceedings of Machine Learning Research}, pp.\  9367--9376. PMLR,
  18--24 Jul 2021.
\newblock URL \url{https://proceedings.mlr.press/v139/schmidt21a.html}.

\bibitem[Sebbouh et~al.(2021)Sebbouh, Gower, and Defazio]{Sebbouh2021}
Sebbouh, O., Gower, R.~M., and Defazio, A.
\newblock Almost sure convergence rates for stochastic gradient descent and
  stochastic heavy ball.
\newblock In Belkin, M. and Kpotufe, S. (eds.), \emph{Proceedings of Thirty
  Fourth Conference on Learning Theory}, volume 134 of \emph{Proceedings of
  Machine Learning Research}, pp.\  3935--3971. PMLR, 15--19 Aug 2021.
\newblock URL \url{https://proceedings.mlr.press/v134/sebbouh21a.html}.

\bibitem[Sharir et~al.(2020)Sharir, Peleg, and Shoham]{sharir2020cost}
Sharir, O., Peleg, B., and Shoham, Y.
\newblock The cost of training {NLP} models: A concise overview, 2020.

\bibitem[Simonyan \& Zisserman(2015)Simonyan and Zisserman]{Simonyan2015}
Simonyan, K. and Zisserman, A.
\newblock Very deep convolutional networks for large-scale image recognition.
\newblock In Bengio, Y. and LeCun, Y. (eds.), \emph{3rd International
  Conference on Learning Representations, {ICLR} 2015, San Diego, CA, USA, May
  7-9, 2015, Conference Track Proceedings}, 2015.
\newblock URL \url{http://arxiv.org/abs/1409.1556}.

\bibitem[Sun(2020)]{Sun2020}
Sun, R.-Y.
\newblock Optimization for deep learning: An overview.
\newblock \emph{J. Oper. Res. Soc. China}, 8\penalty0 (2):\penalty0 249--294,
  jun 2020.
\newblock \doi{10.1007/s40305-020-00309-6}.

\bibitem[Tien \& Chapelle(2014)Tien and Chapelle]{criteo-display-ad-challenge}
Tien, J.-B. and Chapelle, O.
\newblock Display advertising challenge, 2014.
\newblock URL
  \url{https://kaggle.com/competitions/criteo-display-ad-challenge}.

\bibitem[Vaswani et~al.(2017)Vaswani, Shazeer, Parmar, Uszkoreit, Jones, Gomez,
  Kaiser, and Polosukhin]{Vaswani2017}
Vaswani, A., Shazeer, N., Parmar, N., Uszkoreit, J., Jones, L., Gomez, A.~N.,
  Kaiser, L., and Polosukhin, I.
\newblock Attention is all you need.
\newblock In Guyon, I., Luxburg, U.~V., Bengio, S., Wallach, H., Fergus, R.,
  Vishwanathan, S., and Garnett, R. (eds.), \emph{Advances in Neural
  Information Processing Systems}, volume~30. Curran Associates, Inc., 2017.
\newblock URL
  \url{https://proceedings.neurips.cc/paper_files/paper/2017/file/3f5ee243547dee91fbd053c1c4a845aa-Paper.pdf}.

\bibitem[Vaswani et~al.(2019)Vaswani, Mishkin, Laradji, Schmidt, Gidel, and
  Lacoste{-}Julien]{Vaswani2019}
Vaswani, S., Mishkin, A., Laradji, I.~H., Schmidt, M., Gidel, G., and
  Lacoste{-}Julien, S.
\newblock Painless stochastic gradient: Interpolation, line-search, and
  convergence rates.
\newblock In \emph{Advances in Neural Information Processing Systems 32: Annual
  Conference on Neural Information Processing Systems 2019, NeurIPS 2019,
  December 8-14, 2019, Vancouver, BC, Canada}, pp.\  3727--3740, 2019.
\newblock URL
  \url{https://proceedings.neurips.cc/paper/2019/hash/2557911c1bf75c2b643afb4ecbfc8ec2-Abstract.html}.

\bibitem[Wang et~al.(2023)Wang, Johansson, and Zhang]{Wang2023}
Wang, X., Johansson, M., and Zhang, T.
\newblock Generalized {P}olyak step size for first order optimization with
  momentum.
\newblock In Krause, A., Brunskill, E., Cho, K., Engelhardt, B., Sabato, S.,
  and Scarlett, J. (eds.), \emph{Proceedings of the 40th International
  Conference on Machine Learning}, volume 202 of \emph{Proceedings of Machine
  Learning Research}, pp.\  35836--35863. PMLR, 23--29 Jul 2023.
\newblock URL \url{https://proceedings.mlr.press/v202/wang23l.html}.

\bibitem[Zhang et~al.(2019)Zhang, Wang, Xu, and Grosse]{Zhang2019}
Zhang, G., Wang, C., Xu, B., and Grosse, R.~B.
\newblock Three mechanisms of weight decay regularization.
\newblock In \emph{7th International Conference on Learning Representations,
  {ICLR} 2019, New Orleans, LA, USA, May 6-9, 2019}, 2019.

\bibitem[Zhuang et~al.(2020)Zhuang, Tang, Ding, Tatikonda, Dvornek,
  Papademetris, and Duncan]{Zhuang2020}
Zhuang, J., Tang, T., Ding, Y., Tatikonda, S., Dvornek, N.~C., Papademetris,
  X., and Duncan, J.~S.
\newblock Adabelief optimizer: Adapting stepsizes by the belief in observed
  gradients.
\newblock In Larochelle, H., Ranzato, M., Hadsell, R., Balcan, M., and Lin, H.
  (eds.), \emph{Advances in Neural Information Processing Systems 33: Annual
  Conference on Neural Information Processing Systems 2020, NeurIPS 2020,
  December 6-12, 2020, virtual}, 2020.
\newblock URL
  \url{https://proceedings.neurips.cc/paper/2020/hash/d9d4f495e875a2e075a1a4a6e1b9770f-Abstract.html}.

\bibitem[Zhuang et~al.(2022)Zhuang, Liu, Cutkosky, and Orabona]{Zhuang2022}
Zhuang, Z., Liu, M., Cutkosky, A., and Orabona, F.
\newblock Understanding {A}dam{W} through proximal methods and scale-freeness.
\newblock \emph{Transactions on Machine Learning Research}, 2022.
\newblock URL \url{https://openreview.net/forum?id=IKhEPWGdwK}.

\end{thebibliography}
\bibliographystyle{icml2024}

%%%%%%%%%%%%%%%%%%%%%%%%%%%%%%%%%%%%%%%%%%%%%%%%%%%%%%%%%%%%%%%%%%%%%%%%%%%%%%%
%%%%%%%%%%%%%%%%%%%%%%%%%%%%%%%%%%%%%%%%%%%%%%%%%%%%%%%%%%%%%%%%%%%%%%%%%%%%%%%
% APPENDIX
%%%%%%%%%%%%%%%%%%%%%%%%%%%%%%%%%%%%%%%%%%%%%%%%%%%%%%%%%%%%%%%%%%%%%%%%%%%%%%%
%%%%%%%%%%%%%%%%%%%%%%%%%%%%%%%%%%%%%%%%%%%%%%%%%%%%%%%%%%%%%%%%%%%%%%%%%%%%%%%
\newpage
\appendix
\onecolumn
\counterwithin{figure}{section}
\section{Implementation details}\label{sec:appendix-implementation-details}
\subsection{Notes on the Averaging Coefficients}\label{sec:appendix-averaging-coeffs}
\vspace{0.5ex}
\begin{lemma}\label{lem:averaging-coefficients}
    Let $\beta\in[0,1)$. Let $\rho_{1,1}=1$, and for $k \geq 2$ let
    \begin{align*}
        \rho_{j,k} = \begin{cases}\beta \rho_{j,k-1},\quad  &j\leq k-1, \\ 1-\beta,\quad  &j=k.\end{cases}
    \end{align*}
    Then, $\sum_{j=1}^{k} \rho_{j,k} =1$ holds for all $k\in \N$.
    Further, for an arbitrary sequence $(u_j)_{j\in \N} \subset \R^m$, $m\in \N$, consider the weighted sum \[\bar u_k := \sum_{j=1}^k \rho_{j,k} u_j.\]
    Then, if $\bar u_0:= u_1$ it holds $\bar u_k = (1-\beta)  u_k + \beta\bar u_{k-1}$ for all $k\in \N$.
\end{lemma}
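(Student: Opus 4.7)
The plan is to prove both claims by a short induction on $k$, using the piecewise definition of $\rho_{j,k}$ to split off the $j=k$ term and then recognize the remaining sum as $\beta$ times the previous weighted quantity. The key structural observation is that for $k\geq 2$ the recursion $\rho_{j,k}=\beta\rho_{j,k-1}$ for $j\leq k-1$ means any weighted sum of the form $\sum_{j=1}^{k-1}\rho_{j,k}(\cdot)$ is exactly $\beta$ times the analogous sum with coefficients $\rho_{j,k-1}$. This single fact drives both parts of the lemma.

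For the first claim, I would induct on $k$. The base case $k=1$ is immediate since $\rho_{1,1}=1$. For the inductive step, I would write
\begin{align*}
\sum_{j=1}^{k}\rho_{j,k}=\rho_{k,k}+\sum_{j=1}^{k-1}\rho_{j,k}=(1-\beta)+\beta\sum_{j=1}^{k-1}\rho_{j,k-1}=(1-\beta)+\beta\cdot 1=1,
\end{align*}
using the inductive hypothesis in the penultimate step.

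For the second claim, I would again split off the top index. The case $k=1$ needs a small sanity check: by definition $\bar u_1=\rho_{1,1}u_1=u_1$, while the recursion gives $(1-\beta)u_1+\beta\bar u_0=(1-\beta)u_1+\beta u_1=u_1$, consistent with the convention $\bar u_0=u_1$. For $k\geq 2$, I would compute
\begin{align*}
\bar u_k=\sum_{j=1}^{k}\rho_{j,k}u_j=\rho_{k,k}u_k+\sum_{j=1}^{k-1}\rho_{j,k}u_j=(1-\beta)u_k+\beta\sum_{j=1}^{k-1}\rho_{j,k-1}u_j=(1-\beta)u_k+\beta\bar u_{k-1},
\end{align*}
which is the desired recursion.

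There is no real obstacle here; the only minor subtlety is handling the $k=1$ case for the recursion, which requires invoking the convention $\bar u_0:=u_1$ rather than an inductive hypothesis. Everything else follows mechanically from the defining relation $\rho_{j,k}=\beta\rho_{j,k-1}$ for $j<k$ together with $\rho_{k,k}=1-\beta$.
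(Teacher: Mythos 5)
Your proposal is correct and follows essentially the same route as the paper's proof: induction on $k$ for the normalization $\sum_j \rho_{j,k}=1$, and for the recursion, splitting off the $j=k$ term and using $\rho_{j,k}=\beta\rho_{j,k-1}$ to pull out a factor of $\beta$ from the remaining sum. Your explicit check of the $k=1$ case against the convention $\bar u_0:=u_1$ is a small additional verification that the paper handles implicitly via $\bar u_1=\rho_{1,1}u_1=u_1$.
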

\begin{proof}
    We prove that $\sum_{j=1}^{k} \rho_{j,k} =1$ holds for all $k\in \N$ by induction. For the base case $k=1$, we have $\rho_{1,1} =1$ by definition. Assuming that  $\sum_{j=1}^{k-1} \rho_{j,k-1} =1$, we have
    \[ \sum_{j=1}^k \rho_{j,k} = \rho_{k,k}+ \sum_{j=1}^{k-1} \rho_{j,k} = 1-\beta + \beta \sum_{j=1}^{k-1} \rho_{j,k-1} = 
    1-\beta + \beta =1.\]
    % where we used the induction hypothesis $\sum_{j=1}^{k-1} \rho_{j,k-1} =1.$
    Consequently, we have $\bar u_1 =\rho_{11} u_1 = u_1$, and for $ k \geq 2$,
    \begin{align*}
        \bar u_k  &= \sum_{j=1}^k \rho_{j,k} u_j = (1-\beta) u_k + \sum_{j=1}^{k-1} \beta\rho_{j,k-1} u_j = (1-\beta) u_k + \beta \sum_{j=1}^{k-1}\rho_{j,k-1} u_j \\
        &= (1-\beta)  u_k + \beta\bar u_{k-1}.
    \end{align*}
\end{proof}
For the choice of $\rho_{j,k}$ in \cref{lem:averaging-coefficients}, unrolling the recursion, for $k\geq 2$ we obtain the explicit formula
\begin{align}\label{eq:rho-formula-expavg}
    \rho_{j,k} = \begin{cases} (1-\beta)\beta^{k-j},~ &j\geq 2\\ \beta^{k-1},~ &j=1. \end{cases}
\end{align}

\paragraph{Averaging with Bias Correction.}
Chosing $\rho_{j,k} = (1-\beta)\beta^{k-j}$, we have $\rho_{j,k}=\beta \rho_{j,k-1}$, and $\rho_{k,k} = 1-\beta$. Hence, we can update $\bar f_k = (1-\beta) f(x^k,s_k) + \beta \bar f_{k-1}$ and analogously for $d_k,\gamma_k$.
However, this choice does not satisfy $\sum_{j=1}^k \rho_{j,k} =1$. Indeed using the geometric series gives
$$
   \rho_k=(1-\beta) \sum_{j=0}^{k-1} \beta^j = 1-\beta^k.
$$
This fact motivates scaling by the factor of $1-\beta^k$ which was termed \textit{debiasing} in \texttt{Adam}. 
This alternative averaging scheme leads to a variant of \Momo{} with bias correction, presented in \cref{alg:momo_debiased}. As the two presented choices of $\rho_{j,k}$ are very similar, we do not expect major differences in their performance (cf.\ \cref{rem:momo-vs-momo-bias}). 
% However, we will use this averaging scheme in the next section when we derive \MomoAdam{} to be consistent with the notation used in \Adam{}.
% Below we present a version of \Momo{} when we use the averaging scheme 
% $\rho_{j,k} = (1-\beta)\beta^{k-j}$ for $j=1,\ldots, k$.
%%%%%%%%%%%%%%%%%%%%%%%%%%%%
%%%%%%%%%%%%%%%%%%%%%%%%%%%%%%%%%%%%%%%%
\begin{algorithm}[h]%[tb]
    \caption{\texttt{MoMo-Bias}: Model-based Momentum with bias correction. \\ Defaults settings $\beta =0.9$. }
    \label{alg:momo_debiased}
    \begin{algorithmic}[1]
    \STATE {\bfseries Input:}$x^1 \in \R^d$, $\beta \in [0,\;1)$, $\alpha_k>0$, $(\lb{k})_{k\in \N} \subset\R$.
    \STATE {\bfseries Initialize:} $\bar{f}_0 = 0, d_0 =0$ and $\gamma_0 =0 $.
    \FOR{$k=1$ {\bfseries to} $K-1$}
        \STATE $ \displaystyle \bar{f}_k  \; = (1-\beta) f(x^k, s_k) +  \beta\bar{f}_{k-1}  $  
        
      \STATE $ \displaystyle  d_k \;=   (1-\beta) \nabla f(x^k,s_k)  + \beta d_{k-1} $  \label{ln:dup-bias}
         
    \STATE $ \displaystyle \gamma_k =(1-\beta)\dotprod{\nabla f(x^k,s_k), x^k} + \beta\gamma_{k-1} $
     
      \STATE $ \displaystyle  x^{k+1} = x^k - \min \Big\{\frac{\alpha_k}{1-\beta^k}, \frac{(\bar{f}_k - (1-\beta^k)\lb{k} + \dotprod{d_k, x^k} - \gamma_k  )_+}{\|d_k\|^2} \Big\} d_k. \label{ln:xup-bias}$ 
    \ENDFOR
    \end{algorithmic}
\end{algorithm}
\begin{remark}\label{rem:momo-vs-momo-bias}
\cref{alg:momo_debiased} differs from \cref{alg:momo} only in two steps: first, the quantities $\bar f_0,~d_0,~\gamma_0$ are initialized at zero. Secondly, we use $\frac{\alpha_k}{1-\beta^k}$ instead of $\alpha_k$ and $(1-\beta^k)\lb{k} $ instead of $\lb{k} $ in line \eqref{ln:xup-bias}. As $\beta\in[0,1)$, for late iteration number $k$, we can expect that both methods behave very similarly.
\end{remark}
%%%%%%%%%%%%%%%%
\subsection{Comparison of \MomoAdam{} to \texttt{AdamW}}
\cref{alg:momo-adam} naturally compares to \texttt{AdamW} \citep{Loshchilov2019}. Note that the update of \texttt{AdamW} (in the notation of \cref{alg:momo-adam}) can be written as
\begin{align*}
    x^{k+1} = (1-\alpha_k\lambda)x^k - \frac{\alpha_k}{1-\beta_1^k} \mD_k^{-1}d_k,
\end{align*}
Compared to \cref{alg:momo-adam}, line \ref{eq:momo-adam-xup}, the weight decay of \texttt{AdamW} is not done dividing the whole expression by $\frac{1}{1+\alpha_k\lambda}$, but instead multiplying only $x^k$ with $1-\alpha_k\lambda$. This is a first-order Taylor approximation \citep{Zhuang2022}: for $\alpha$ small it holds $\frac{1}{1+\alpha\lambda} \approx 1-\alpha\lambda$ and $\frac{\alpha}{1+\alpha\lambda} \approx \alpha$.
If we would want to adapt this approximation, we could replace line \ref{eq:momo-adam-xup} with
\begin{align} \label{eq:momo-adam-xup-approx}
    x^{k+1} = (1-\lambda \alpha_k) x^k - \min \Big\{\frac{\alpha_k}{1-\beta_1^k}, \frac{\big((1+\lambda \alpha_k)(\bar{f}_k -(1-\beta_1^k)\lb{k} - \gamma_k)  + \dotprod{d_k, x^k} \big)_+}{\|d_k\|_{\mD_k^{-1}}^2} \Big\} \mD_k^{-1} d_k.
\end{align}
However, the results of \citep{Zhuang2022} suggest that this approximation has almost no impact on the empirical performance.
\subsection{\Momo{}$^*$}
Here we give the complete pseudocode for \Momo{}$^*$, that is the \Momo{} method that uses the estimator for $\lb{k}$ given in \cref{lem:fstar}.
\begin{algorithm}[h]%[tb]
    \caption{\Momo{}$^*$: Adaptive learning rates and online estimation of $f^*$. }
    \label{alg:momo-star}
    \begin{algorithmic}[1]
    \STATE {\bfseries Input:} $x^1 \in \R^d$, $\beta \in [0,\;1)$, $\alpha_k>0, \lb{1} \subset\R$.
    \STATE {\bfseries Initialize:} $\bar{f}_0 = f(x^1,s_1), d_0 =\nabla f(x^1,s_1)$ and $\gamma_0 =\dotprod{d_0, x^1} $
    \FOR{$k=1$ {\bfseries to} $K-1$}
    \STATE $ \displaystyle \bar{f}_k  \; = (1-\beta) f(x^k, s_k) +  \beta\bar{f}_{k-1}$
    \STATE $\gamma_k =(1-\beta)\dotprod{\nabla f(x^k,s_k), x^k} + \beta\gamma_{k-1}$ 
    \STATE $ \displaystyle  d_k \;=   (1-\beta) \nabla f(x^k,s_k)  + \beta d_{k-1} $  
    \STATE $\displaystyle \lb{k}=\texttt{ResetStar}()$
    \STATE $ \displaystyle  x^{k+1} = x^k - \min \Big\{\alpha_k, \frac{(\bar{f}_k - \lb{k} + \dotprod{d_k, x^k} - \gamma_k  )_+}{\|d_k\|^2} \Big\} d_k$ 
    \STATE $\displaystyle \lb{k+1}=\texttt{EstimateStar}().$
    \ENDFOR
    \RETURN $x^K$
    \end{algorithmic}
\end{algorithm}
%
%%%%%%%%%%%%%%%%%%%%%%%%%%%%%%%%%%%%%%%%%%%%%%%
\clearpage
\section{Auxiliary Lemmas}
\vspace{3ex}
\begin{lemma} \label{lem:euclidean-truncated-update}
Let $y_0,a\in \R^p$ with $a\neq0$ and $c\in \R$. Let $\beta > 0$. The solution to 
\begin{align} \label{prob:max-of-lin-update}
    y^+ = \arg \min_y \underbracket{\Big(c+\langle a, y-y_0\rangle\Big)_+}_{=:h(y)}  + \frac{1}{2\beta} \|y-y_0\|^2
\end{align}
is given by 
\[y^+ = y_0 - \underbracket{\min \big\{\beta, \frac{(c)_+}{\|a\|^2}\big\}}_{=:\tau}a.\]
Moreover we have $h(y^+) = \big(c-\tau \|a\|^2\big)_+$ and
\begin{align}\label{eqn:max-of-lin-val}
    h(y^+) =
    c-\tau \|a\|^2, \quad &\text{if } c\geq 0.
\end{align}
\end{lemma}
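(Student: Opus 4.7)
The plan is to exploit the obvious convexity and the fact that the objective depends on $y$ only through $\langle a, y-y_0\rangle$ and $\|y-y_0\|^2$. Writing $y - y_0 = t\, a/\|a\|^2 + w$ with $w \perp a$, the first term of \eqref{prob:max-of-lin-update} depends only on $t$ and the quadratic splits as $t^2/\|a\|^2 + \|w\|^2$, so any minimizer has $w = 0$. This reduces the problem to the scalar minimization of
$$\phi(t) := (c+t)_+ + \frac{t^2}{2\beta \|a\|^2}.$$

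Next I would solve this 1D problem by splitting into the pieces $t \le -c$ (where the hinge is inactive) and $t \ge -c$ (where it equals $c+t$). On the active branch the unique interior stationary point is $t = -\beta\|a\|^2$, which lies in the correct region iff $c \ge \beta\|a\|^2$; on the inactive branch the smooth piece $t^2/(2\beta\|a\|^2)$ is minimized on $(-\infty,-c]$ at $t = \min\{0,-c\}$. Comparing these candidates across the three regimes $c\le 0$, $0<c<\beta\|a\|^2$, and $c\ge\beta\|a\|^2$ gives the uniform formula
$$t^\star \;=\; -\min\bigl\{\beta\|a\|^2,\,(c)_+\bigr\} \;=\; -\tau \|a\|^2,$$
with $\tau$ as in the statement. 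Translating back via $y^+ - y_0 = (t^\star/\|a\|^2)\,a$ yields $y^+ = y_0 - \tau a$.

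To finish, evaluating $h$ at $y^+$ gives $h(y^+) = (c - \tau\|a\|^2)_+$ directly from $\langle a, y^+ - y_0\rangle = -\tau\|a\|^2$. For \eqref{eqn:max-of-lin-val}, when $c \ge 0$ we have $\tau \le (c)_+/\|a\|^2 = c/\|a\|^2$, so $c - \tau\|a\|^2 \ge 0$ and the positive part is redundant.

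The only real obstacle is the case bookkeeping: one must confirm that the interior stationary point is actually feasible for the active branch and that the constrained optimum on the inactive branch is the boundary $t = -c$ (or $t=0$ when $c\le 0$), and then check that all three cases collapse into $\min\{\beta, (c)_+/\|a\|^2\}$. A cleaner alternative that sidesteps some of this is subdifferential calculus: optimality reads $0 \in \lambda a + \tfrac{1}{\beta}(y-y_0)$ with $\lambda\in\{0\}$, $\{1\}$, or $[0,1]$ depending on the sign of $c+\langle a,y-y_0\rangle$, giving $y = y_0 - \beta\lambda a$; solving the complementarity condition for $\lambda$ reproduces $\beta\lambda = \tau$ and furnishes the same answer in one shot.
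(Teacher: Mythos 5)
Your proof is correct, and the main argument takes a genuinely different route from the paper's. You first exploit the structure of the objective by the orthogonal decomposition $y - y_0 = t\,a/\|a\|^2 + w$ with $w \perp a$, observing that the hinge term depends only on $t$ and the quadratic splits additively, so $w = 0$ at the optimum; this reduces everything to the scalar convex problem $\phi(t) = (c+t)_+ + t^2/(2\beta\|a\|^2)$, which you then solve by case analysis on the two branches. The paper instead works directly in $\R^p$ and writes the first-order optimality condition $0 = t a + \beta^{-1}(y^+ - y_0)$ with $t \in \partial(\cdot)_+(c + \langle a, y^+ - y_0\rangle)$, then splits into the three cases $c<0$, $c/\|a\|^2 > \beta$, and the intermediate case. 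Your 1D reduction buys a more geometric and elementary picture at the cost of slightly more bookkeeping (you must confirm the stationary point of the active branch is feasible, and where the constrained minimum sits on the inactive branch); the paper's subdifferential argument is more compact but requires guessing the candidate minimizer in each regime and verifying it satisfies the optimality inclusion. Notably, the ``cleaner alternative'' you sketch at the end --- solving $0 \in \lambda a + \beta^{-1}(y - y_0)$ with $\lambda$ in the subdifferential of $(\cdot)_+$ and then resolving the complementarity condition --- is essentially the paper's proof, so you have in effect identified both approaches; either one stands on its own.
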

\begin{proof}
Clearly, the objective of \eqref{prob:max-of-lin-update} is strongly convex and therefore there exists a unique solution. The (necessary and sufficient) first-order optimality condition is given by
\begin{align}\label{eqn:foc-update}
    0=ta+\beta^{-1}(y^+-y_0), \quad t\in \partial(\cdot)_+(c+\langle a, y^+-y_0\rangle).
\end{align}
We distinguish three cases:
\begin{enumerate}[label=(P\arabic*)]
    \item Suppose $c< 0$. Then, $y_0$ satisfies \eqref{eqn:foc-update} with $t=0$ and hence $y^+=y_0$. In this case $\tau=0$ and $h(y^+)=0=(c)_+$. 
    \item Let $\bar y := y_0-\beta a$ and assume $c+\iprod{a}{\bar y-y_0} > 0 \iff c - \beta \|a\|^2 > 0 \iff \frac{c}{\|a\|^2} > \beta$.  Then $\bar y$ satisfies \eqref{eqn:foc-update} with $t=1$ and hence $y^+=\bar y$. As $\beta>0$, hence $c> 0$ and $\tau=\beta$. As $h(y^+)= c+\iprod{a}{y^+-y_0} =c- \beta \|a\|^2 $, equation \eqref{eqn:max-of-lin-val} holds.
    \item If neither $c< 0$ nor $\frac{c}{\|a\|^2} > \beta$ hold, then it must hold $c+\iprod{a}{y^+-y_0} = 0$. Then, the optimality condition is $0=ta+\beta^{-1}(y^+-y_0)$ for some $t\in[0,1]$. Hence, $y^+ = y_0 - t\beta a$ and $c+\iprod{a}{y^+-y_0} = c- t\beta \|a\|^2 = 0 \iff t=\frac{c}{\beta\|a\|^2}$. As $c\geq 0$ we have $t\geq 0$ and  $\frac{c}{\|a\|^2}\leq \beta$ implies $t\leq 1$. Hence, $\tau=\frac{c}{\|a\|^2}$ and $c-\tau\|a\|^2= c-c=0$, so \eqref{eqn:max-of-lin-val} holds.
\end{enumerate}
\end{proof}

\begin{lemma} \label{lem:max-of-lin-update-norm-we}
Let $y_0,~a\in \R^p$ with $a\neq0$ and $c\in \R$. Let $\mathbf{D} \in \R^{p\times p}$ be a symmetric, positive definite matrix. The solution to 
\begin{align} \label{prob:max-of-lin-update-norm-we}
    y^+ = \argmin{y \in \R^p} \underbracket{\Big(c+\langle a, y-y_0\rangle\Big)_+}_{=:h(y)}  +~ \frac{1}{2\alpha} \|y-y_0\|_\mathbf{D}^2 + \frac{\lambda}{2} \| y\|_\mathbf{D}^2
\end{align}
is given by 
\[y^+ =\frac{1}{1+\lambda\alpha} \Bigg[ y_0 -\underbracket{\min \Big\{\alpha, \frac{\big((1+\lambda\alpha) c-\lambda\alpha \dotprod{a,y_0}\big)_+}{\|a\|_{\mathbf{D}^{-1}}^2}\Big\}}_{=:\tau}  \mathbf{D}^{-1}a \Bigg].\]
Furthermore
\[h(y^+) = \Big(c- \frac{\lambda\alpha}{1+\lambda\alpha}\langle a, y_0\rangle -\frac{\tau}{1+\lambda\alpha} \norm{a}_{\mathbf{D}^{-1}}^2\Big)_+.\]
\end{lemma}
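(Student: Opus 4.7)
The plan is to reduce this to the Euclidean case already treated in Lemma~\ref{lem:euclidean-truncated-update} by a change of variables that diagonalizes the $\mathbf{D}$-norm, followed by completing the square to absorb the weight-decay term into a shifted quadratic penalty. Concretely, I would substitute $z = \mathbf{D}^{1/2} y$ and $z_0 = \mathbf{D}^{1/2} y_0$, so that $\tfrac{1}{2\alpha}\|y-y_0\|_\mathbf{D}^2 = \tfrac{1}{2\alpha}\|z-z_0\|^2$ and $\langle a, y-y_0\rangle = \langle \mathbf{D}^{-1/2}a, z-z_0\rangle$. After this change, $\tfrac{\lambda}{2}\|y\|_\mathbf{D}^2 = \tfrac{\lambda}{2}\|z\|^2$, and the problem becomes Euclidean in $z$.

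Next I would complete the square on the two quadratic terms: a direct computation gives
\begin{equation*}
\tfrac{1}{2\alpha}\|z-z_0\|^2 + \tfrac{\lambda}{2}\|z\|^2
\;=\; \tfrac{1+\lambda\alpha}{2\alpha}\Bigl\|z - \tfrac{z_0}{1+\lambda\alpha}\Bigr\|^2 + \text{const},
\end{equation*}
so setting $\tilde z_0 := z_0/(1+\lambda\alpha)$ and $\tilde\alpha := \alpha/(1+\lambda\alpha)$ recasts the problem in the form covered by Lemma~\ref{lem:euclidean-truncated-update}. The piecewise-linear term must then be rewritten around the new center $\tilde z_0$: using $\tilde z_0 - z_0 = -\tfrac{\lambda\alpha}{1+\lambda\alpha} z_0$ gives $c + \langle \mathbf{D}^{-1/2}a, z - z_0\rangle = \tilde c + \langle \mathbf{D}^{-1/2}a, z-\tilde z_0\rangle$ with $\tilde c := c - \tfrac{\lambda\alpha}{1+\lambda\alpha}\langle a, y_0\rangle$.

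Applying Lemma~\ref{lem:euclidean-truncated-update} to the reformulated problem with parameters $(\tilde z_0, \mathbf{D}^{-1/2}a, \tilde c, \tilde\alpha)$ yields $z^+ = \tilde z_0 - \tilde\tau\, \mathbf{D}^{-1/2}a$ with $\tilde\tau = \min\{\tilde\alpha, (\tilde c)_+/\|\mathbf{D}^{-1/2}a\|^2\}$. Multiplying through by $\mathbf{D}^{-1/2}$ transforms this back into $y^+ = \tfrac{1}{1+\lambda\alpha}[y_0 - \tau \mathbf{D}^{-1}a]$ where $\tau = (1+\lambda\alpha)\tilde\tau$. Distributing the factor $(1+\lambda\alpha)$ inside the min and using $(1+\lambda\alpha)(\tilde c)_+ = ((1+\lambda\alpha)c - \lambda\alpha\langle a, y_0\rangle)_+$ (valid since $1+\lambda\alpha > 0$) reproduces the stated formula for $\tau$. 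Finally, $h(y^+)$ is obtained by a direct substitution: computing $y^+ - y_0 = -\tfrac{\lambda\alpha}{1+\lambda\alpha} y_0 - \tfrac{\tau}{1+\lambda\alpha}\mathbf{D}^{-1}a$ and plugging into $(c + \langle a, y^+-y_0\rangle)_+$ yields the claimed expression.

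No step is really an obstacle here; the only part that requires care is bookkeeping the scaling by $(1+\lambda\alpha)$ when pulling it in and out of the min and the $(\cdot)_+$, and I would verify this by checking the two regimes (active versus inactive truncation) produce consistent formulas with Lemma~\ref{lem:euclidean-truncated-update}.
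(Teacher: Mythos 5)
Your proposal is correct and is essentially the same argument as the paper's: both complete the square to absorb the weight-decay term into a shifted quadratic with effective step size $\alpha/(1+\lambda\alpha)$, rewrite the affine part of the hinge around the shifted center to get $\tilde c = c - \tfrac{\lambda\alpha}{1+\lambda\alpha}\langle a, y_0\rangle$, change variables $z = \mathbf{D}^{1/2}y$ to reduce to the Euclidean Lemma~\ref{lem:euclidean-truncated-update}, and undo the change of variables. The only cosmetic difference is the order of the two commuting reductions (you change variables before completing the square, the paper completes the square in the $\mathbf{D}$-norm first); your bookkeeping of the $(1+\lambda\alpha)$ factor in and out of $\min$ and $(\cdot)_+$ and your direct computation of $h(y^+)$ are both correct.
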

\begin{proof}
First we complete the squares as follows 
\begin{align*}
    \frac{\lambda}{2}\|y\|_{\mathbf{D}}^2 + \frac{1}{2\alpha}\|y-y_0\|_{\mathbf{D}}^2 &= \frac{1}{2\alpha}\|y\|^2_{(1+\lambda\alpha)\mathbf{D}} - \frac{1}{\alpha}\iprod{y}{\mathbf{D} y_0} + \mathrm{cst}.(y)  \\
    &= \frac{1}{2\alpha}\|y\|^2_{(1+\lambda\alpha)\mathbf{D}} - \frac{1}{\alpha}\iprod{y}{(1+\lambda\alpha)\mathbf{D}\tfrac{y_0}{1+\lambda\alpha}} + \mathrm{cst}.(y)  \\
    &=\frac{1}{2\alpha} \|y-\tfrac{1}{1+\lambda\alpha}y_0\|^2_{(1+\lambda\alpha)\mathbf{D}} + \mathrm{cst}.(y),
\end{align*}
where $\mathrm{cst}.(y)$ denotes terms that are constant in $y$.
Using the above, \eqref{prob:max-of-lin-update-norm-we} is equivalent to
\begin{align*}
    y^+ &= \argmin{y \in \R^p} h(y) + \frac{1}{2\alpha}\|y-\tfrac{1}{1+\lambda\alpha}y_0\|_{(1+\lambda\alpha)\mathbf{D}}^2 \\
    & =\argmin{y \in \R^p} \left( c + \iprod{a}{ y-\tfrac{1}{1+\lambda\alpha}y_0} +\left(\tfrac{1}{1+\lambda\alpha}-1\right) \dotprod{a,y_0}\right)_+  + \frac{1}{2\alpha}\|y-\tfrac{1}{1+\lambda\alpha}y_0\|_{(1+\lambda\alpha)\mathbf{D}}^2.
\end{align*}
Let $\hat{c} := c+ \left(\tfrac{1}{1+\lambda\alpha}-1\right) \dotprod{a,y_0}
= c-\tfrac{\lambda\alpha}{1+\lambda\alpha} \dotprod{a,y_0}.$
With this definition, problem \eqref{prob:max-of-lin-update-norm-we} is equivalent to
\begin{align*}
    y^+ 
    & =\argmin{y \in \R^p} \left( \hat{c} + \iprod{a}{ y-\tfrac{1}{1+\lambda\alpha}y_0} \right)_+  + \frac{1}{2\alpha}\|y-\tfrac{1}{1+\lambda\alpha}y_0\|_{(1+\lambda\alpha)\mathbf{D}}^2.
\end{align*}
Changing variables with $z^+=\mathbf{D}^{1/2}y^+$, $z = \mathbf{D}^{1/2}y$, and $z_0 = \mathbf{D}^{1/2} y_0$ gives
\begin{align*}
    z^+ 
    & =\argmin{z \in \R^p} \left( \hat{c} + \iprod{\mathbf{D}^{-1/2}a}{ z-\tfrac{1}{1+\lambda\alpha}z_0} \right)_+  + \frac{(1+\lambda\alpha)}{2\alpha}\|z-\tfrac{1}{1+\lambda\alpha}z_0\|^2.
\end{align*}
Applying \cref{lem:euclidean-truncated-update} with 
$y_0 \leftarrow \tfrac{1}{1+\lambda\alpha} z_0,\; c \leftarrow \hat{c},\; a \leftarrow \mathbf{D}^{-1/2}a, \beta \leftarrow \frac{\alpha}{1+\lambda \alpha}$
gives
\[z^+ = \frac{1}{1+\lambda\alpha} z_0 - \min \big\{\frac{\alpha}{1+\lambda \alpha}, \frac{(\hat{c})_+}{\|a\|_{\mathbf{D}^{-1}}^2}\big\}\mathbf{D}^{-1/2}a.\]
Changing variables back using $y^+= \mathbf{D}^{-1/2}z^+$, substituting $\hat{c} 
= c-\tfrac{\lambda\alpha}{1+\lambda\alpha} \dotprod{a,y_0}$ and re-arranging the above gives
\begin{align}
    y^{+} &=  \frac{1}{1+\lambda\alpha} y_0 -\min \Big\{\frac{\alpha}{1+\lambda \alpha}, \frac{\big(c-\tfrac{\lambda\alpha}{1+\lambda\alpha} \dotprod{a,y_0}\big)_+}{\|a\|_{\mathbf{D}^{-1}}^2}\Big\}\mathbf{D}^{-1}a \nonumber \\
    &= \frac{1}{1+\lambda\alpha} \Bigg[ y_0 -\min \Big\{\alpha, \frac{\big((1+\lambda\alpha) c-\lambda\alpha \dotprod{a,y_0}\big)_+}{\|a\|_{\mathbf{D}^{-1}}^2}\Big\}\mathbf{D}^{-1}a \Bigg].
\end{align}
\end{proof}

%%%%%%%%%%%%%%%%%%%%%%%%%%%%%%%%%%%%%%%%%%%%%%%
\section{Missing Proofs} \label{sec:missingproofs}

\subsection{Proof of \Cref{lem:update} } \label{sec:proof-update-lemma-momo}

\vspace{2ex}
%%%%%%%%%%%%
\lemupdate*
%%%%%%%%%%%%
\begin{proof}
Recall problem \eqref{eq:modelbasedVI} given by
\begin{align*}
    x^{k+1} = \argmin{y \in \R^d} m_k(y) + \frac{1}{2\alpha_k}\|y-x^k\|^2.
\end{align*}
Introducing
\begin{align}\label{eq:def-hk}
    h_k \eqdef \sum_{j=1}^k \rho_{j,k} [f(x^j, s_j) + \iprod{\nabla f(x^j,s_j)}{x^k-x^j}] = \bar f_k + \iprod{d_k}{x^k} - \gamma_k,
\end{align}
we have that
\begin{align}\label{eq:model-simplified}
m_k(y) = \max\Big\{\rho_k^{-1}(h_k+ \iprod{d_k}{y-x^k}), \lb{k} \Big\} = \Big(\rho_k^{-1}(h_k  + \iprod{d_k}{y-x^k} )-\lb{k}\Big)_+ +  \lb{k}. 
\end{align}
Using \eqref{eq:model-simplified}, dropping the constant term $\lb{k} $, and multiplying with $\rho_k$, problem \eqref{eq:modelbasedVI} is equivalent to 
\begin{align*}
    x^{k+1} = \argmin{y \in \R^d} \Big(h_k  + \iprod{d_k}{y-x^k}- \rho_k\lb{k}\Big)_+ + \frac{\rho_k}{2\alpha_k}\|y-x^k\|^2.
\end{align*}
Applying \cref{lem:euclidean-truncated-update} with $\beta\leftarrow \rho_k^{-1}\alpha_k$, $c\leftarrow h_k - \rho_k\lb{k} $, $a\leftarrow d_k$ and $y_0 \leftarrow x^k$ gives the result.
\end{proof}

\subsection{Proof of \Cref{lem:momo-general-update} } \label{sec:proof-update-lemma-general-momo}

\vspace{2ex}
%%%%%%%%%%%%
\lemmomogeneralupdate*
%%%%%%%%%%%%
\begin{proof}
Recall problem \eqref{eq:adaptive-model-update-wd} given by
\begin{align*}
    x^{k+1} = \argmin{y \in \R^d} m_k(y)  + \frac{1}{2\alpha_k}\|y-x^k\|_{\mD_k}^2+ \frac{\lambda}{2}\|y\|_{\mD_k}^2.
\end{align*}
We use again \eqref{eq:model-simplified}. Dropping the constant term $\lb{k} $, and multiplying with $\rho_k$, problem \eqref{eq:adaptive-model-update-wd} is equivalent to
\begin{align*}
    x^{k+1} = \argmin{y \in \R^d} \Big(h_k  + \iprod{d_k}{y-x^k}- \rho_k\lb{k}\Big)_+ + \frac{\rho_k}{2\alpha_k}\|y-x^k\|_{\mD_k}^2+ \frac{\rho_k \lambda}{2}\|y\|_{\mD_k}^2.
\end{align*}
Now applying \cref{lem:max-of-lin-update-norm-we}
with $y_0 \leftarrow x^k$, $a\leftarrow d_k$, $c\leftarrow h_k-\rho_k\lb{k} $, $\lambda \leftarrow \rho_k\lambda $, $\alpha \leftarrow \rho_k^{-1}\alpha_k$ and $\mathbf{D}\leftarrow \mD_k$, we obtain the result.
\end{proof}

\clearpage
\section{Estimating a Lower Bound: Proofs and Alternatives} 

\subsection{Proof of \Cref{lem:fstar} }\label{sec:proof-lower-bound}

%%%%%%%%%%%%
\vspace{2ex}
\lemfstar*
%%%%%%%%%%%%
% COMMENTED OUT THE FOLLOWING REMARK FOR NOW
% \begin{remark}
%     The rule~\eqref{eq:fstarconvexD-recur} for setting $\lb{k+1}$ can be interpreted as rule for avoiding using a constant stepsize.

% Indeed, if we want the adaptive stepsize to always be used, from~\eqref{eq:momo-update} we would require that
% \[\frac{\alpha_k}{\rho_k} \geq  \frac{\big(h_k - \rho_k \lb{k} \big)_+}{\|d_k\|^2}.  \]
%     Assuming for now that  $\rho_k \lb{k} \leq h_k $ so we can drop the positive part and isolating  $\lb{k}$  gives
%     \[\lb{k} = \frac{h_k}{\rho_k}-\frac{\alpha_k \|d_k\|^2}{\rho_k}. \]
%     Note that indeed $\rho_k \lb{k} \leq h_k .$ 
% This suggests a simple rule where whenever $\tau_k =\frac{\alpha_k}{\rho_k}$ is the learning rate, we update $\lb{k}$ so that the adaptive term \emph{would have} been used, that is
%     \[\lb{k+1} = \frac{h_k}{\rho_k}-\frac{\tau_k \|d_k\|^2}{\rho_k}. \]
%     This is now very similar to~\eqref{eq:fstarconvexD-recur}, with the exception that~\eqref{eq:fstarconvexD-recur} is less aggressive in decreasing $\lb{k+1}$
% \end{remark}
\begin{proof}
%Consider the update~\eqref{eq:momo-general-upd} without weight decay, that is $\lambda=0$, and switching the index $k\to j$, which is
Switching the index $k\to j$ in the update rule, we have
\[        x^{j+1} = x^j - \tau_j \mD_j^{-1}d_j ,\]
where $\tau_j$ is the step size. Subtracting $x^*$ from both sides, taking norms and expanding the squares we have that
\begin{align}   
\norm{x^{j+1} -x^*}_{\mD_j}^2 &= \norm{x^j -x^*}_{\mD_j}^2- 2\tau_j \dotprod{d_j,x^j -x^*} +\tau_j^2\norm{d_j}_{\mD_j^{-1}}^2. \label{eq:zo84ho8hz} 
\end{align}
Now let $\delta_{j+1} := \lambda_{\min} \big(\mD_{j+1}^{-1}  \mD_{j} \big)$ and  note that for every vector $v\in\R^d$ we have that
\begin{align}
 \delta_{j+1}\left\Vert v\right\Vert ^{2}_{\mD_{j+1}}    \leq  \left\Vert v\right\Vert ^{2}_{\mD_j}.\label{eq:ten;px9hp9xt}
\end{align}
Indeed this follows since
\begin{align*}
    \left\Vert v\right\Vert ^{2}_{\mD_j} &= v^\top  \mD_j v =   v^\top \mD_{j+1}^{1/2} \big(\mD_{j+1}^{-1/2}  \mD_{j} \mD_{j+1}^{-1/2} \big) \mD_{j+1}^{1/2}v \\
    &\geq \lambda_{\min} \big(\mD_{j+1}^{-1}  \mD_{j} \big) \norm{v}_{\mD_{j+1}}^2 =  \delta_{j+1} \norm{v}_{\mD_{j+1}}^2.  
\end{align*}

For simplicity, denote $\nabla f_l = \nabla f(x^l,s_l), f_l = f(x^l,s_l)$. We have that
\begin{align}
   \dotprod{d_j,x^j-x^*} &= \sum_{l=1}^j \rho_{l,j} \dotprod{\nabla f_l,x^j-x^*}\nonumber \\
    & = \sum_{l=1}^j \rho_{l,j}\left( \dotprod{\nabla f_l,x^j-x^l} +\dotprod{\nabla f_l,x^l-x^*}\right) \nonumber \\
    &\geq  \sum_{l=1}^j \rho_{l,j}\left( \dotprod{\nabla f_l,x^j-x^l} + f_l-f(x^*,s_l)\right)\nonumber &\mbox{(by convexity of $f(\cdot,s)$)} \\ 
    & =   \bar{f}_j+\dotprod{d_j,x^j} -\gamma_j -\sum_{l=1}^j \rho_{l,j} f(x^*,s_l) \; = \; h_j- \rho_{j} \bar{f}_*^j  . \label{eq:dkdotx-xstar}
\end{align}
Using~\eqref{eq:ten;px9hp9xt} together with~\eqref{eq:dkdotx-xstar} in~\eqref{eq:zo84ho8hz} gives
\begin{align}   
\delta_{j+1} \norm{x^{j+1} -x^*}_{\mD_{j+1}}^2 & \leq \norm{x^{j+1} -x^*}_{\mD_j}^2 \nonumber \\
&= \norm{x^j -x^*}_{\mD_j}^2- 2\tau_j \dotprod{d_j,x^j -x^*} +\tau_j^2\norm{d_j}_{\mD_j^{-1}}^2 \nonumber \\
& \leq \norm{x^j -x^*}_{\mD_j}^2- 2\tau_j (h_j- \rho_j \bar{f}_*^j)+\tau_j^2\norm{d_j}_{\mD_j^{-1}}^2. 
\end{align}
Now we will perform a weighted telescoping. We will multiply the above by $\eta_j>0$ such that $\delta_{j+1} \eta_j = \eta_{j+1} ,$ thus $\eta_{j} = \eta_1 \prod_{l=2}^j\delta_{l} .$ Thus multiplying through by $\eta_{j} $ we have that
\begin{align}   
\eta_{j+1} \norm{x^{j+1} -x^*}_{\mD_{j+1}}^2 
& \leq \eta_{j}\norm{x^j -x^*}_{\mD_j}^2- 2\eta_{j}\tau_j (h_j- \rho_j \bar{f}_*^j)+\eta_{j}\tau_j^2\norm{d_j}_{\mD_j^{-1}}^2. \nonumber
\end{align}
Summing up from $j=1, \ldots, k$ and telescoping we have that
\begin{align}   
0 & \leq \eta_{k+1} \norm{x^{k+1} -x^*}_{\mD_{k+1}}^2 \nonumber\\
& \leq \eta_{1}\norm{x^1 -x^*}_{\mD_1}^2- 2\sum_{j=1}^k\eta_{j}\tau_j (h_j- \rho_j \bar{f}_*^j)+\sum_{j=1}^k\eta_{j}\tau_j^2 \norm{d_j}_{\mD_j^{-1}}^2. \label{eq:oz;89he9jze}
\end{align}
Re-arranging the above, choosing $\eta_1 = 1$ and isolating $\bar{f}_*^k$ gives
\begin{align*}
  2 \eta_{k}\tau_k  \rho_k  \bar{f}_*^k & \geq  2\sum_{j=1}^k\eta_{j}\tau_j h_j -\norm{x^1 -x^*}_{\mD_1}^2 -\sum_{j=1}^k\eta_{j}\tau_j^2\norm{d_j}_{\mD_j^{-1}}^2 -  2 \sum_{j=1}^{k-1}\eta_{j}\tau_j  \rho_j  \bar{f}_*^j.
\end{align*}
Dividing through by $2 \eta_{k}\tau_k  \rho_k$ gives
the main result.
Finally the recurrence follows since, for $k \geq 2$ we have that
\begin{align*}
 &\lb{k+1} =  \frac{2\sum_{j=1}^{k}\eta_{j}\tau_j h_j -\norm{x^1 -x^*}_{\mD_1}^2 -\sum_{j=1}^{k}\eta_{j}\tau_j^2\norm{d_j}_{\mD_j^{-1}}^2 -  2 \sum_{j=1}^{k-1}\eta_{j}\tau_j  \rho_j  \bar{f}_*^j}{2 \eta_{k}\tau_{k}  \rho_{k} } \\
   & =\frac{\eta_{k-1}\tau_{k-1}  \rho_{k-1}}{\eta_{k}\tau_{k}  \rho_{k}} \underbracket{ \frac{2\sum_{j=1}^{k-1}\eta_{j}\tau_j h_j -\norm{x^1 -x^*}_{\mD_1}^2 -\sum_{j=1}^{k-1}\eta_{j}\tau_j^2\norm{d_j}_{\mD_j^{-1}}^2 -  2 \sum_{j=1}^{k-2}\eta_{j}\tau_j  \rho_j  \bar{f}_*^j}{2 \eta_{k-1}\tau_{k-1}  \rho_{k-1} }}_{=\lb{k}} \\
   & \hspace{2ex}\; +\frac{\eta_{k-1}\tau_{k-1}  \rho_{k-1}}{\eta_{k}\tau_{k}  \rho_{k}} \frac{2\eta_{k}\tau_{k} h_{k}  -\eta_{k}\tau_{k}^2\norm{d_{k}}_{\mD_{k}^{-1}}^2 -  2 \eta_{k-1}\tau_{k-1}  \rho_{k-1}  \bar{f}_*^{k-1}}{2 \eta_{k-1}\tau_{k-1}  \rho_{k-1} }\\
   &= \frac{2\eta_{k-1}\tau_{k-1}  \rho_{k-1}(\lb{k}-\bar{f}_*^{k-1})  -\eta_{k}\tau_{k}^2\norm{d_{k}}_{\mD_{k}^{-1}}^2    +2\eta_{k}\tau_{k} h_{k}   }{2\eta_{k}\tau_{k}  \rho_{k}}.
\end{align*}

Now bootstrapping by using $\lb{k}\approx \bar f_*^{k-1}$ gives the result.
\end{proof}

\subsection{The Max Lower Bound}
Here we derive an alternative estimate for the lower bound that does not require bootstrapping, contrary to \cref{lem:fstar}.

\begin{lemma} \label{lem:fstarmax}
        Let $f(x,s)$ be convex in $x$ for every sample $s$. Furthermore let $x^* \in  \argmin{x\in \R^d} f(x)$. 
        Consider the iterates $x^{k+1} = x^k - \tau_k \mD_k^{-1}d_k$ with $\tau_k>0$,
        %Consider $x^k$ are the iterates of~\eqref{eq:momo-general-upd} with $\lambda =0$  
    and let
    \begin{align*}
      \eta_k = \prod_{j=2}^k\lambda_{\min} \big(\mD_{j}^{-1}  \mD_{j-1} \big), \;
    \bar{f}_*^k :=  \frac{1}{\rho_k }\sum_{j=1}^k \rho_{j,k} f(x^*,s_j), \;
    h_k =  \bar f_k + \iprod{d_k}{x^k} -\gamma_k.
    \end{align*}
It follows that 
    \begin{align} \label{eq:fstarconvexD-old}
    \max_{j=1,\ldots, k}\bar{f}_*^j \geq  \lb{k+1}  \eqdef \frac{2\sum_{j=1}^k\eta_j\tau_j h_j -\norm{x^1 -x^*}^2 -\sum_{j=1}^k\eta_j\tau_{j}^2\norm{d_j}_{\mD_j^{-1}}^2 }{2 \sum_{j=1}^k\eta_j\tau_j  \rho_j}. 
    \end{align}
Furthermore we have the recurrence 
\begin{align} \label{eq:fstarconvexD-recur-old}
    \lb{k+1} & = \; 
    \frac{\lb{k} \sum_{j=1}^{k-1}\eta_j\tau_j\rho_j +\eta_{k}\tau_{k} \left( h_{k}- \frac{1}{2}\tau_{k}\norm{d_{k}}_{\mD_{k}^{-1}}^2\right)} {\sum_{j=1}^{k}\eta_j\tau_j\rho_j}  .
\end{align}
In particular when $\mD_k = \Id$ for every $k$, then we have that $\eta_k =1$ for all $k$.
\end{lemma}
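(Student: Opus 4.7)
The plan is to reuse the weighted telescoping argument that was already carried out in the proof of \cref{lem:fstar}, and then bypass the bootstrapping step by upper-bounding $\bar f_*^j$ by its running maximum. Concretely, the inequality
\begin{align*}
0 \leq \eta_1 \|x^1 - x^*\|_{\mD_1}^2 - 2\sum_{j=1}^k \eta_j \tau_j (h_j - \rho_j \bar f_*^j) + \sum_{j=1}^k \eta_j \tau_j^2 \|d_j\|_{\mD_j^{-1}}^2
\end{align*}
was established in the proof of \cref{lem:fstar} (see the display derived by telescoping and using convexity). With $\eta_1 = 1$, rearranging this gives
\begin{align*}
2 \sum_{j=1}^k \eta_j \tau_j \rho_j \bar f_*^j \;\geq\; 2\sum_{j=1}^k \eta_j \tau_j h_j - \|x^1 - x^*\|^2 - \sum_{j=1}^k \eta_j \tau_j^2 \|d_j\|_{\mD_j^{-1}}^2.
\end{align*}

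Next, I would use the elementary bound $\sum_{j=1}^k \eta_j \tau_j \rho_j \bar f_*^j \leq \bigl(\max_{j\leq k} \bar f_*^j\bigr) \cdot \sum_{j=1}^k \eta_j \tau_j \rho_j$, which holds since all weights $\eta_j, \tau_j, \rho_j$ are nonnegative. Substituting this into the left-hand side and dividing through by $2\sum_{j=1}^k \eta_j \tau_j \rho_j$ yields exactly $\max_{j\leq k} \bar f_*^j \geq \lb{k+1}$ as defined in \eqref{eq:fstarconvexD-old}. This is the main qualitative difference from \cref{lem:fstar}: instead of isolating the last term $\bar f_*^k$ on the left and bootstrapping on past $\bar f_*^j$ values, we crudely pull out the maximum, which trades sharpness for a bound that is purely a function of observed quantities.

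For the recurrence \eqref{eq:fstarconvexD-recur-old}, I would introduce the shorthands $S_k := \sum_{j=1}^k \eta_j \tau_j \rho_j$ and $T_k := 2\sum_{j=1}^k \eta_j \tau_j h_j - \|x^1 - x^*\|^2 - \sum_{j=1}^k \eta_j \tau_j^2 \|d_j\|_{\mD_j^{-1}}^2$ so that $\lb{k+1} = T_k / (2 S_k)$. The increments satisfy $S_k = S_{k-1} + \eta_k \tau_k \rho_k$ and $T_k = T_{k-1} + 2\eta_k \tau_k \bigl(h_k - \tfrac{1}{2}\tau_k \|d_k\|_{\mD_k^{-1}}^2\bigr)$. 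Using the identity $T_{k-1} = 2 S_{k-1} \lb{k}$ and substituting gives
\begin{align*}
\lb{k+1} \;=\; \frac{S_{k-1} \lb{k} + \eta_k \tau_k \bigl(h_k - \tfrac{1}{2}\tau_k \|d_k\|_{\mD_k^{-1}}^2\bigr)}{S_k},
\end{align*}
which is precisely \eqref{eq:fstarconvexD-recur-old}. The final claim that $\eta_k = 1$ when $\mD_k = \Id$ is immediate from the definition $\eta_k = \prod_{j=2}^k \lambda_{\min}(\mD_j^{-1} \mD_{j-1})$.

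The only real obstacle is making sure to invoke the correct intermediate inequality from the proof of \cref{lem:fstar} without circularity; since that proof only used convexity, the update rule, and the Rayleigh-quotient inequality $\delta_{j+1}\|v\|_{\mD_{j+1}}^2 \leq \|v\|_{\mD_j}^2$, all ingredients are available here under the same hypotheses, so no new analytic work is required beyond the two clean algebraic manipulations above.
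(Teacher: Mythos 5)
Your proposal is correct and follows essentially the same route as the paper: both start from the telescoped inequality \eqref{eq:oz;89he9jze}, replace each $\bar f_*^j$ by $\max_{j\le k}\bar f_*^j$ using nonnegativity of the weights $\eta_j\tau_j\rho_j$, and divide through, and the recurrence is the same splitting of the sums into the first $k-1$ terms plus the $k$-th increment (your $S_k$, $T_k$ shorthand just makes the paper's longhand algebra tidier). No gaps.
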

\begin{proof}
From step~\eqref{eq:oz;89he9jze} and  re-arranging we have that
\begin{align*}
  2 \big(\max_{j=1,\ldots, k}\bar{f}_*^j \big)\big( \sum_{j=1}^k\eta_{j}\tau_j  \rho_j\big)   \; &\geq   2 \big( \sum_{j=1}^k\eta_{j}\tau_j  \rho_j\big)  \bar{f}_*^j  \\
  &\geq 2\sum_{j=1}^k\eta_{j}\tau_j h_j -\norm{x^1 -x^*}_{\mD_1}^2 -\sum_{j=1}^k\eta_{j}\tau_j^2\norm{d_j}_{\mD_j^{-1}}^2.
\end{align*}
If we now assume that $\bar{f}_*^j \approx f(x^*)$ (or upper bounding $\bar{f}_*^j$ by a constant) then by substituting in $f(x^*)$,
dividing through by $\big( \sum_{j=1}^k\eta_{j}\tau_j  \rho_j\big) $ gives the estimate
\[
\max_{j=1,\ldots, k}\bar{f}_*^j \geq  \lb{k+1}  \eqdef \frac{2\sum_{j=1}^k\eta_j\tau_j h_j -\norm{x^1 -x^*}^2 -\sum_{j=1}^k\eta_j\tau_{j}^2\norm{d_j}_{\mD_j^{-1}}^2 }{2 \sum_{j=1}^k\eta_j\tau_j  \rho_j}. 
\]
Finally the recurrence follows since 
\begin{align*}
        \lb{k+1} &=  \frac{2\sum_{j=1}^{k}\eta_j\tau_j h_j -\norm{x^1 -x^*}_{\mD_1}^2 -\sum_{j=1}^{k}\eta_j\tau_j^2\norm{d_j}_{\mD_j^{-1}}^2 }{2 \sum_{j=1}^{k}\eta_j\tau_j  \rho_j}\\
        &=  \frac{\sum_{j=1}^{k-1}\eta_j\tau_j \rho_j} {\sum_{j=1}^{k}\eta_j\tau_j \rho_j} \frac{2\sum_{j=1}^{k-1}\eta_j\tau_j  h_j-\norm{x^1 -x^*}_{\mD_1}^2- \sum_{j=1}^{k-1}\eta_j\tau_j^2\norm{d_j}_{\mD_j^{-1}}^2}{2\sum_{j=1}^{k-1}\eta_j\tau_j \rho_j} 
        \\ &\quad  + \frac{2\eta_{k}\tau_{k}  h_{k}-\eta_{k}\tau_{k}^2\norm{d_{k}}_{\mD_{k}^{-1}}^2 }{2\sum_{j=1}^{k}\eta_j\tau_j\rho_j} \\
        &= \frac{ \lb{k} \sum_{j=1}^{k-1}\eta_j\tau_j\rho_j +\eta_{k}\tau_{k}\left( h_{k}- \frac{1}{2}\tau_{k}\norm{d_{k}}_{\mD_{k}^{-1}}^2\right)} {\sum_{j=1}^{k}\eta_j\tau_j\rho_j}. 
\end{align*}

\end{proof}

%%%%%%%%%%%%%%%%%%%%%%%%%%%%%%%%%%%%%%%%%%%%%%%%%%%%%%%%%%%%%%%%%%%%%%%%%%%%%%%%%%%%%

\section{Proofs for Convergence Analysis}\label{sec:proof-conv-analysis}

Here we give another motivation for a variant of \Momo{} through convexity.  We discovered this interpretation of \Momo{} after reading the concurrent
work~\citep{Wang2023}.  

For this alternative derivation of \Momo{}, first let $\tau_k \geq 0$ be a free parameter, and consider a general momentum method with a preconditioner given by
\begin{align}\label{eq:SGD-M}
    d_k = \sum_{j=1}^k \rho_{j,k} \nabla f(x^j,s_j),\quad
    x^{k+1} = x^k - \tau_k \mD_k^{-1} d_k.
\end{align}
We can now view $x^{k+1}$ as a function of $\tau_k$, that is $x^{k+1}(\tau_k)$. Ideally we would like to choose $\tau_k$ so that $x^{k+1}$ is as close as possible to the optimum solution $x^*$, that is to minimize  $\norm{x^{k+1}(\tau_k)-x^*}_{\mD_k}^2$ in $\tau_k.$ This is general not possible because we do not know $x^*.$ But if we assume that $f(\cdot, s)$ is a convex function,  then 
we can minimize an upper bound of $\norm{x^{k+1}(\tau_k)-x^*}_{\mD_k}^2$ with respect to $\tau_k$. As we show next, this gives the adaptive term in the learning rate of \Momo{} if $\lb{k} = \bar{f}^k_*$.

\begin{restatable}{lemma}{lemminDstar}\label{lem:minDstar}
Let $f(\cdot, s)$ be convex for every $s$.  Let $h_k  :=  \bar f_k + \iprod{d_k}{x^k} -\gamma_k$ where $d_k, \bar{f}_k, $ and $\gamma_k$ are defined in~\eqref{eq:barf-d-gamma}. Consider the iterates given by~\eqref{eq:SGD-M} and let $x^* \in \arg \min_{x\in \R^d} f(x)$. Then, we have the upper bound
\begin{align}   \label{eq:minDconvex}
 \norm{x^{k+1} -x^*}_{\mD_{k}}^2 
& \leq \norm{x^k -x^*}_{\mD_k}^2- 2\tau_k (h_k-\rho_k\bar{f}^k_*)+\tau_k^2\norm{d_k}_{\mD_k^{-1}}^2.
\end{align}
The minimum of the right-hand side of \eqref{eq:minDconvex}, over the set $\tau_k \in \R_{\geq 0}$, is attained at
\begin{equation}\label{eq:taukopt}
    \bar \tau_k \;=\; \frac{(h_k- \rho_k\bar{f}^k_*)_+}{\norm{d_k}_{\mD_k^{-1}}^2}.
\end{equation}
\end{restatable}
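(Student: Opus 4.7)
The plan is to prove the inequality by a norm expansion followed by the same convexity bound already used inside the proof of \cref{lem:fstar}, and then minimize the resulting univariate convex quadratic in $\tau_k$ over the nonnegative reals.

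First I would subtract $x^*$ from the update $x^{k+1} = x^k - \tau_k \mD_k^{-1} d_k$ and take the squared $\mD_k$-norm. Using $\norm{\mD_k^{-1} d_k}_{\mD_k}^2 = \iprod{d_k}{\mD_k^{-1} d_k} = \norm{d_k}_{\mD_k^{-1}}^2$, this yields the exact identity
\begin{align*}
\norm{x^{k+1} - x^*}_{\mD_k}^2 = \norm{x^k - x^*}_{\mD_k}^2 - 2\tau_k \iprod{d_k}{x^k - x^*} + \tau_k^2 \norm{d_k}_{\mD_k^{-1}}^2.
\end{align*}

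Next I would lower bound the cross term $\iprod{d_k}{x^k - x^*}$ by convexity of each $f(\cdot, s_j)$. This is exactly the computation performed in \eqref{eq:dkdotx-xstar}: expand $d_k = \sum_j \rho_{j,k} \nabla f(x^j, s_j)$, split each summand as $\iprod{\nabla f(x^j, s_j)}{x^k - x^j} + \iprod{\nabla f(x^j, s_j)}{x^j - x^*}$, apply convexity to the second piece to get $f(x^j, s_j) - f(x^*, s_j)$, and then reassemble the weighted sum in terms of $\bar{f}_k$, $\gamma_k$, $\iprod{d_k}{x^k}$, and $\rho_k \bar{f}^k_*$. This gives $\iprod{d_k}{x^k - x^*} \geq h_k - \rho_k \bar{f}^k_*$, and substituting into the identity above yields \eqref{eq:minDconvex}.

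For the second claim, the right-hand side of \eqref{eq:minDconvex} is a convex quadratic in $\tau_k$ with nonnegative leading coefficient $\norm{d_k}_{\mD_k^{-1}}^2$. Setting its derivative in $\tau_k$ to zero gives the unconstrained minimizer $(h_k - \rho_k \bar{f}^k_*)/\norm{d_k}_{\mD_k^{-1}}^2$; projecting this onto the feasible set $\tau_k \geq 0$ produces the positive-part expression for $\bar\tau_k$ claimed in \eqref{eq:taukopt}.

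There is essentially no obstacle here: both pieces are short, and the convexity bound duplicates a step already used in \cref{lem:fstar}. The only mild care-point is the degenerate case $d_k = 0$, where the quadratic collapses and $\bar\tau_k$ must be read as a limiting convention; this is consistent with the standing assumption $d_k \neq 0$ used later in \cref{thm:convex-case}.
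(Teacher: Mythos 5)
Your proposal is correct and follows essentially the same route as the paper's proof: expand the squared $\mD_k$-norm of $x^{k+1}-x^*$, lower-bound the cross term $\iprod{d_k}{x^k-x^*}$ by the convexity computation of \eqref{eq:dkdotx-xstar}, and then minimize the resulting convex quadratic over $\tau_k \geq 0$ to obtain the positive-part formula. Your additional remark on the degenerate case $d_k=0$ is a reasonable care-point that the paper leaves implicit.
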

\begin{proof}
Subtracting $x^*$ from both sides, taking norms and expanding the squares gives
\begin{align}   
\norm{x^{k+1} -x^*}_{\mD_k}^2 &= \norm{x^k -x^*}_{\mD_k}^2- 2\tau_k \dotprod{d_k,x^k -x^*} +\tau_k^2\norm{d_k}_{\mD_k^{-1}}^2. \label{eq:zo84ho8hz11} 
\end{align}
Denote  $\nabla f_j := \nabla f(x^j,s_j),~ f_j := f(x^j,s_j)$. Now using that
\begin{align}
   \dotprod{d_k,x^k-x^*} &= \sum_{j=1}^k \rho_{j,k} \dotprod{\nabla f_j,x^k-x^*}\nonumber \\
    & = \sum_{j=1}^k \rho_{j,k}\left( \dotprod{\nabla f_j,x^k-x^j} +\dotprod{\nabla f_j,x^j-x^*}\right) \nonumber \\
    &\geq  \sum_{j=1}^k \rho_{j,k}\left( \dotprod{\nabla f_j,x^k-x^j} + f_j-f(x^*,s_j)\right)\nonumber &\mbox{(by convexity of $f(\cdot,s_j)$)} \\ 
    & =  \dotprod{d_k,x^k} -\gamma_k +\sum_{j=1}^k \rho_{j,k} (f_j -f(x^*,s_j)) = h_k - \rho_k \bar{f}^k_*.  \label{eq:dkdotx-xstar11}
\end{align}

Using~\eqref{eq:dkdotx-xstar11} in~\eqref{eq:zo84ho8hz11} gives
\begin{align*}   
 \norm{x^{k+1} -x^*}_{\mD_{k}}^2 & = \norm{x^k -x^*}_{\mD_k}^2- 2\tau_k \dotprod{d_k,x^k -x^*} +\tau_k^2\norm{d_k}_{\mD_k^{-1}}^2 \nonumber \\
& \leq \norm{x^k -x^*}_{\mD_k}^2- 2\tau_k (h_k-\rho_k\bar{f}^k_*)+\tau_k^2\norm{d_k}_{\mD_k^{-1}}^2.
\end{align*}
If we now minimize the right-hand side of the above in $\tau_k$, but restricted to $\tau_k\geq 0$, we arrive at~\eqref{eq:taukopt}.     
\end{proof}

Inequality \eqref{eq:minDconvex} holds for any choice of $\tau_k\geq 0$ in \eqref{eq:SGD-M}, in particular for $\tau_k = \min\{\frac{\alpha_k}{\rho_k}, \frac{(h_k- \rho_k\bar{f}^k_*)_+}{\norm{d_k}_{\mD_k^{-1}}^2}\}$. This choice for $\tau_k$ is equal to \Momo{} for $\lambda=0$ and $\lb{k} = \bar{f}^k_*$. As a consequence, we we can prove a descent lemma for \Momo{}.

%%%%%%%%%%%%%%%%%%%%%%%
\lemdescent*
%%%%%%%%%%%%%%%%%%%%%%%
\begin{proof}   
We again denote $h_k  =  \bar f_k + \iprod{d_k}{x^k} -\gamma_k$.
First, assume $\tau_k= \frac{(h_k- \rho_k\bar{f}^k_*)_+}{\norm{d_k}_{\mD_k^{-1}}^2}$. Inserting this $\tau_k$ back in \eqref{eq:minDconvex} we have that 
\begin{align}   
 \norm{x^{k+1} -x^*}_{\mD_{k}}^2 
& \leq \norm{x^k -x^*}_{\mD_k}^2- 2\frac{(h_k-\rho_k\bar{f}^k_*)_+}{\norm{d_k}_{\mD_k^{-1}}^2}(h_k-\rho_k\bar{f}^k_*)+\frac{(h_k-\rho_k\bar{f}^k_*)_+^2}{\norm{d_k}_{\mD_k^{-1}}^2} \nonumber \\
& = \norm{x^k -x^*}_{\mD_k}^2- \frac{(h_k-\rho_k\bar{f}^k_*)_+^2}{\norm{d_k}_{\mD_k^{-1}}^2} \nonumber\\ 
&= \norm{x^k -x^*}_{\mD_k}^2- \tau_k (h_k-\rho_k\bar{f}^k_*)_+. \label{eqn:descent-adaptive}
\end{align}
Here we used that $a(a)_+ = (a)_+^2$ for all $a\in \R$.

If we have $\tau_k = \frac{\alpha_k}{\rho_k}$, then from \eqref{eq:minDconvex} we get
\begin{align}   
 \norm{x^{k+1} -x^*}_{\mD_{k}}^2 & \leq\norm{x^k -x^*}_{\mD_k}^2 + \frac{\alpha_k}{\rho_k}\big[-2 (h_k-\rho_k\bar{f}^k_*)+\frac{\alpha_k}{\rho_k}\norm{d_k}_{\mD_k^{-1}}^2\big].
\end{align}
In this case $\frac{\alpha_k}{\rho_k} \leq \frac{(h_k- \rho_k\bar{f}^k_*)_+}{\norm{d_k}_{\mD_k^{-1}}^2}$ and hence $\frac{\alpha_k}{\rho_k} \norm{d_k}_{\mD_k^{-1}}^2  \leq (h_k- \rho_k\bar{f}^k_*)_+$. 
Further, it must hold $(h_k- \rho_k\bar{f}^k_*)= (h_k- \rho_k\bar{f}^k_*)_+$ as $\alpha_k>0$. We get
\begin{align}   
 \norm{x^{k+1} -x^*}_{\mD_{k}}^2 & \leq\norm{x^k -x^*}_{\mD_k}^2 - \frac{\alpha_k}{\rho_k}(h_k-\rho_k\bar{f}^k_*)_+ \nonumber \\
 &= \norm{x^k -x^*}_{\mD_k}^2 - \tau_k(h_k-\rho_k\bar{f}^k_*)_+ \quad (\tau_k=\frac{\alpha_k}{\rho_k}). \label{eqn:descent-alpha}
\end{align}
Now, if $\tau_k = \min\{\frac{\alpha_k}{\rho_k}, \frac{(h_k- \rho_k\bar{f}^k_*)_+}{\norm{d_k}_{\mD_k^{-1}}^2}\}$, either \eqref{eqn:descent-adaptive} or \eqref{eqn:descent-alpha} is true, and hence we have
\begin{align*}
    \norm{x^{k+1} -x^*}_{\mD_{k}}^2  \leq \norm{x^k -x^*}_{\mD_k}^2 - \tau_k(h_k-\rho_k\bar{f}^k_*)_+.
\end{align*}
\end{proof}
Recall the interpolation condition \eqref{eq:interpolation}, given by
\begin{align*}
    f(x^*,s) = \inf_x f(x,s) = f^* \quad \text{for all } s \in \mathcal{D}.
\end{align*}
%
%%%%%%%%%%%%%%%%%%%%%%%
\thmconvex*
%%%%%%%%%%%%%%%%%%%%%%%
%
\begin{proof}
    Recall that \cref{alg:momo} is \cref{lem:momo-general-update} with $\rho_k=1,~\mD_k= \Id$ and $\lambda=0$.
    The key quantity is $h_k = \bar f_k + \iprod{d_k}{x^k} - \gamma_k$. 
    Let us denote $g_k = \nabla f(x^k,s_k)$.
    Further, denote with $\mathcal{F}_k$ the $\sigma$-algebra generated by $\{s_1,\dots,s_{k-1}\}$.
    
    \textbf{Step 1.} We first show by induction that $h_k - f^* \geq 0$ for all $k\in \N$. For $k=1$ we have $h_1 = f(x^1, s_1) \geq f^*$ due to \eqref{eq:interpolation}. 
    Now, for $k\geq 2$, assume that $h_{k-1} - f^* \geq 0$. Rewrite as
\begin{align*}
    h_k &= \beta \big[\bar f_{k-1} + \iprod{d_{k-1}}{x^k} - \gamma_{k-1}\big] + (1-\beta) \big[f(x^k,s_k) + \iprod{g_k}{x^k} - \iprod{g_k}{x^k} \big] \\
    &=\beta \big[\bar f_{k-1} + \iprod{d_{k-1}}{x^{k-1}}  - \gamma_{k-1} + \iprod{d_{k-1}}{x^k-x^{k-1}}\big] + (1-\beta) f(x^k,s_k) \\
    &= \beta h_{k-1} + \beta \iprod{d_{k-1}}{x^k- x^{k-1}}  +(1-\beta) f(x^k,s_k).
\end{align*}
Using the update rule $x^k = x^{k-1} - \tau_{k-1} d_{k-1}$  in the above gives
 \begin{align}\label{eqn:hk-equality}
    h_k = \beta (h_{k-1} -\tau_{k-1} \|d_{k-1}\|^2)  +(1-\beta) f(x^k,s_k).
\end{align}
Recall that $\tau_k =\frac{(h_k-\lb{k})_+}{\|d_k\|^2}$ due to $\alpha_k=+ \infty$. 
Hence, 
$$\tau_{k-1} \|d_{k-1}\|^2 = (h_{k-1} - \lb{k-1})_+ = (h_{k-1} - f^*)_+ = h_{k-1} - f^*$$ where the last equality is the induction hypothesis. Re-arranging the above we get
\begin{align}
    h_{k-1} -\tau_{k-1} \|d_{k-1}\|^2 = f^*.
\end{align}
Plugging this equality into \eqref{eqn:hk-equality} gives
\begin{align*}
    h_k = \beta f^* + (1-\beta) f(x^k,s_k) \geq f^*,
\end{align*}
due to $\beta\in[0,1)$ and $f(x^k,s_k)\geq f^*$. This completes the induction, and we have further shown that 
\begin{align}\label{eqn:lower-bound-hk}
    h_k - f^* = (1-\beta) \big( f(x^k,s_k) - f^*\big).
\end{align} 

\textbf{Step 2.} Due to \eqref{eq:interpolation} and $\rho_k=1$, it holds $\bar f_k^*=f^*=\lb{k}$. Hence, the assumptions of \cref{lem:descent} are satisfied and we can apply \eqref{eqn:momo-descent}, which implies in particular that the iterates $(x^k)$ are almost surely contained in the bounded set $B$. 
By assumption, we conclude that $\E{\|g_j\|^2 ~\vert~ \mathcal{F}_k } \leq G^2$ for all $j \leq k$. Using Jensen for the discrete probability measure induced by $\rho_{j,k}$, we have
\begin{align*}
    \|d_k\|^2 = \|\sum_{j=1}^k \rho_{j,k} g_j\|^2 \leq \sum_{j=1}^k \rho_{j,k} \|g_j\|^2.
\end{align*}
% alternatively, proof by induction with Youngs inequality and parameter 1
Thus, we conclude for the conditional expectation that $\E{\|d_k\|^2~\vert~ \mathcal{F}_k} \leq G^2$. 
%We will again distinct two cases: first, assume $\tau_k = \frac{h_k-f^*}{\|d_k\|^2}$ (by Step 1, we can omit $(\cdot)_+$) and denote this event with $Z_k$.
By Step 1, we have $\tau_k = \frac{h_k-f^*}{\|d_k\|^2}$.
We will use next that $(x,y)\mapsto x^2/y$ is convex for $x\in\R,y>0$. 
From \eqref{eqn:descent-adaptive} and applying conditional expectation, we have
\begin{align*}
    \E{\|x^{k+1} - x^*\|^2~\vert~ \mathcal{F}_k} &\leq \|x^{k} - x^*\|^2 - \E{\frac{(h_k - f^*)^2}{\|d_k\|^2}~\vert~ \mathcal{F}_k} \\
    &\leq \|x^{k} - x^*\|^2 - \frac{\E{h_k - f^*~\vert~ \mathcal{F}_k}^2}{\E{\|d_k\|^2~\vert~ \mathcal{F}_k}} \\
    &\overset{\eqref{eqn:lower-bound-hk}}{=} \|x^{k} - x^*\|^2 - \frac{(1-\beta)^2\E{f(x^k,s_k)- f^*~\vert~ \mathcal{F}_k}^2}{\E{\|d_k\|^2~\vert~ \mathcal{F}_k}}\\
    &\leq \|x^{k} - x^*\|^2 - \frac{(1-\beta)^2(f(x^k) - f^*)^2}{G^2}.
\end{align*}
% alpha case, not used for now
% Now, consider the other case, namely $\tau_k = \alpha_k$. From \eqref{eqn:momo-descent} we obtain
% \begin{align*}
%     \|x^{k+1} - x^*\|^2 &\leq \|x^{k} - x^*\|^2 - \alpha_k(h_k - f^*) \\
%     &\leq \|x^{k} - x^*\|^2  - \alpha_k (1-\beta)(f(x^k,s_k) - f^*),
% \end{align*}
% where we used \eqref{eqn:lower-bound-hk} again. Now, apply conditional expectation and due to assumption we have $\alpha_k \geq \frac{(1-\beta)(f(x^k)-f^*)}{G^2}$. This yields
% \begin{align*}
%     \E{\|x^{k+1} - x^*\|^2} &\leq \|x^{k} - x^*\|^2 - \frac{(1-\beta)^2(f(x^k) - f^*)^2}{G^2}.
% \end{align*}
% As the above inequality holds for both cases, we can apply it for all $k\in \N$.

\textbf{Step 3.} Taking full expectation, using the law of total expectation, suming over $k=1,\dots,K$, dividing by $K$ and re-arranging gives
\begin{align}\label{eq:zelfhzlo8hefz}
    \frac{1}{K}\sum_{k=1}^K \E{(f(x^k)-f^*)^2} \leq \frac{G^2\|x^1-x^*\|^2}{K(1-\beta)^2}.
\end{align}
Now, due to Jensen's inequality we have $\E{(f(x^k)-f^*)^2} \geq \E{f(x^k)-f^*}^2$ and because the square-root is concave, it holds
\begin{align*}
    \frac{1}{K}\sum_{k=1}^K \E{f(x^k)-f^*} \leq \sqrt{\frac{1}{K}\sum_{k=1}^K \E{f(x^k)-f^*}^2}.
\end{align*}
Using the above together with~\eqref{eq:zelfhzlo8hefz}, we obtain
\begin{align*}
    \min_{k=1,\dots,K}  \E{f(x^k)-f^*} \leq \frac{1}{K}\sum_{k=1}^K \E{f(x^k)-f^*} \leq \frac{G\|x^1-x^*\|}{\sqrt{K}(1-\beta)}.
\end{align*}
\end{proof}

The above result is basically identical to \citep[Thm.\ C.1]{Loizou2021}, but also allowing for momentum. We make two remarks: the best constant is clearly achieved by $\beta=0$, i.e.\ no momentum. While empirically, momentum helps in most cases, we can not show a theoretical improvement at this time. Second, we do not need to assume bounded gradient norms as done in \citep{Loizou2021}, because this follows from the descent property \cref{lem:descent}. However, this improvement could be achieved analogously for the the proof of \citep{Loizou2021} based on the techniques we developed here.
%%%%%%%%%%%%%%%%%%%%%%%%%%%%%%%%%%%%%%%%%%%%%%
%
%%%%%%%%%%%%%%%%%%%%%%%%%%%%%%%%%%%%%%%%%%%%%%%%%%%%%%%%%%%%%%%%%%%%%%%%%%%%%%%%%%%%%%%

%%%%%%%%%%%%%%%%%%%%%%%%%%%%%%%%%%%%%%%%%%%%%%%%%
\section{Additional Information on Experiments}\label{sec:appendix-numerics-info}
\subsection{Additional Plots}

\begin{figure}[h]
    \centering
    \begin{subfigure}{0.325\textwidth}  
        \includegraphics[width=0.99\textwidth]{./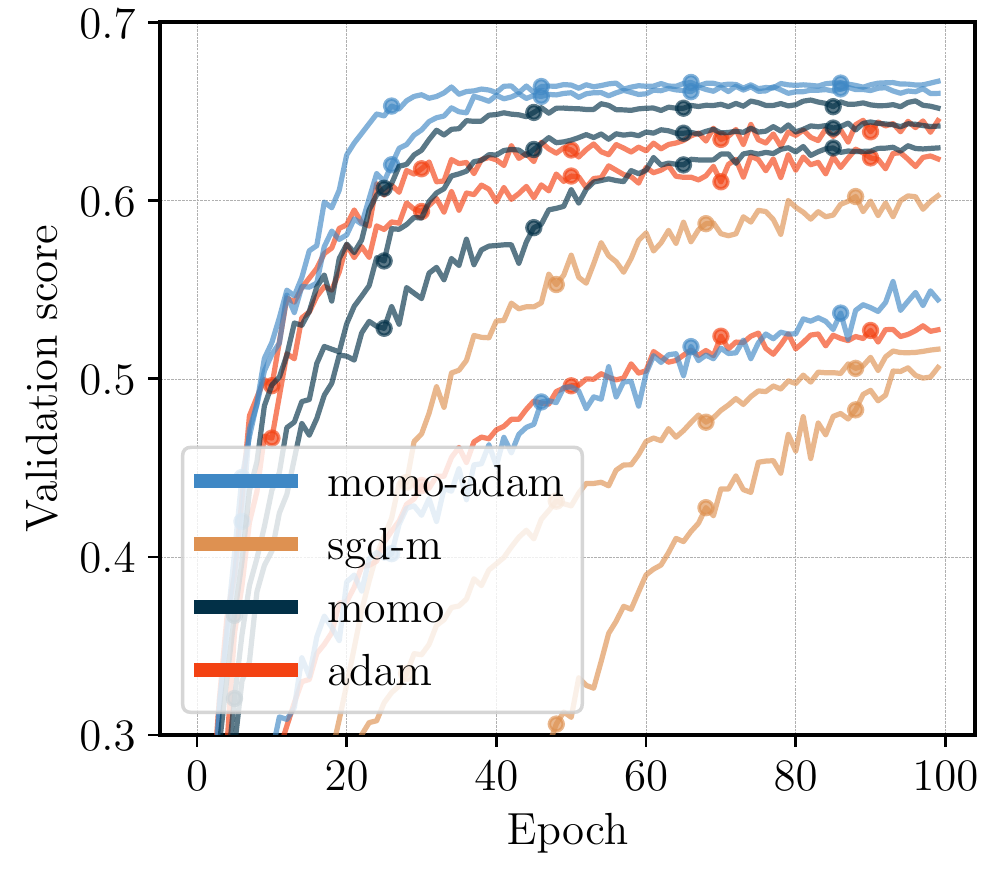}
        \caption{\texttt{ResNet110} for \texttt{CIFAR100}}
        \label{fig:resnet110_all_val}
    \end{subfigure}
    \begin{subfigure}{0.325\textwidth}  
        \includegraphics[width=0.99\textwidth]{./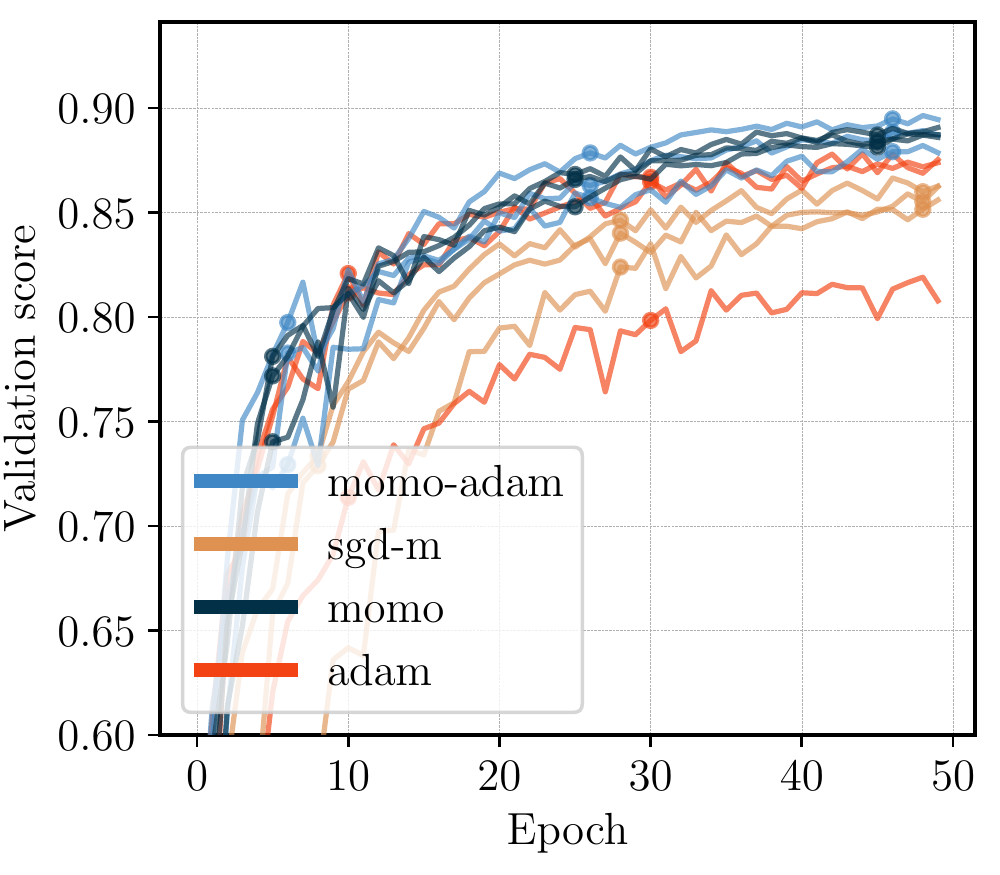}
        \caption{\texttt{ResNet20} for \texttt{CIFAR10}}
        \label{fig:resnet20_all_val}
    \end{subfigure}
    \begin{subfigure}{0.325\textwidth}  
        \includegraphics[width=0.99\textwidth]{./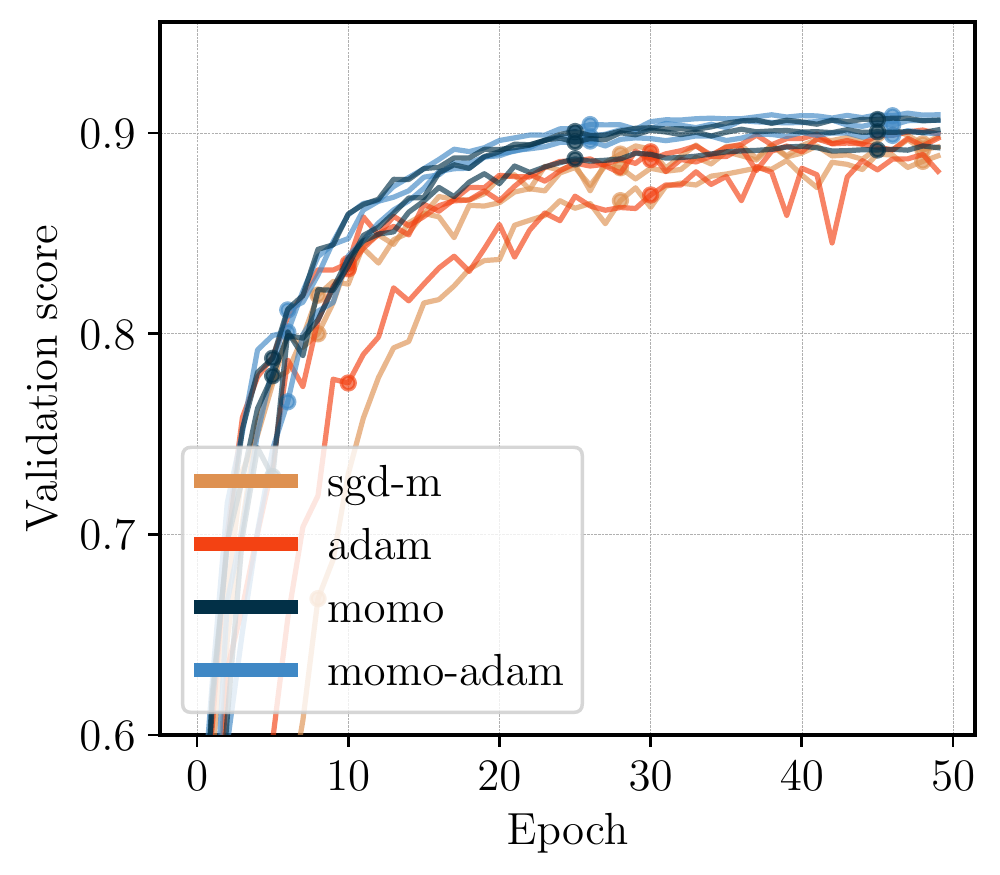}
        \caption{\texttt{VGG16} for \texttt{CIFAR10}}
        \label{fig:vgg16_all_val}
    \end{subfigure}
\caption{Validation score over training, we plot, for each method, the three choices of $\alpha_0$  that lead to the best validation score (compare to \cref{fig:stability_val_score,fig:stability_appendix}).}
\label{fig:all_val}
\end{figure}
\begin{figure}[h]
    \centering
    \begin{subfigure}{0.325\textwidth}  
        \includegraphics[width=0.99\textwidth]{./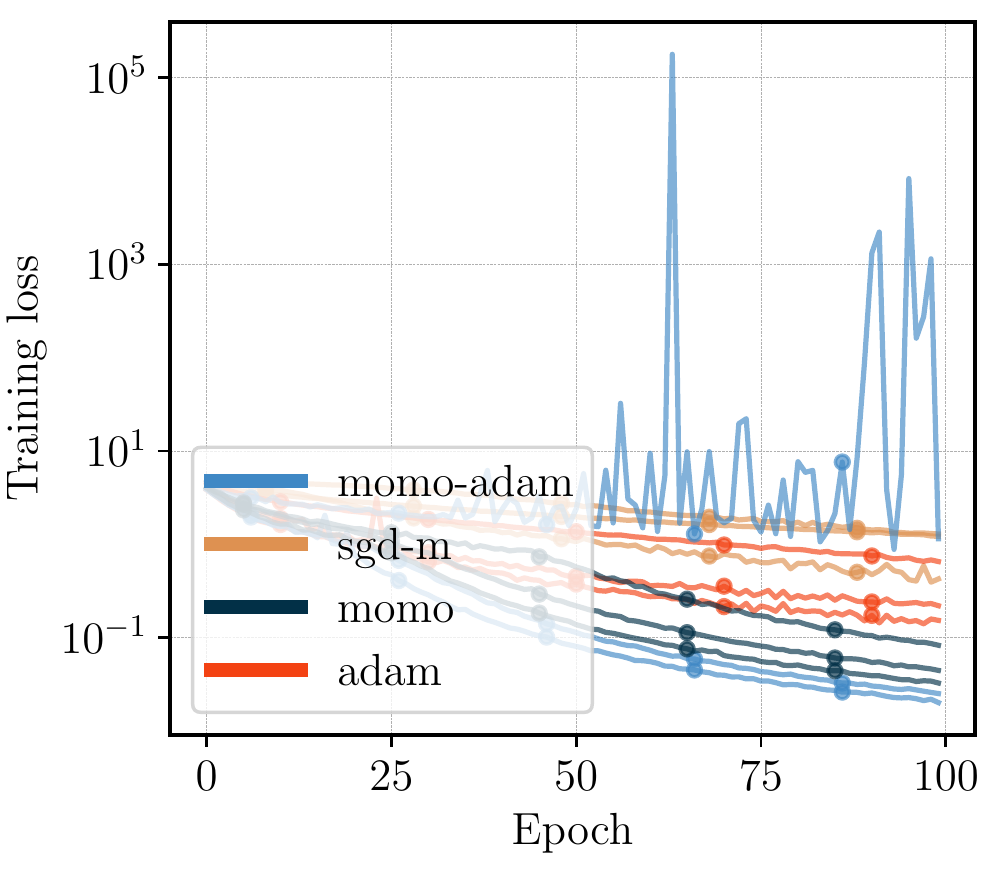}
        \caption{\texttt{ResNet110} for \texttt{CIFAR100}}
        \label{fig:resnet110_all_loss}
    \end{subfigure}
    \begin{subfigure}{0.325\textwidth}  
        \includegraphics[width=0.99\textwidth]{./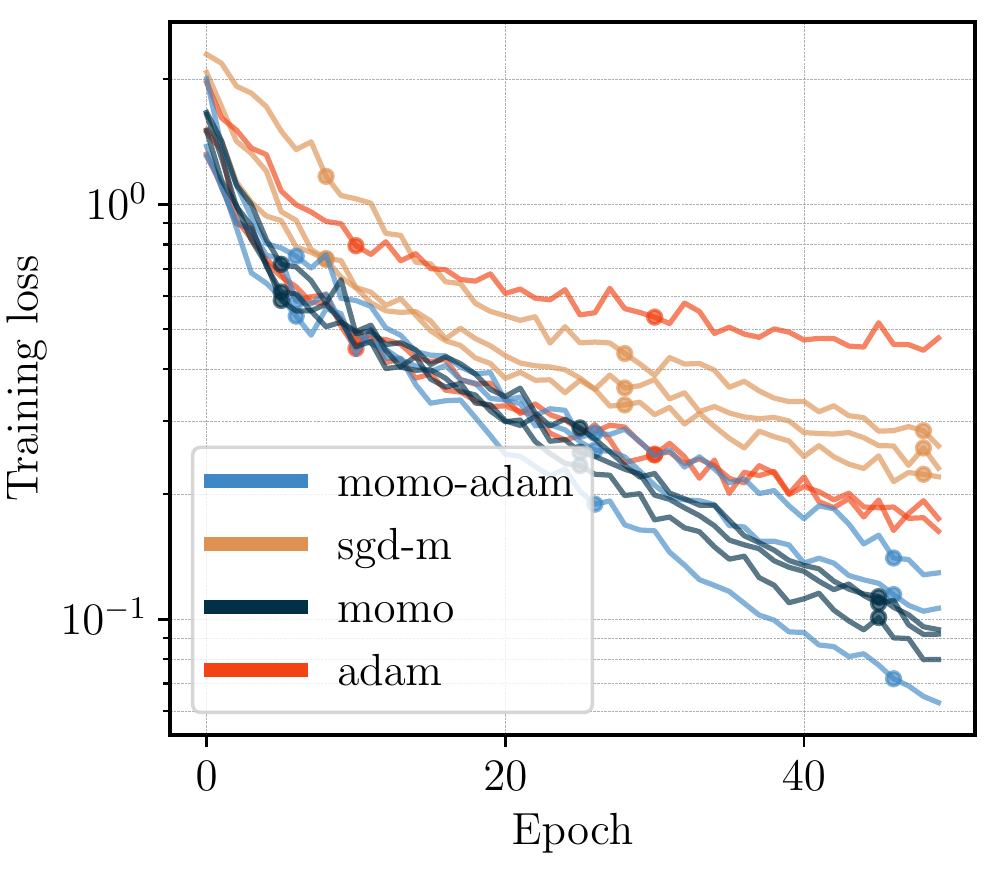}
        \caption{\texttt{ResNet20} for \texttt{CIFAR10}}
        \label{fig:resnet20_all_loss}
    \end{subfigure}
    \begin{subfigure}{0.325\textwidth}  
        \includegraphics[width=0.99\textwidth]{./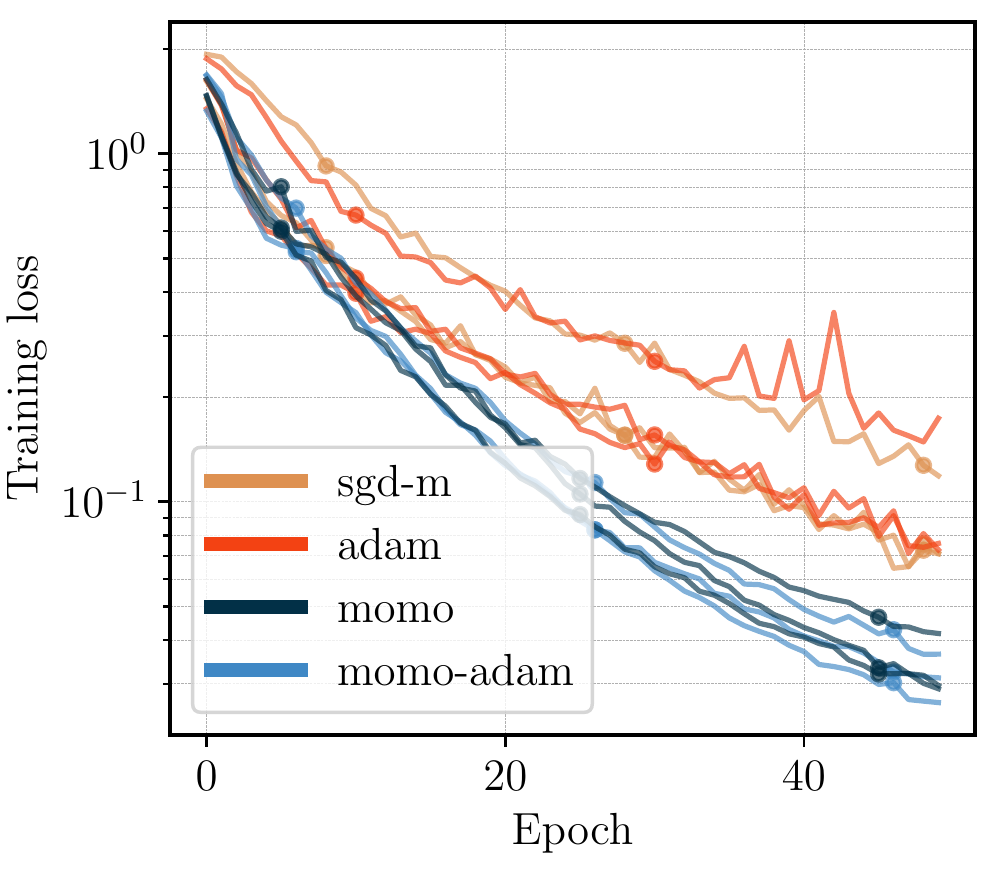}
        \caption{\texttt{VGG16} for \texttt{CIFAR10}}
        \label{fig:vgg16_all_loss}
    \end{subfigure}
\caption{Training loss over training, we plot, for each method, the three choices of $\alpha_0$  that lead to the best validation score.}
\label{fig:all_loss}
\end{figure}
\begin{figure}[h]
    \centering
    \begin{subfigure}[b]{0.32\textwidth}  
        \centering 
        \includegraphics[width=0.99\textwidth]{./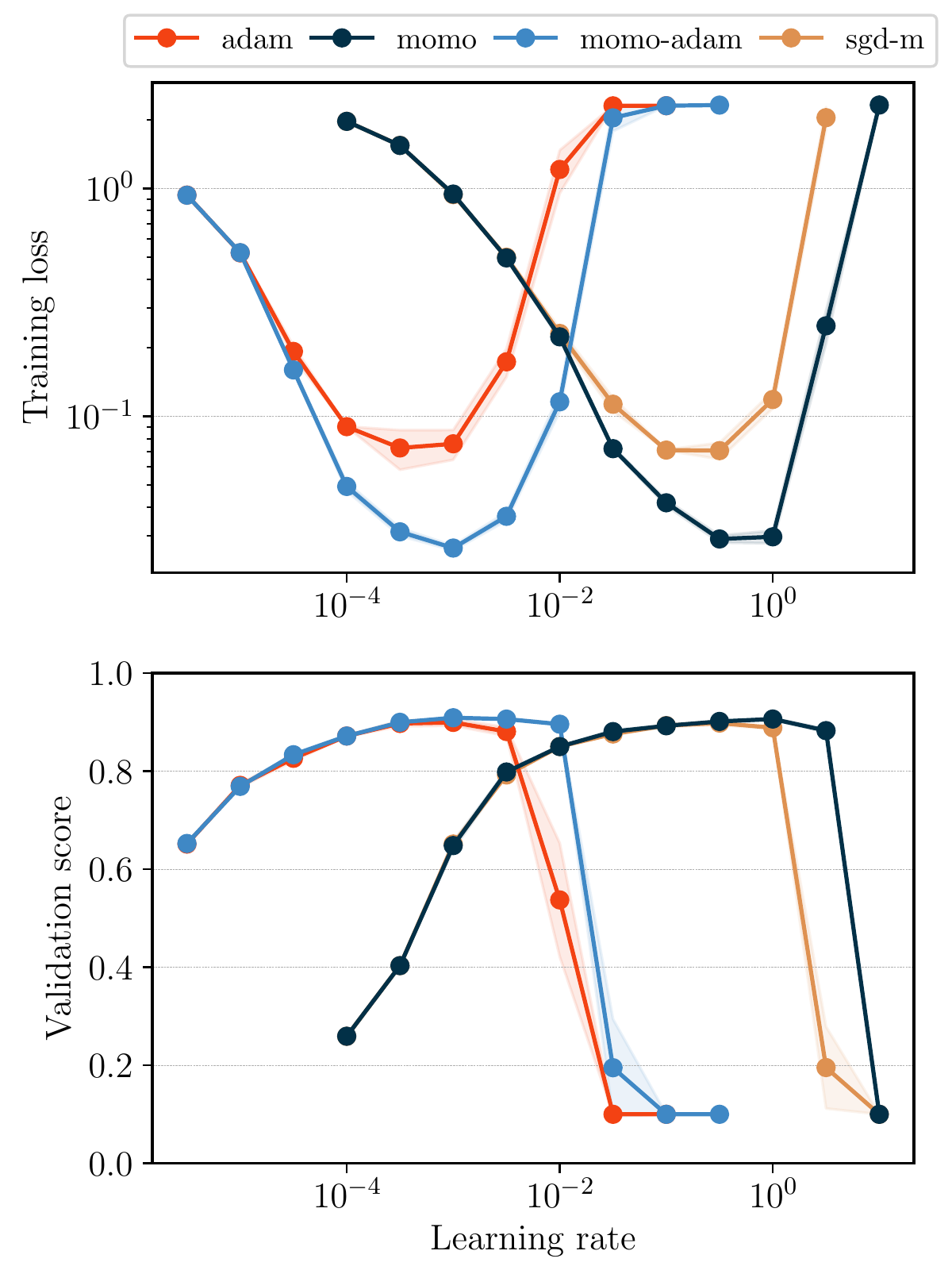}
        \caption{\texttt{VGG16} for \texttt{CIFAR10}}
    \end{subfigure}
    \begin{subfigure}[b]{0.32\textwidth}  
        \centering 
        \includegraphics[width=0.99\textwidth]{./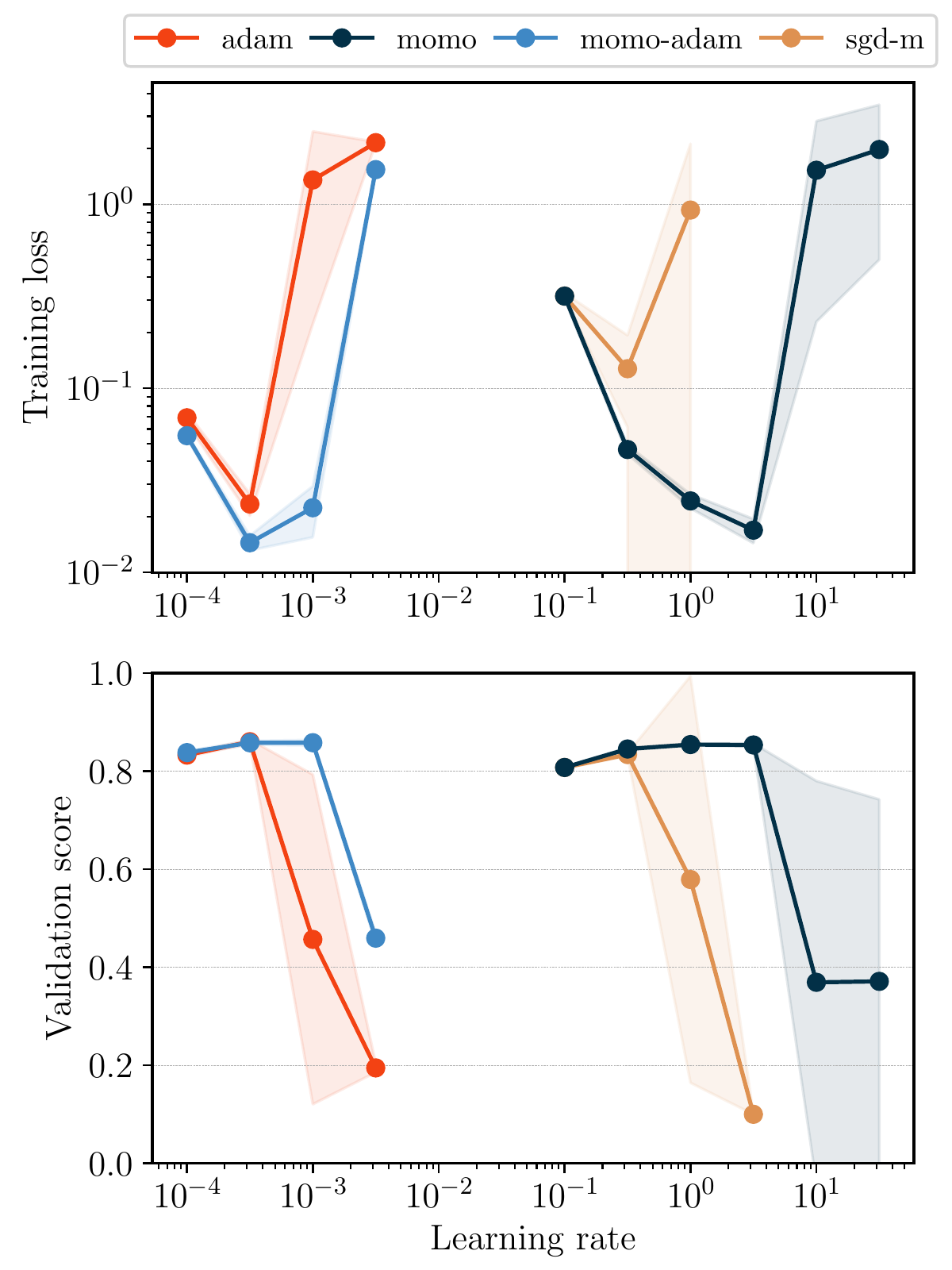}
        \caption{\texttt{ViT} for \texttt{CIFAR10}}
    \end{subfigure}
    \begin{subfigure}[b]{0.32\textwidth}  
        \centering 
        \includegraphics[width=0.99\textwidth]{./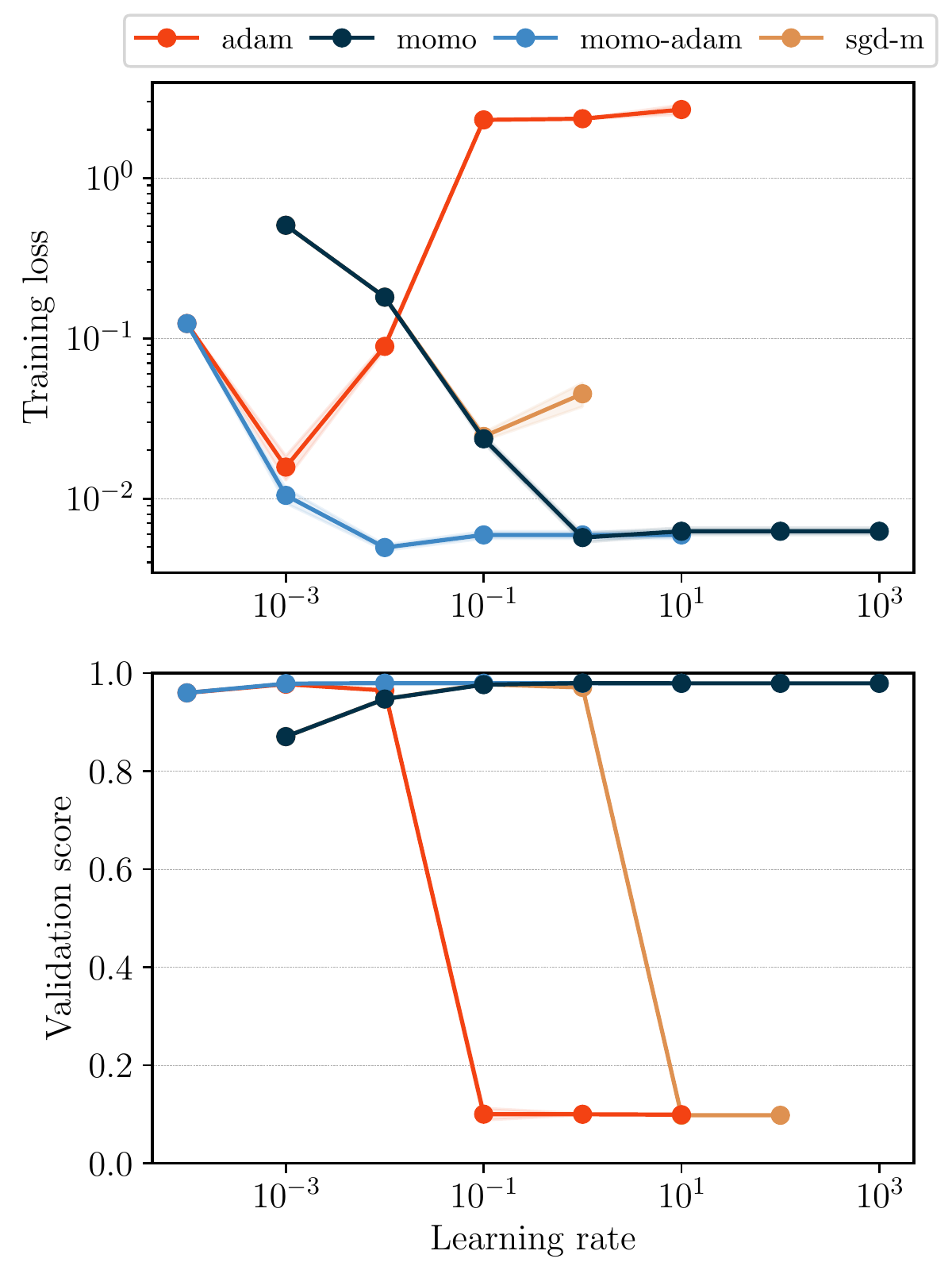}
        \caption{\texttt{MLP} for \texttt{MNIST}}
    \end{subfigure}
\caption{Training loss (top row) and validation accuracy (bottom row) after a fixed number of epochs, for varying (constant) learning rate $\alpha_0$.}
\label{fig:stability_appendix}
\end{figure}
%
%
%
%%%%%%%%%%%%%%%%%
\begin{figure}[h]
    \centering
    \begin{subfigure}[b]{0.58\textwidth}  
        \includegraphics[width=0.99\textwidth]{./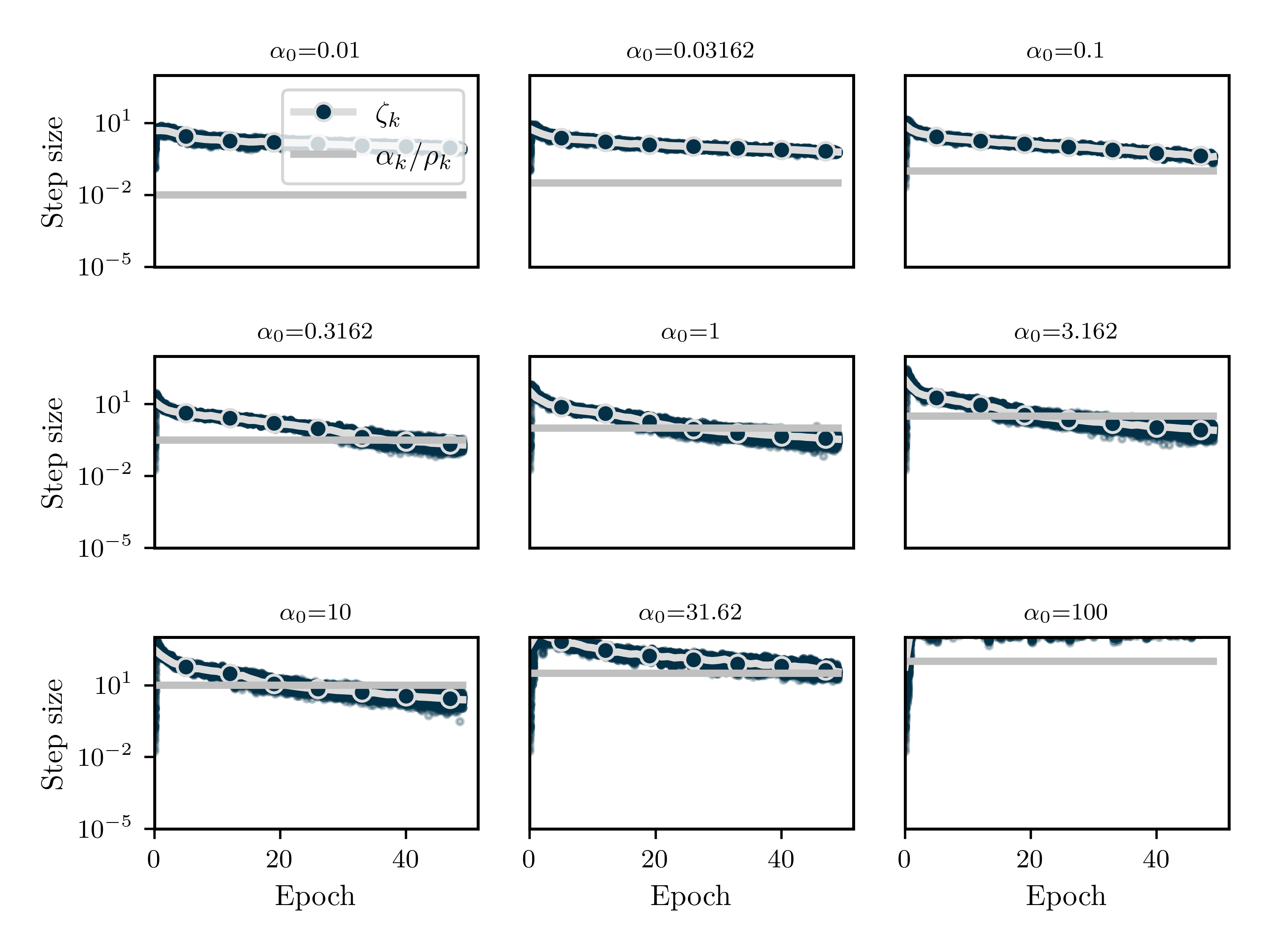}
        \caption{\Momo{}}
        \label{fig:resnet20_step_sizes_momo}
    \end{subfigure}
    \begin{subfigure}[b]{0.38\textwidth}
        %\raisebox{5mm}{
        \includegraphics[width=0.99\textwidth]{./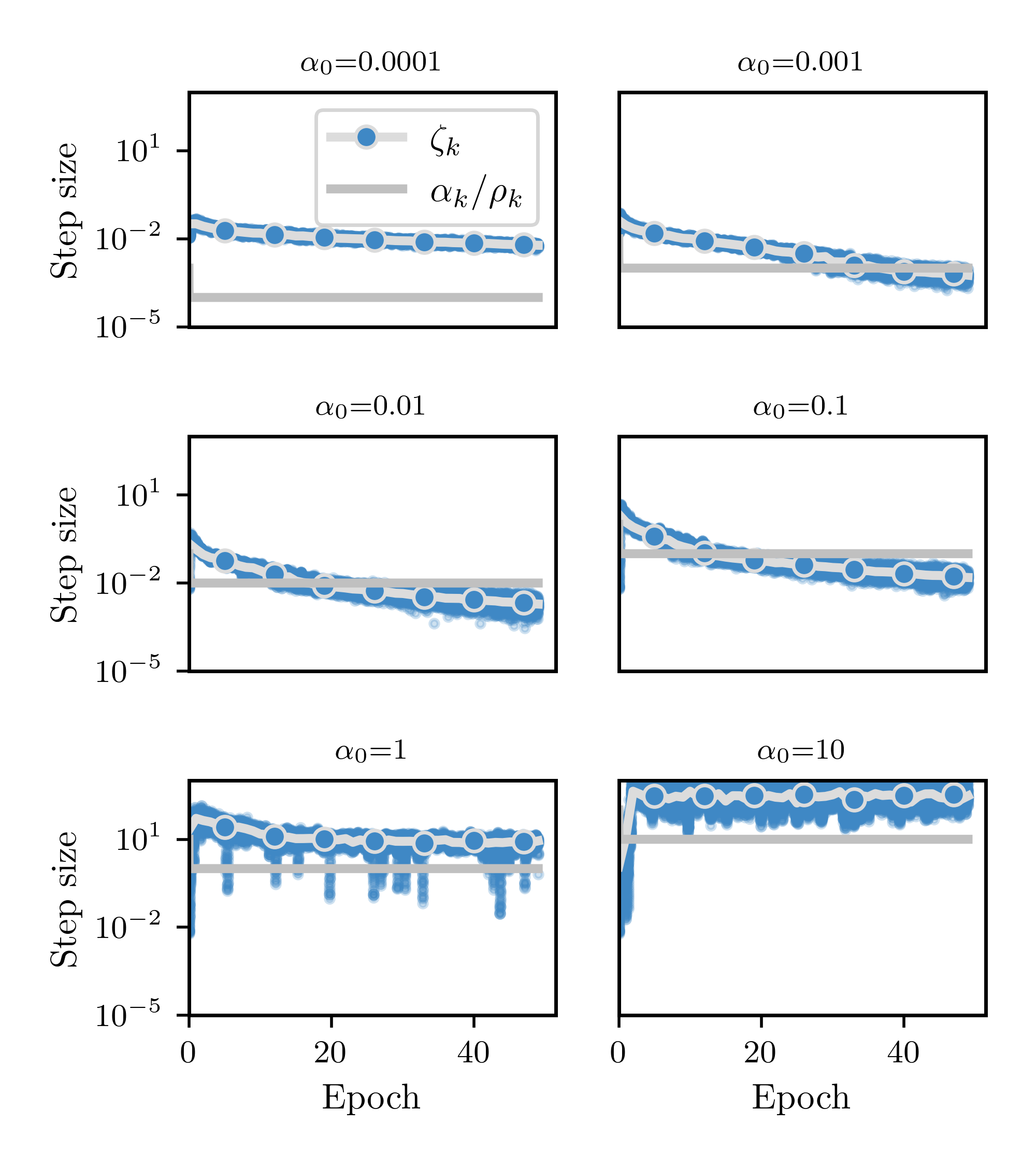}
        %}
        \caption{\MomoAdam{}}
        \label{fig:resnet20_step_sizes_momo_adam}
    \end{subfigure}
\caption{\texttt{ResNet20} for \texttt{CIFAR10}. Adaptive learning rate of \Momo{} (left) and \MomoAdam{} (right). The colored dots represent the term $\zeta_k$ in each iteration. The grey line represents the user-specified learning rate $\alpha_k/\rho_k$ (note that $\rho_k=1$ for \Momo{} and $\rho_k\approx 1$ except for the first few iterations in \MomoAdam{}). The minimum of the grey line and the dots is the adaptive learning rate $\tau_k=\min\{\frac{\alpha_k}{\rho_k}, \zeta_k\}$ in each iteration. The silver line with colored markers is the median over the values of $\zeta_k$ in each epoch.}
\label{fig:resnet20_step_sizes}
\end{figure}

\begin{table}
\caption{Validation score (with one standard deviation) for the best learning rate choice for each method among the ones displayed in \cref{sec:experiments}. Symbol ``$^*$" indicates usage of online lower bound, otherwise \texttt{MoMo}(\texttt{-Adam}) used with $\lb{k}=0$. Bold indicates the best method (for experiments with multiple seeds, we only mark in bold if the advantage is outside of standard deviation).}
\resizebox{\textwidth}{!}{
    \begin{tabular}{|c|c|c|c|c|}
        \hline
        & \texttt{MoMo} & \texttt{MoMo-Adam} & \texttt{SGD-M} & \texttt{Adam(W)} \\
        \hline
        \texttt{ResNet110} for \texttt{CIFAR100} & 65.21$~\pm 1.61$ & \textbf{66.71$~\pm 0.31$} & 60.28$~\pm 0.36$ & 64.5$~\pm 1.14$ \\
        \hline
        \texttt{ResNet20} for \texttt{CIFAR10} & 89.07$~\pm 0.2$ & \textbf{89.45$~\pm 0.17$} & 86.27$~\pm 0.67$ & 87.54$~\pm 0.26$ \\
        \hline
        \texttt{ViT} for \texttt{CIFAR10} & 85.43$~\pm 0.19$ & 85.81$~\pm 0.57$ & 83.39$~\pm 0.28$ & 86.02$~\pm 0.44$ \\
        \hline
        \texttt{VGG16} for \texttt{CIFAR10} & 90.64$~\pm 0.18$ & \textbf{90.9$~\pm 0.17$} & 89.81$~\pm 0.43$ & 89.95$~\pm 0.67$ \\
        \hline
        \texttt{MLP} for \texttt{MNIST} & \textbf{97.97$~\pm 0.08$} & 97.96$~\pm 0.12$ & 97.73$~\pm 0.12$ & 97.75$~\pm 0.06$ \\
        \hline
        \texttt{DLRM} for \texttt{Criteo}&  78.83 $~\pm 0.038$ & 78.98 $~\pm 0.036$  & 78.81 $~\pm 0.041$ & \textbf{79.05} $~\pm 0.014$ \\
        \hline
        \texttt{ResNet18 for Imagenet32} & \textbf{47.66}$^*$ & 47.54$^*$ & 47.38 & 46.98\\
        \hline
        \texttt{ResNet18 for Imagenet-1k} & \textbf{69.68} & N/A & 69.57 & N/A\\
        \hline
        \texttt{IWSLT14} (dp 0.1)& N/A & \textbf{33.63}$^*$ & N/A & 32.56 \\
        \hline
        \texttt{IWSLT14} (dp 0.3)& N/A & \textbf{35.34}$^*$ & N/A & 34.97 \\
        \hline
        \texttt{ViT} for \texttt{Imagenet-1k}& N/A & \textbf{73.83$~\pm 0.36$}& N/A & 72.83$~\pm 0.51$ \\
			\hline
    \end{tabular}}
\label{table:best_scores}
\end{table}
\begin{figure}[h]
    \centering
    \begin{subfigure}[b]{0.55\textwidth} 
        \centering
        \includegraphics[width=0.99\textwidth]{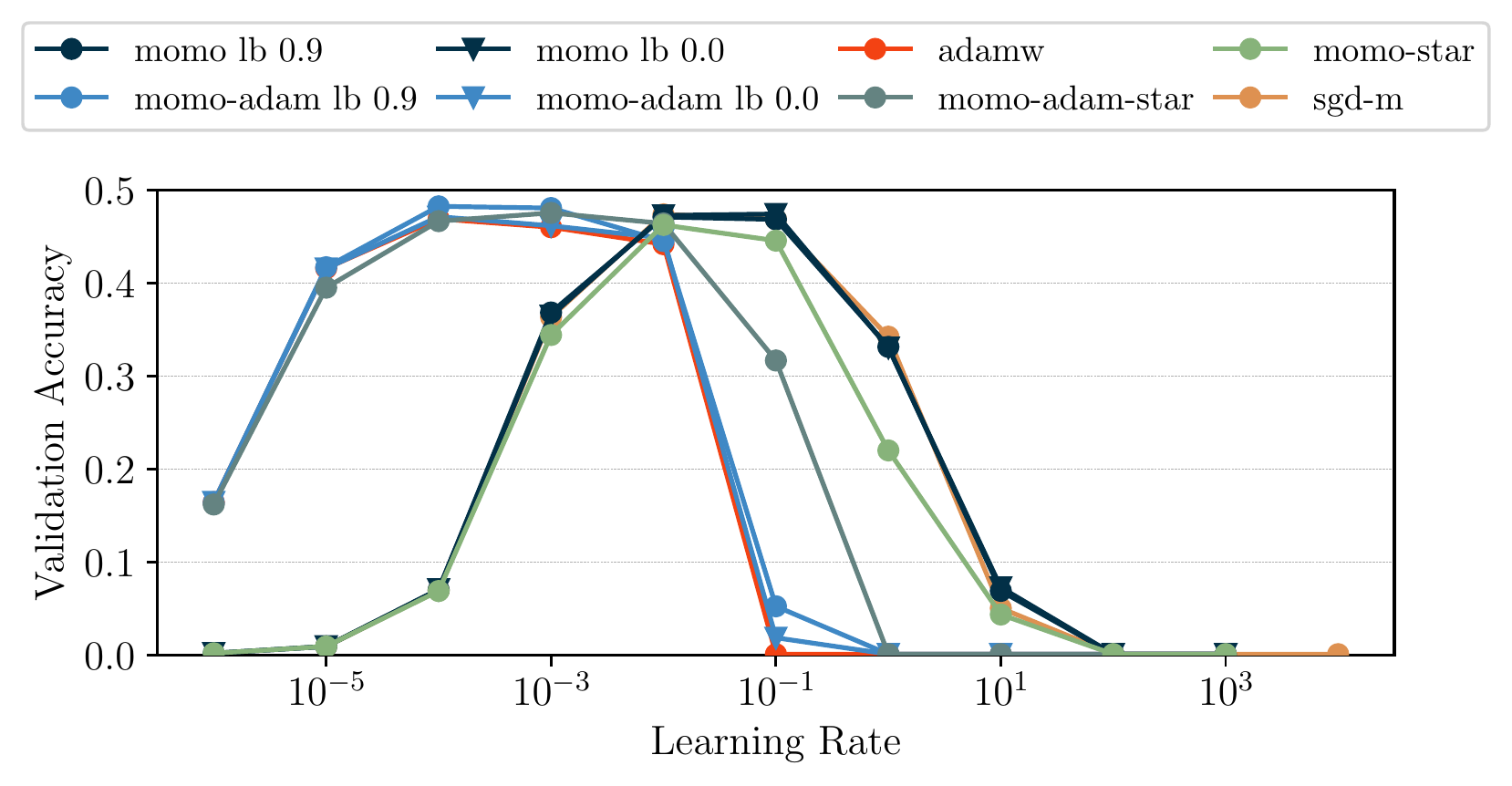}
        \caption{\texttt{ResNet18} for \texttt{Imagenet32}}
        \label{fig:sensitivity_imagenet_wd1e-4}
    \end{subfigure}
    % \begin{subfigure}[b]{0.42\textwidth} 
    %     \centering
    %     \includegraphics[width=0.99\textwidth]{./plots/translation/step_size_translation_dp01.png}
    %     \caption{Encoder-Decoder Transformer for \texttt{IWSLT14}}
    %     \label{fig:stepsize_translationdp01}
    % \end{subfigure}
    \begin{subfigure}[b]{0.42\textwidth}
        \centering
        \includegraphics[width=0.99\textwidth]{./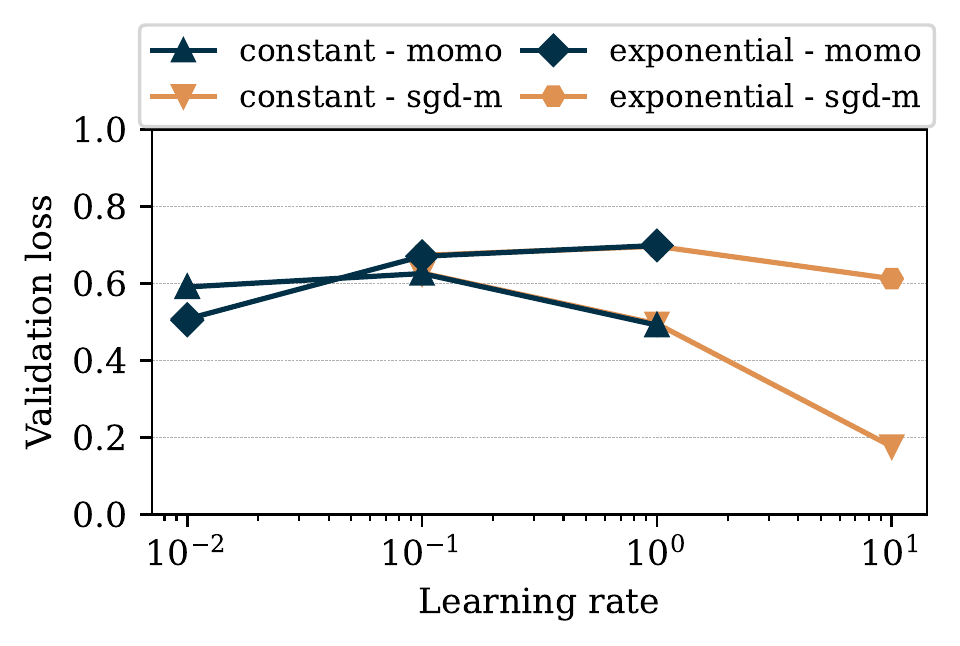}
        \caption{\texttt{ResNet18} for \texttt{Imagenet-1k}}
    \label{fig:sensitivity_imagenet_1k}
    \end{subfigure}
    \caption{Left: Validation accuracy of a \texttt{ResNet18} for \texttt{Imagenet32} with weight decay $\lambda=10^{-4}$. 
    Right: Validation accuracy of a \texttt{ResNet18} for \texttt{Imagenet-1k}, with standard exponential learning rate schedule (decay factor 10 at epochs 30 and 60) and constant learning rate schedule.}
    \label{fig:additional-imagenet}
\end{figure}
%
%%%%%%%%%%%%%%%%%%%%%%%%%%%%%%%
%
%%%%%%%%%%%%%%%%%%
\subsection{Experimental Setup of \cref{sec:stability-exps}}\label{sec:appendix-info-stability}
We set the momentum parameter $\beta=0.9$ for \Momo{} and \SGDM{}, and $(\beta_1,\beta_2)=(0.9,0.999)$ for \MomoAdam{} and \Adam{} respectively. We do not use weight decay, i.e.\ $\lambda=0$.

For \SGDM{} we set the dampening parameter (in \texttt{Pytorch}) equal to the momentum parameter $0.9$. Like this, \SGDM{} does an exponentially-weighted average of past gradients and hence is comparable to \Momo{} for identical learning rate and momentum. Setting $\texttt{dampening}=0.9$ is equivalent to running with $\texttt{dampening}=0$ and a ten times smaller learning rate. For all other hyperparameters we use the \texttt{Pytorch} default values for \Adam{} and \SGDM{} (unless explicitly stated otherwise).

%
%%%
\subsection{Models and Datasets} \label{sec:appendix-models-datasets}
%
%%%%%%%%%%%%%%%%%%%%%%%%%%%%%%%%%%%%%%%%%%%%%%%%%%%%%%%%%%%%%%%%%%%%%%%%%%%%
% NEW
\begin{description}[listparindent=0pt,leftmargin=1.0em,]
% CIFAR RESNET
\item[\texttt{ResNet} for \texttt{CIFAR}] \hfill \citep{He2016}~\\[.5em]
Used for \texttt{ResNet20} for \texttt{CIFAR10} and \texttt{ResNet110} for \texttt{CIFAR100}. We adapt the last layer output size to $\{10,100\}$ according to the used dataset. We run 50 epochs for \texttt{ResNet20} and $100$ epochs for \texttt{ResNet110}, both with batch size 128.

\vspace{1ex}
\begin{tabular}{@{}ll@{}}
% Dataset & \small \url{https://pytorch.org/vision/stable/generated/torchvision.datasets.CIFAR10.html}\\
Model & \small \url{https://github.com/akamaster/pytorch_resnet_cifar10/blob/master/resnet.py}
\end{tabular}

% CIFAR VGG
\item[\texttt{VGG16} for \texttt{CIFAR10}] \hfill \citep{Simonyan2015}~\\[.5em] 
A deep network with 16 convolutional layers. We run 50 epochs with batch size 128.

\vspace{1ex}
\begin{tabular}{@{}ll@{}}
Model & \small \url{https://github.com/chengyangfu/pytorch-vgg-cifar10/blob/master/vgg.py}
\end{tabular}

% CIFAR ViT
\item[\texttt{ViT} for \texttt{CIFAR10}] \hfill \citep{Dosovitskiy2021}~\\[.5em] 
A small vision transformer, based on the hyperparameter setting proposed in \url{github.com/kentaroy47/vision-transformers-cifar10}. In particular, we set the patch size to four. We run 200 epochs with batch size 512.

\vspace{1ex}
\begin{tabular}{@{}ll@{}}
Model & \small \url{https://github.com/lucidrains/vit-pytorch}
\end{tabular}

% Imagenet32 RESNET
\item[\texttt{ResNet18} for \texttt{Imagenet32}] \hfill \citep{He2016}~\\[.5em] 
\texttt{Imagenet32} is a downsampled version of \texttt{Imagenet-1k} to images of $32\times32$ pixels. We adapt the last layer output size to $1000$. We run 45 epochs with batch size 128.

\vspace{1ex}
\begin{tabular}{@{}ll@{}}
Model & \small \url{https://github.com/kuangliu/pytorch-cifar/blob/master/models/resnet.py}
\end{tabular}

% Imagenet1k RESNET
\item[\texttt{ResNet18} for \texttt{Imagenet-1k}] \hfill \citep{He2016}~\\[.5em] 
We use both a constant learning rate and a schedule that decays the learning rate by 0.1 every 30 epochs. We run 90 epochs. Note that for \SGDM{} the decaying schedule with initial learning rate of $0.1$ is considered state-of-the-art. As we set $\texttt{dampening}=0.9$, and this is equivalent to $\texttt{dampening}=0$ and a ten times smaller learning rate (see \cref{sec:appendix-info-stability}), in our plots the best score is displayed for initial learning rate of $1$ accordingly.

\vspace{1ex}
\begin{tabular}{@{}ll@{}}
Model & \small \url{pytorch.org/vision/main/models/generated/torchvision.models.resnet18.html}
\end{tabular}

% DLRM
\item[\texttt{DLRM} for \texttt{Criteo}] \hfill \citep{criteo-display-ad-challenge}~\\[.5em] 
\texttt{DLRM} is an industry-scale model with over 300 million parameters. the \texttt{Criteo} dataset contains approximately $46$ million training samples. We run 300k iterations with batch size 128.

\vspace{1ex}
\begin{tabular}{@{}ll@{}}
Dataset & \small \url{https://kaggle.com/c/criteo-display-ad-challenge}\\
Model & \small \url{https://github.com/facebookresearch/dlrm}
\end{tabular}

% IWSLT 14
\item[\texttt{IWSLT14}] \hfill \citep{Ott2019}~\\[.5em] 
We use a transformer with six encoder and decoder blocks from \texttt{fairseq}. The training loss is the cross-entropy loss with label smoothing of $0.1$. We use weight decay of $\lambda=10^{-4}$ (although we noticed that weight decay does not influence the performance of \MomoAdam{}), momentum parameters $(\beta_1,\beta_2)=(0.9,0.98)$. We train for $60$ epochs.

\vspace{1ex}
\begin{tabular}{@{}ll@{}}
Model & \small \url{https://github.com/facebookresearch/fairseq}
\end{tabular}

% UNET Butterflies
	\item[\texttt{UNet} for \texttt{Smithsonian Butterflies}] \hfill \cite{Ronneberger2015}~\\[.5em] 
	The \texttt{Smithsonian Butterflies} dataset contains 1,000 images of butterflies. We train a diffusion model, using a \texttt{UNet2D} architecture from the \texttt{Huggingface} library. For both constant learning-rate schedule as well as cosine decay we use a warmup period, where the learning rate is increased linearly over 500 steps from zero to the final value. For \MomoAdam{} we use no weight decay. For \Adam{} we tried both $\lambda \in \{0,0.01\}$ but did not observe major differences; we display the results for $\lambda=0.01$ (the default value).
	We train for $50$ epochs with batch size 16. The training script is adapted from \url{https://colab.research.google.com/github/huggingface/notebooks/blob/main/diffusers_doc/en/pytorch/basic_training.ipynb}.
	
	\vspace{1ex}
	\begin{tabular}{@{}ll@{}}
		Dataset & \small \url{https://huggingface.co/datasets/huggan/smithsonian_butterflies_subset}\\
		Model & \small \url{https://huggingface.co/docs/diffusers/main/en/api/models/unet2d}
	\end{tabular}

	% ViT Imagenet
	\item[\texttt{ViT} for \texttt{Imagenet-1k}] \hfill \cite{Dosovitskiy2021}~\\[.5em] 
	We train a vision transformer for image classification on the full \texttt{Imagenet-1k} dataset. We use the \texttt{timm} library for training and select the \texttt{vit\_tiny\_patch16\_224} model. 
	We use the settings reported in \cite{Defazio2023}; the only exception is that when increasing the learning rate $\alpha$, we decrease the weight decay $\lambda$ by the same factor, such that $\alpha\cdot\lambda = 10^{-4}$ for all runs.
	By standard practice, we use a warmup period of five epochs (starting at $10^{-5}$ with epoch-wise steps) to a base learning rate $\alpha_{\text{base}}$, followed by a cosine decay.
	We train for $200$ epochs with batch size 512.
	The loss function is the binary cross entropy loss with label smoothing of $0.1$ (also used in \cite{Defazio2023}).
	
	\vspace{1ex}
	\begin{tabular}{@{}ll@{}}
		Model & \small \href{https://github.com/huggingface/pytorch-image-models/blob/ef72c3cd470dd67836eebf95ec567199c890a6a2/timm/models/vision\_transformer.py#L1738}{\texttt{timm/models/vision\_transformer.py}}
	\end{tabular}
\end{description}
%%%%%%%%%%%%%%%%%%%%%%%%%%%%%%%%%%%%%%%%%%%%%%%%%%%%%%%%%%%%%%%%%%%%%%%%%

For each experiments, we list how long one training run approximately takes on the hardware we use. Unless specified otherwise, we train on a single NVIDIA A100 GPU. \texttt{ResNet110} for \texttt{CIFAR100} $90$ min, \texttt{ResNet20} for \texttt{CIFAR10} $30$ min, \texttt{VGG16} for \texttt{CIFAR10} $30$ min, \texttt{MLP} for \texttt{MNIST} $3$ min, 
\texttt{ViT} for \texttt{Imagenet-1k} $10$ h (on four NVIDIA A100),
\texttt{UNet} for \texttt{Smithsonian Butterflies} $30$ min,
\texttt{ResNet18} for \texttt{Imagenet32} 20 hours (on NVIDIA V100), Transformer for \texttt{IWSLT14} $3$ hours.
%
%%%%%%%%%%%%%%%%%%%%%%%%%%%%%%%%%%%%%%%%%%%%%%%%%%%%%%%%%%%%%%%%%%%%%%%%%%%%
%%%%%%%%%%%%%%%%%%%%%%%%%%%%%%%%%%%%%%%%%%%%%%%%%%%%%%%%%%%%%%%%%%%%%%%%%%%%
%%%%%%%%%%%%%%%%%%%%%%%%%%%%%%%%%%%%%%%%%%%%%%%%%%%%%%%%%%%%%%%%%%%%%%%%%
%\clearpage
\subsection{Additional Experiments}

We present additional comparisons in \cref{fig:extendend_results}, including \texttt{AdaBelief} \cite{Zhuang2020}, \texttt{AdaBound} \cite{Luo2019}, and \texttt{Lion}~\citep{Chen2023}. For \texttt{ResNet110} on \texttt{CIFAR100}, we also tried \SGDM{} with a learning-rate schedule that decays by a factor of $0.7$ every 30 epochs. This apparently does not yield improvements.

\begin{figure}[h]
	\centering
	\begin{subfigure}[b]{0.325\textwidth}  
		\includegraphics[width=0.99\textwidth]{./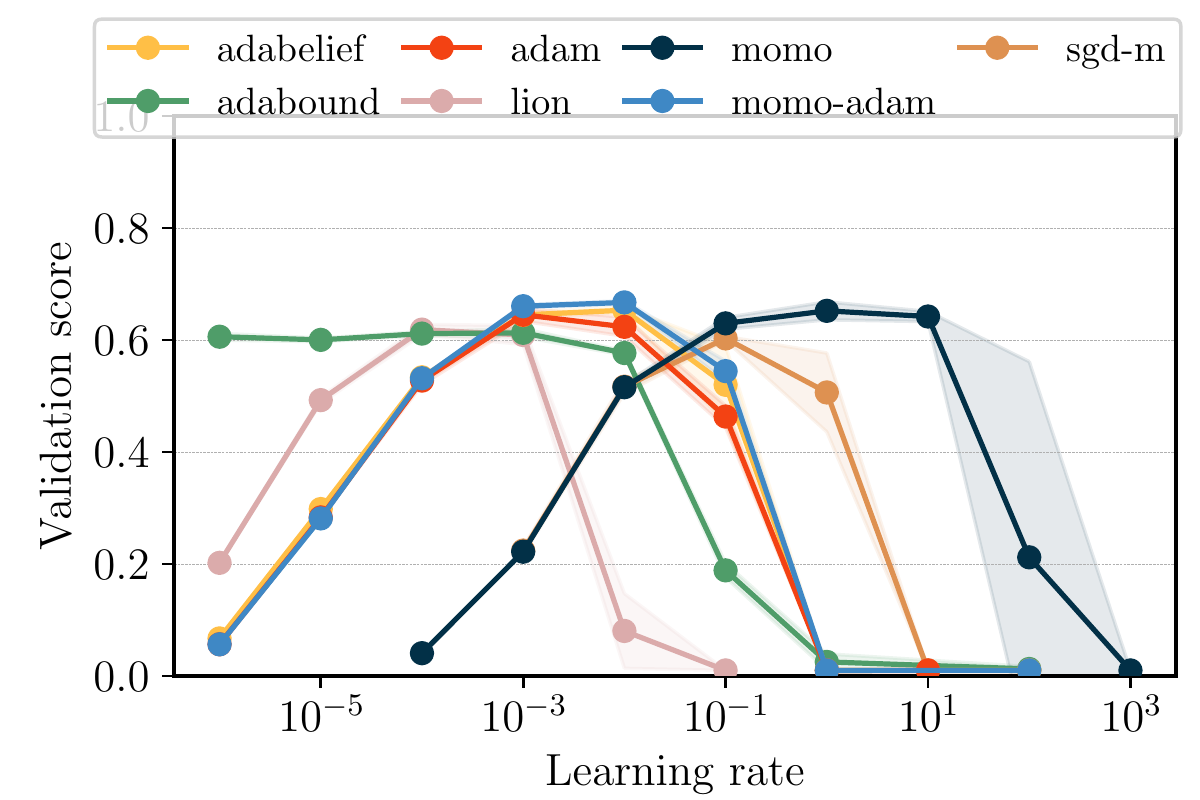}
		\caption{\texttt{ResNet110} for \texttt{CIFAR100}}
		\end{subfigure}
	\begin{subfigure}[b]{0.325\textwidth}  
		\includegraphics[width=0.99\textwidth]{./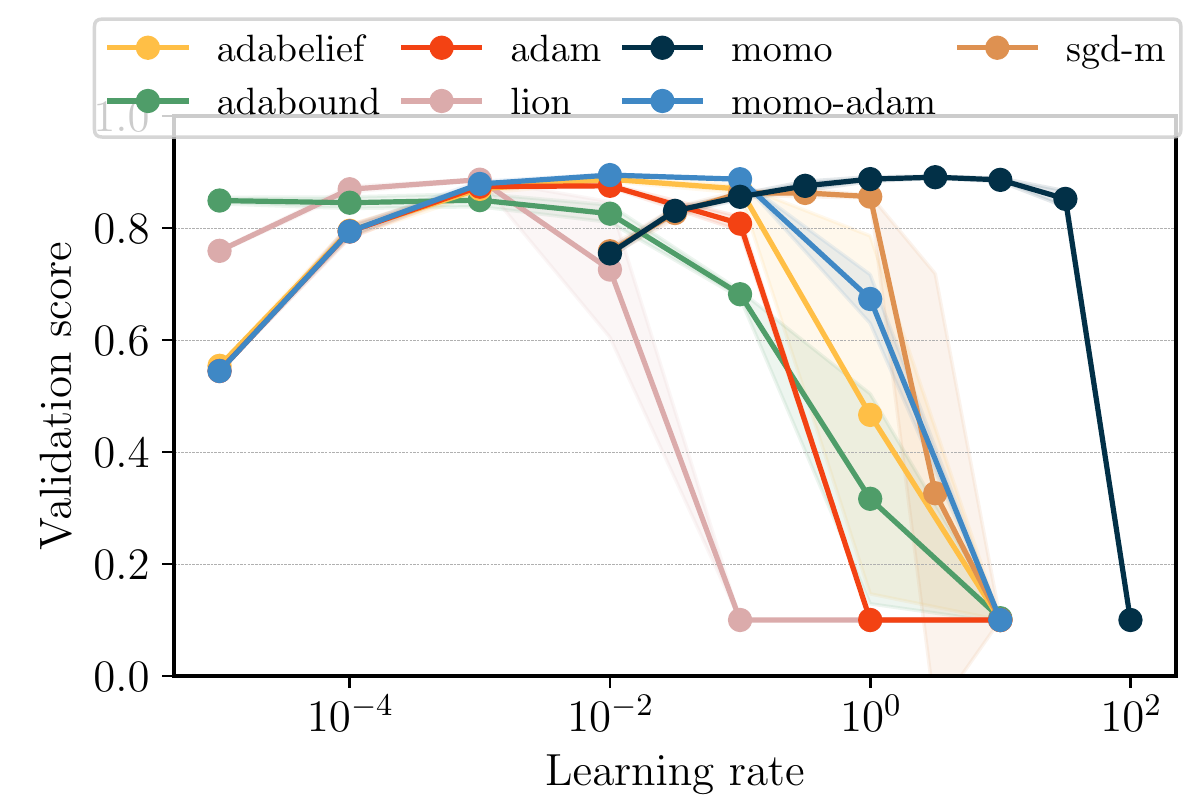}
		\caption{\texttt{ResNet20} for \texttt{CIFAR10}}
	\end{subfigure}
	\begin{subfigure}[b]{0.325\textwidth}  
		\includegraphics[width=0.99\textwidth]{./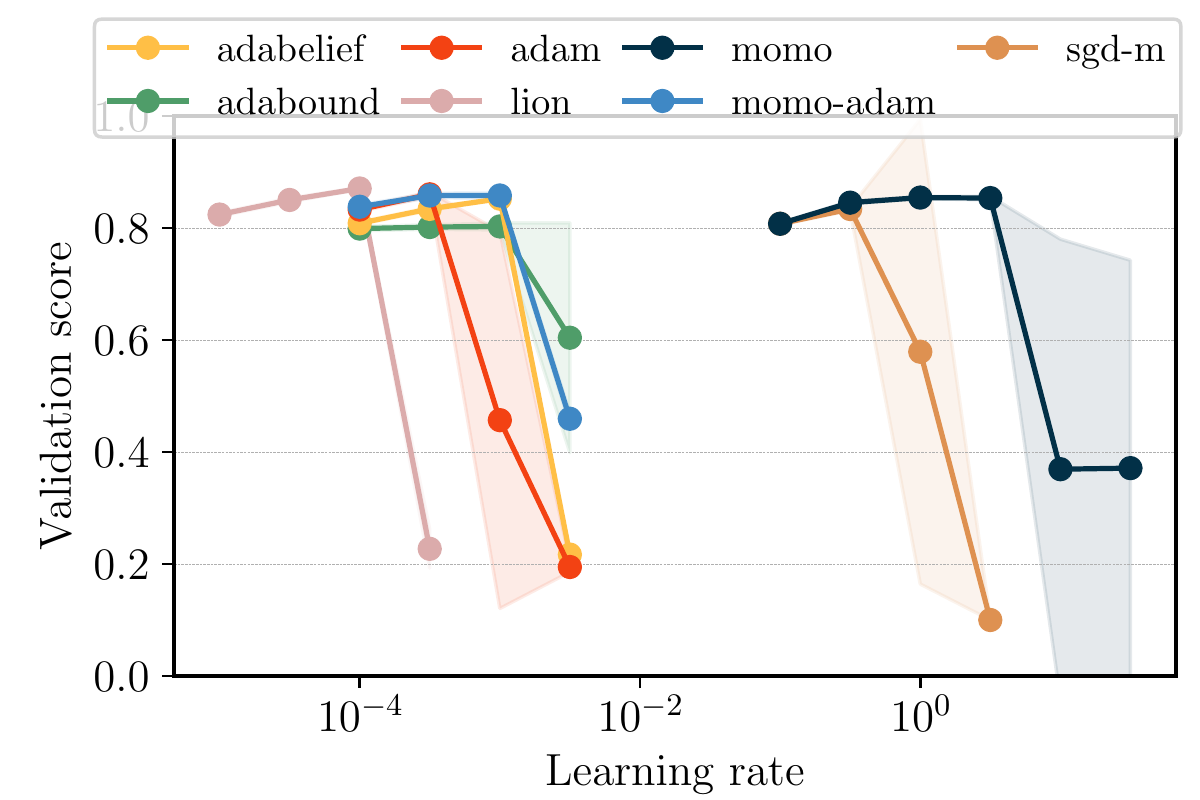}
		\caption{\texttt{ViT} for \texttt{CIFAR10}}
	\end{subfigure}
	\caption{Additional comparisons to \texttt{AdaBelief} and \texttt{AdaBound}. For \texttt{CIFAR100} (left plot), the dashed line of \SGDM{} displays result of using a learning-rate schedule that decays by a factor of $0.7$ every 30 epochs.}
	\label{fig:extendend_results}
\end{figure}

We present the results for the \texttt{UNet} for \texttt{Smithsonian Butterflies} experiment in \cref{fig:unet_smithsonian,fig:unet_smithsonian_samples}. We try a constant learning rate, as well as a cosine-decay schedule (both schedules with warmup). The cosine decay works better in general.
\begin{figure}[h]
	\centering
	\begin{subfigure}[b]{0.5\textwidth} 
		\centering
		\includegraphics[width=0.99\textwidth]{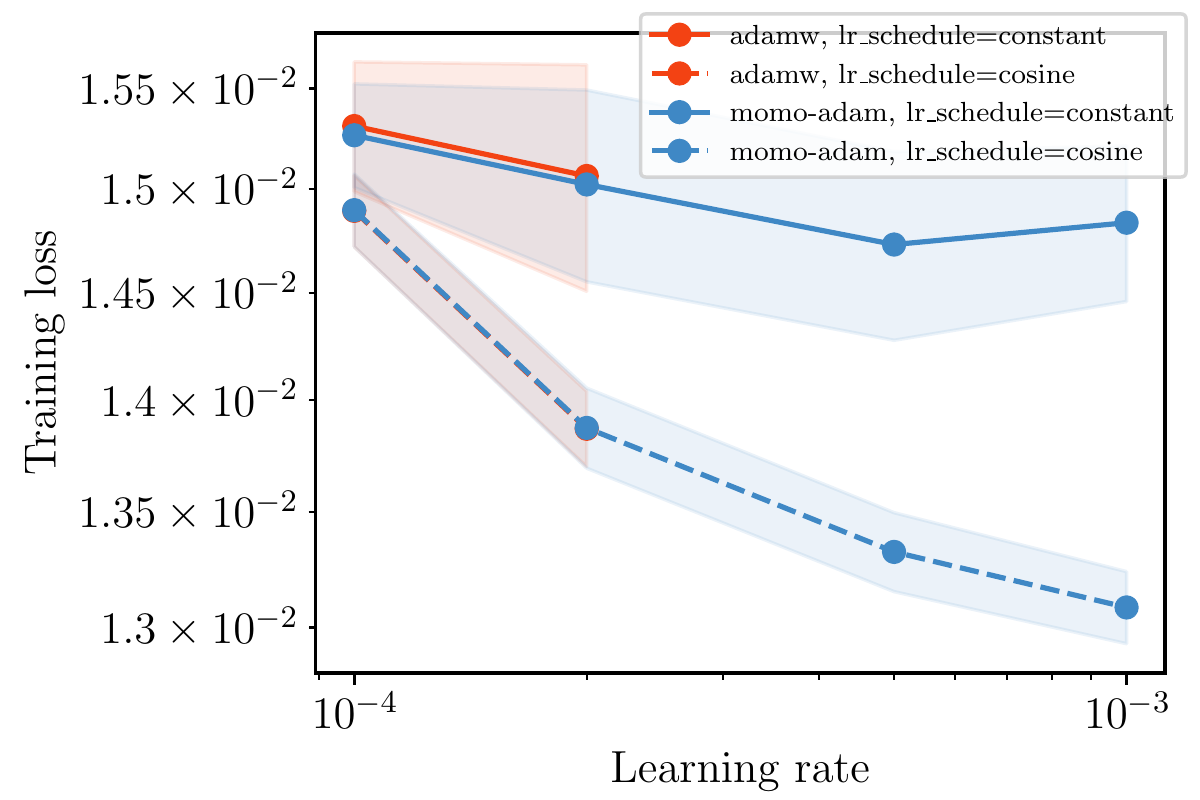}
		\label{momo:fig:unet_smithsonian_stability}
	\end{subfigure}
	\begin{subfigure}[b]{0.4\textwidth} 
		\centering
		\includegraphics[width=0.99\textwidth]{./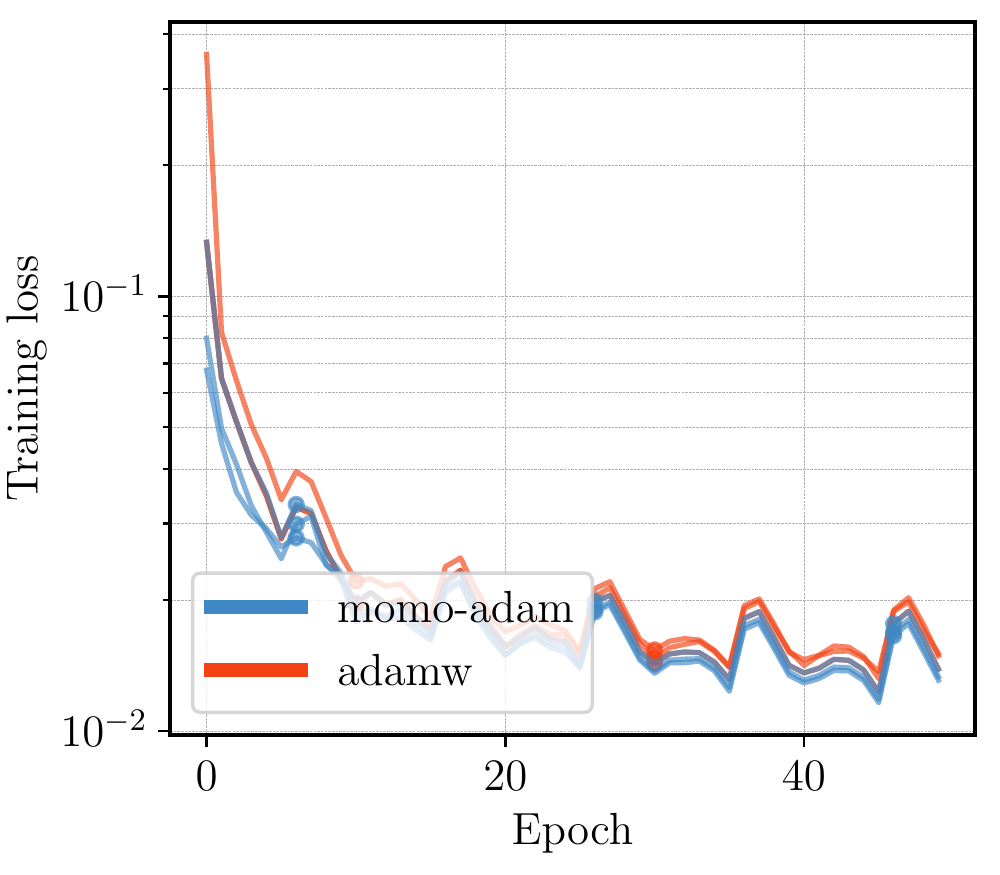}
		\label{fig:unet_smithsonian_training}
	\end{subfigure}
	\caption{\texttt{UNet} for \texttt{Smithsonian Butterflies} Left: Stability over learning rate, for constant and cosine decay schedule (more details on schedule settings in \cref{sec:appendix-models-datasets}). Values are averaged over three repetitions with different seeds, shaded area depicts one standard deviation.
	Missing points for \Adam{} mean that at least one of the three repetitions results in \texttt{NaN} loss. Right: Training loss curve for the best three settings (across all base learning rates and both schedules) for each method.}
	\label{fig:unet_smithsonian}
\end{figure}
\begin{figure}
	\centering
	\begin{subfigure}[b]{0.3\textwidth} 
		\centering
		\includegraphics[width=0.99\textwidth]{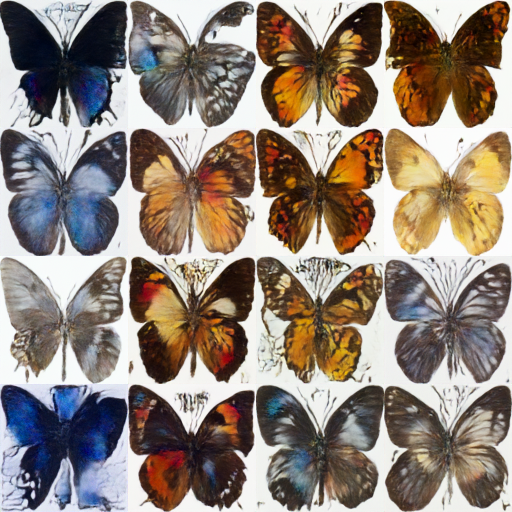}
		\caption{$\alpha_\text{base} = 0.0001$}
	\end{subfigure}
	\begin{subfigure}[b]{0.3\textwidth} 
		\centering
		\includegraphics[width=0.99\textwidth]{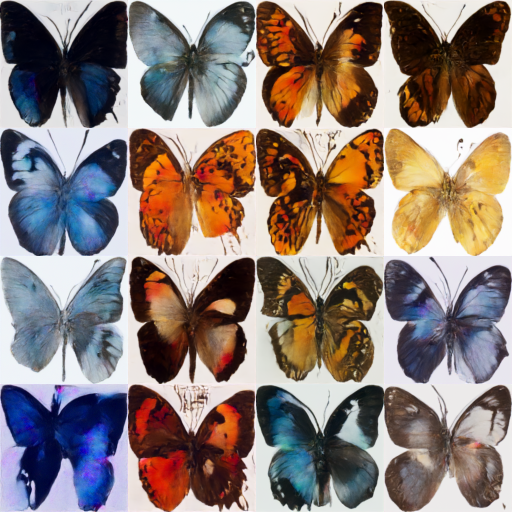}
		\caption{$\alpha_\text{base} = 0.0005$}
	\end{subfigure}
	\begin{subfigure}[b]{0.3\textwidth} 
		\centering
		\includegraphics[width=0.99\textwidth]{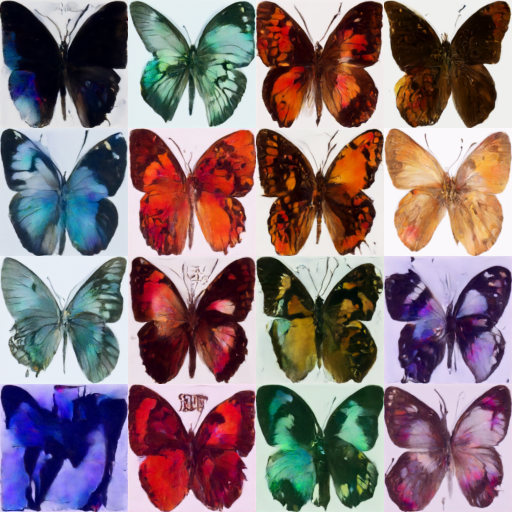}
		\caption{$\alpha_\text{base} = 0.001$}
	\end{subfigure}
	\caption{\texttt{UNet} for \texttt{Smithsonian Butterflies}: Generated images of the \texttt{UNet} diffusion model at the end of training with \MomoAdam{}. We display three different base learning rates for the cosine decay schedule (i.e.\ the displayed value of $\alpha_\text{base}$ corresponds to the $x$-axis in the left plot of \cref{fig:unet_smithsonian}). Note that when training with \Adam{}, the images in (a) look very similar, but for (b) and (c),  \Adam{} diverges and thus the model generates no useful images.}
	\label{fig:unet_smithsonian_samples}
\end{figure}
%%%%%%%%%%%%%%%%%%%%%%%%%%%%%%%%%%%%%%%%%%%%%%%%%%%%%%%%%%%%%%%%%%%%%%%%%
%%%%%%%%%%%%%%%%%%%%%%%%%%%%%%%%%%%%%%%%%%%%%%%%%%%%%%%%%%%%%%%%%%%%%%%%%%%%
%%%%%%%%%%%%%%%%%%%%%%%%%%%%%%%%%%%%%%%%%%%%%%%%%%%%%%%%%%%%%%%%%%%%%%%%%%%%
%%%%%%%%%%%%%%%%%%%%%%%%%%%%%%%%%%%%%%%%%%%%%%%%%%%%%%%%%%%%%%%%%%%%%%%%%%%%
%%%%%%%%%%%%%%%%%%%%%%%%%%%%%%%%%%%%%%%%%%%%%%%%%%%%%%%%%%%%%%%%%%%%%%%%%%%%
\clearpage
\subsection{Illustrative Example of Online Lower Bound Estimation}
\label{sec:appendix-simple-fstar-example}
We show how our online estimation of $\lb{k}$, derived in \cref{sec:lower} and \cref{lem:fstar}, work for a simple example. Consider a regression problem, with synthetic matrix $A\in\R^{200\times 10}$ and $b\in \R^{200}$. 
We solve the problem $\min_{x\in\R^{10}} \sum_{i=1}^{200} \frac12 \|a_i^\top x-b_i\|^2$, where $a_i$ are the rows of $A$.
The data is generated in a way such that there exists $\hat x$ with $b=A\hat x$ and hence the optimal value is $f^*=0$.

We now run \Momo{}(-\Adam{}) with lower bound estimate $\lb{k}=-10$ in all iterations, and  \Momo{}(-\Adam{})$^*$ with initialization $\lb{1}=-10$. Clearly, this is not a tight estimate of the optimal value $f^*$. From \cref{fig:fstar-synthetic-example-stability}, we see that online estimation of $\lb{k}$, used in \Momo{}(-\Adam{})$^*$, improves stability of the training compared to plain \Momo{}(-\Adam{}) where a constant value $\lb{k}=-10$ is used. From \cref{fig:fstar-synthetic-example-fstar}, we also see that the online values of $\lb{k}$ converge to $f^*=0$. 

% We remark that for this example, we replace the operation $(\cdot)_+$ in \cref{alg:fstar-reset,alg:fstar-update} with $\max\{\cdot,-10\}$, in order to not bias towards to zero simply due to resetting.

%
\begin{figure}
    \centering
    \begin{subfigure}[b]{0.55\textwidth}  
        \centering 
        \includegraphics[width=0.99\textwidth]{./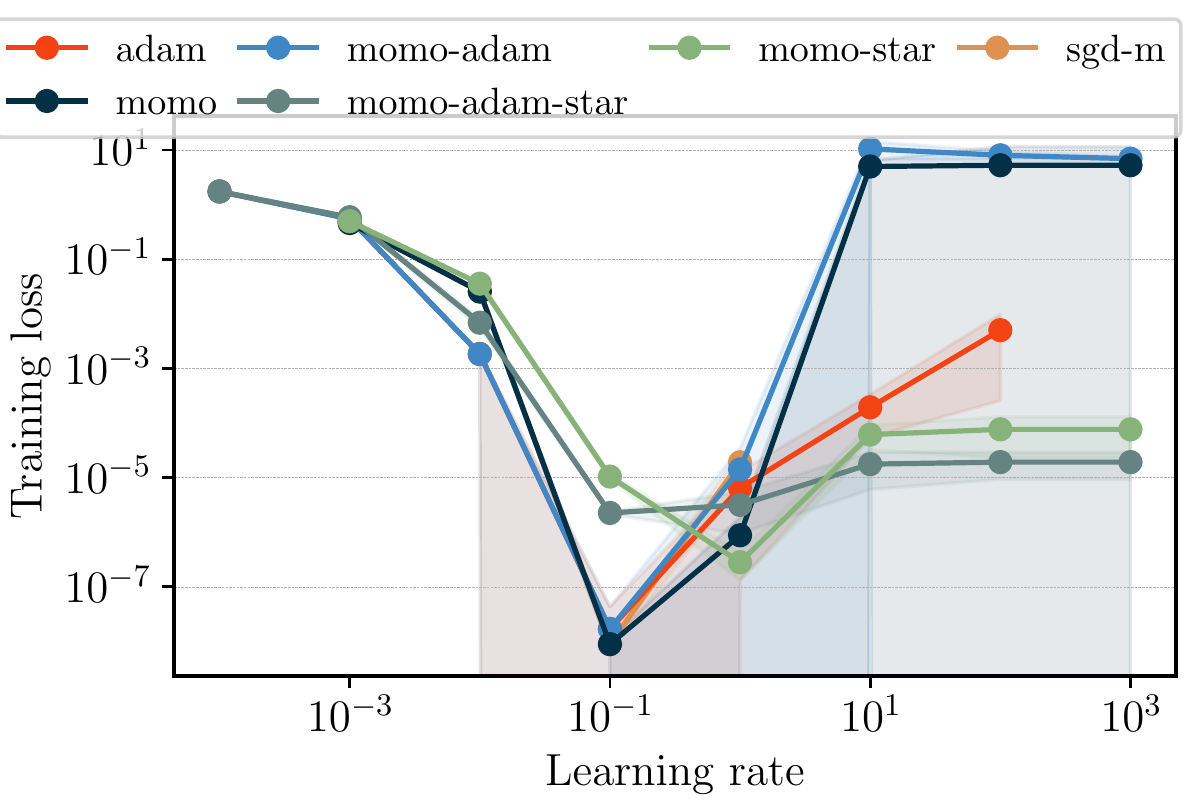}
        \caption{Training Loss}
        \label{fig:fstar-synthetic-example-stability}
    \end{subfigure}
    \begin{subfigure}[b]{0.44\textwidth}  
        \centering 
        \includegraphics[width=0.99\textwidth]{./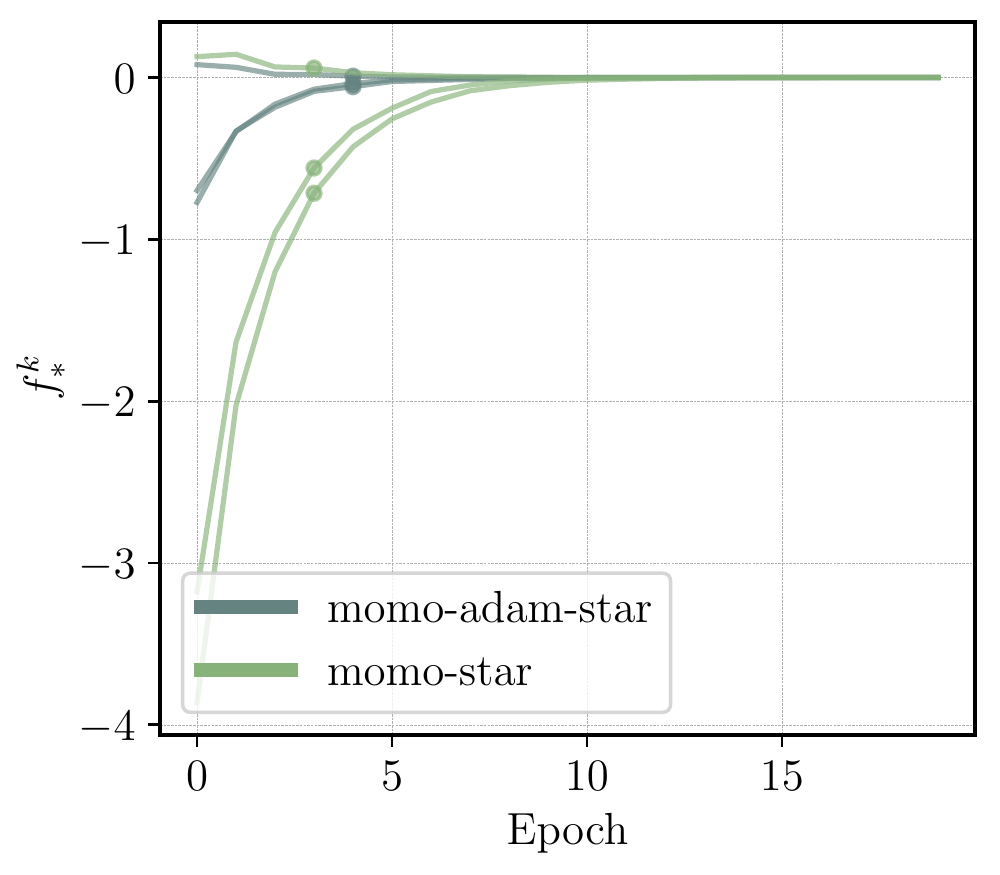}
        \caption{Online estimation of $f^*=0$}
        \label{fig:fstar-synthetic-example-fstar}
    \end{subfigure}
\caption{Illustrative example of online lower bound estimation. For all \Momo{} methods, we initialize $\lb{1}=-10$. Left: Training loss for varying (constant) learning rate $\alpha_0$. Right: Value of $\lb{k}$ over training, one line corresponds to one choice of $\alpha_0$. We plot per method the four  values of $\alpha_0$ that lead to smallest training loss.}
\label{fig:fstar-synthetic-example}
\end{figure}

%%%%%%%%%%%%%%%%%%%%%%%%%%%%%%%%%%%%%%%%%%%%%%%%%%%%%%%%%%
%%%%%%%%%%%%%%%%%%%%%%%%%%%%%%%%%%%%%%%%%%%%%%%%%%%%%%%%%%

%%%%%%%%%%%%%%%%%%%%%%%%%%%%%%%%%%%%%%%%%%%%%%%%%%%%%%%%%%%%%%%%%%%%%%%%%%%%%%%
%%%%%%%%%%%%%%%%%%%%%%%%%%%%%%%%%%%%%%%%%%%%%%%%%%%%%%%%%%%%%%%%%%%%%%%%%%%%%%%

\end{document}